\newtheorem{theorem}{Theorem}
\newtheorem{proposition}{Proposition}
\newtheorem{lemma}{Lemma}
\newtheorem{definition}{Definition}
\title{The Implicit Bias of Minima Stability in\\Multivariate Shallow ReLU Networks}
\newcommand{\rb}[1]{\left(#1\right)} 
\newcommand{\bb}[1]{\left[#1\right]} 
\newcommand{\cb}[1]{\left\{#1\right\}} 
\newcommand{\norm}[1]{\left\Vert#1\right\Vert}
\newcommand{\snorm}[2]{\left\Vert{#1}\right\Vert_{\mathcal{R},{#2}}}
\newcommand{\abs}[1]{\left\vert#1\right\vert}
\newcommand{\vect}[1]{\boldsymbol{#1}} 
\newcommand{\ie}{\textit{i.e.}, }
\newcommand{\eg}{\textit{e.g.}, }
\newcommand{\loss}{ {\mathcal{L} } }
\newcommand{\stochasticLoss}{ \hat{\mathcal{L}} }
\newcommand{\vectorization}{\mathrm{vec}} 
\newcommand{\Rad}{\mathcal{R}}
\newcommand{\weightVec}[1]{ \boldsymbol{w}^{(#1)} }
\newcommand{\xx}{ {\boldsymbol{x} } }
\newcommand{\qq}{ {\boldsymbol{q} } }
\newcommand{\vv}{ {\boldsymbol{v} } }
\newcommand{\uu}{ {\boldsymbol{u} } }
\newcommand{\hh}{ {\boldsymbol{h} } }
\newcommand{\zeroVec}{ {\boldsymbol{0} } }
\newcommand{\oneVec}{ {\boldsymbol{1} } }
\newcommand{\params}{ {\boldsymbol{\theta} } }
\newcommand{\bPhi}{ {\boldsymbol{\Phi} } }
\newcommand{\Identity}{ \boldsymbol{I} }
\newcommand{\WW}{ {\boldsymbol{W} } }
\newcommand{\ww}{ {\boldsymbol{w} } }
\newcommand{\bbb}{ {\boldsymbol{b} } }
\newcommand{\bone}{ {\mathbbm{1}} }
\newcommand{\prob}{ \mathbb{P} }
\newcommand{\Var}{ \mathbb{V}\mathrm{ar} }
\newcommand{\E}{\mathbb{E}}
\newcommand{\CovMat}{\boldsymbol{\Sigma}}
\newcommand{\R}{ {\mathbb{R} } }
\newcommand{\Sb}{ {\mathbb{S} } }
\newcommand{\domain}{ {\mathcal{F}_k } }
\newcommand{\batch}{ {\mathfrak{B} } }
\newcommand{\ball}{ {\mathcal{B} } } 
\newcommand{\stableSet}{\mathcal{T}_{\text{loc}}^{\text{s}}}
\newcommand{\dif}{ \mathrm{d} }
\author{Mor Shpigel Nacson$^{*}$ \& Rotem Mulayoff\thanks{Indicates equal contribution. Correspondence: \texttt{\{mor.shpigel,rotem.mulayof\}@gmail.com}.}\\%
Electrical \& Computer Engineering, Technion%
\And%
Greg Ongie\\%
Mathematical and Statistical Sciences\\Marquette University%
\vspace{-2cm}\And
Tomer Michaeli \& Daniel Soudry\\
Electrical \& Computer Engineering, Technion}
\begin{document}

\maketitle

\begin{abstract}
	We study the type of solutions to which stochastic gradient descent converges when used to train a single hidden-layer multivariate ReLU network with the quadratic loss. Our results are based on a dynamical stability analysis. In the univariate case, it was shown that linearly stable minima correspond to network functions (predictors), 
 	whose second derivative has a bounded weighted $L^1$ norm. Notably, the bound gets smaller as the step size increases, implying that training with a large step size leads to `smoother' predictors. Here we generalize this result to the multivariate case, showing that a similar result applies to the Laplacian of the predictor. 
	We demonstrate the tightness of our bound on the MNIST dataset, and show that it accurately captures the behavior of the solutions as a function of the step size. Additionally, we prove a depth separation result on the approximation power of ReLU networks corresponding to stable minima of the loss. Specifically, although shallow ReLU networks are universal approximators, we prove that \emph{stable} shallow networks are not. Namely, there is a function that cannot be well-approximated by stable single hidden-layer ReLU networks trained with a non-vanishing step size. This is while the same function can be realized as a stable two hidden-layer ReLU network. Finally, we prove that if a function is sufficiently smooth (in a Sobolev sense) then it can be approximated arbitrarily well using 
	shallow ReLU networks that correspond to stable solutions of gradient descent.
\end{abstract}

\section{Introduction}\label{Sec:Introduction}
Neural networks (NNs) have been demonstrating phenomenal performance in a wide array of fields, from computer vision and speech processing to medical sciences. Modern networks are typically taken to be highly overparameterized. In such setting, the training loss usually has multiple global minima, which correspond to models that perfectly fit the training data. Some of those models are clearly sub-optimal in terms of generalization. Yet, the training process seems to consistently avoid those bad global minima, and somehow steer the  model towards global minima that generalize well. A long line of works attributed this behavior to ``implicit biases'' of the training algorithms, \eg \citep{zhang2017understanding, gunasekar2017implicit, soudry2018implicit, arora2019implicit}.

Recently, it has been recognized that a dominant factor affecting the implicit bias of gradient descent (GD) and stochastic gradient descent (SGD), is associated with dynamical stability. Roughly speaking, the dynamical stability of a minimum point refers to the ability of the optimizer to stably converge to that point. Particular research efforts have been devoted to understanding \emph{linear stability}, namely the dynamical stability of the optimizer's linearized dynamics around the minimum \citep{wu2018sgd, nar2018step, mulayoffNeurips, ma2021linear}. For GD and SGD, it is well known that a minimum is linearly stable if the loss terrain is sufficiently flat w.r.t.\@ the step size $\eta$. 

Concretely, a necessary condition for a minimum to be linearly stable for GD and SGD is that the top eigenvalue of the Hessian at that minimum point be smaller than $2/\eta$ (see Sec.~\ref{Sec: Background minima stability}). Although this condition only characterizes the linearized dynamics, it has been empirically shown to hold in real-world neural-network training \citep{cohen2020gradient, gilmer2021loss}. The linear stability condition turns out to have a strong effect on the nature of the network that is obtained upon convergence, both in terms of the end-to-end predictor function \citep{mulayoffNeurips}, and in terms of the way this function is implemented by the network \citep{mulayoff2020unique}.

\citet{mulayoffNeurips} studied how linear stability affects a single hidden-layer univariate ReLU network, when trained with the quadratic loss. They showed that in this setting, stable solutions of SGD with step size $\eta$ correspond to functions $ f $ satisfying 
\begin{equation}\label{eq:UnivariateIntro}
    \int_{\R} \left|f''(x)\right|  g(x) \dif x \leq \frac{1}{\eta} - \frac{1}{2},
\end{equation}
where $f$ denotes the network input-output function, and $g$ is a weight function that depends only on the training data. This result implies that for univariate shallow ReLU networks, SGD is biased towards `smooth' solutions\footnote{In a slight abuse of terms, in this paper we say a function is `smooth' if some weighted $ L^1 $ norm of its second derivative is bounded.}. Moreover, the larger the step size $\eta$, the smoother the solution becomes.

\begin{figure}[t]
\centering
\begin{subfigure}{0.32\textwidth}
    \includegraphics[width=\textwidth]{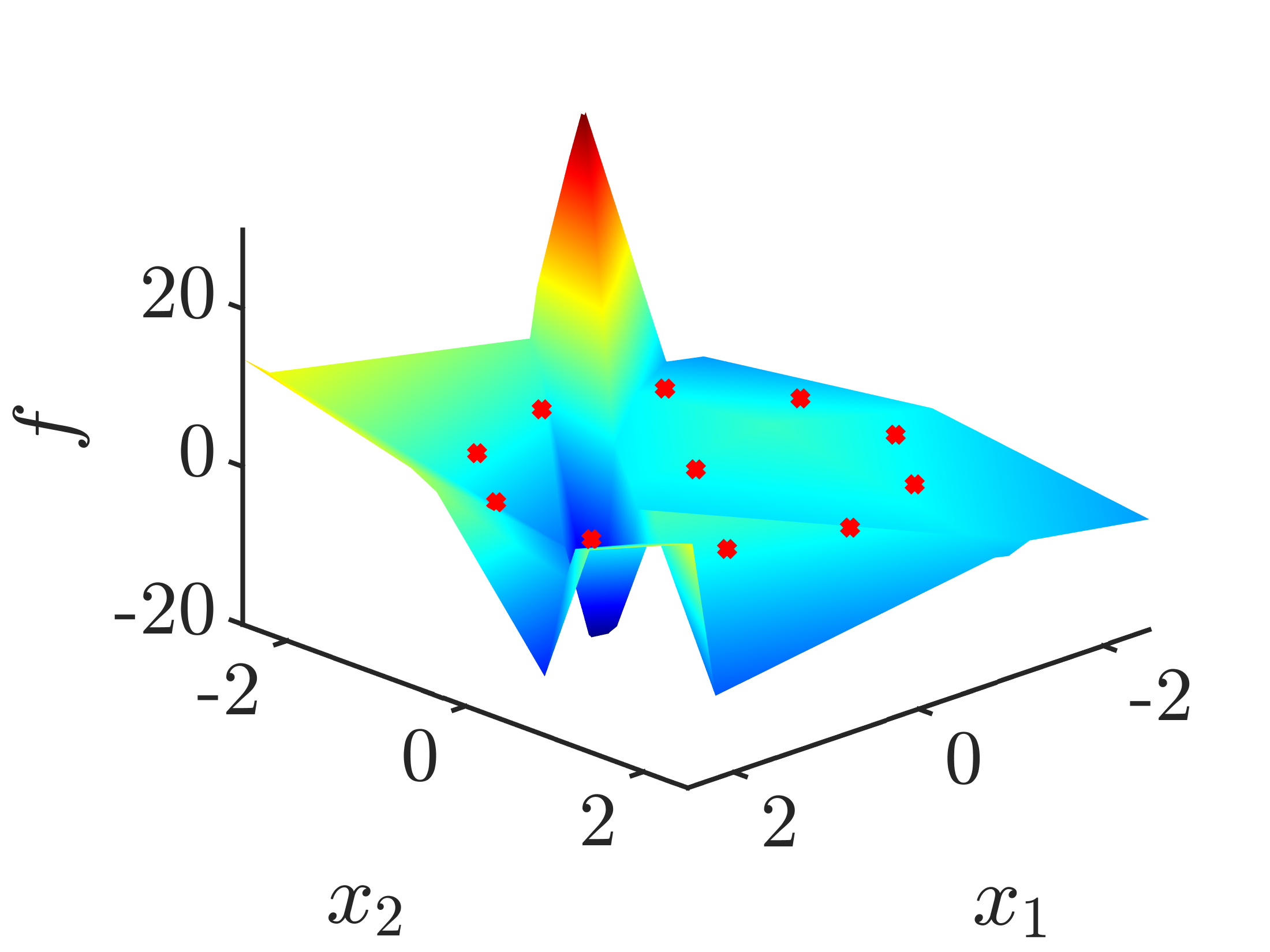}
    \caption{Small step size ($\eta = 0.001$)}
\end{subfigure}
\begin{subfigure}{0.32\textwidth}
    \includegraphics[width=\linewidth]{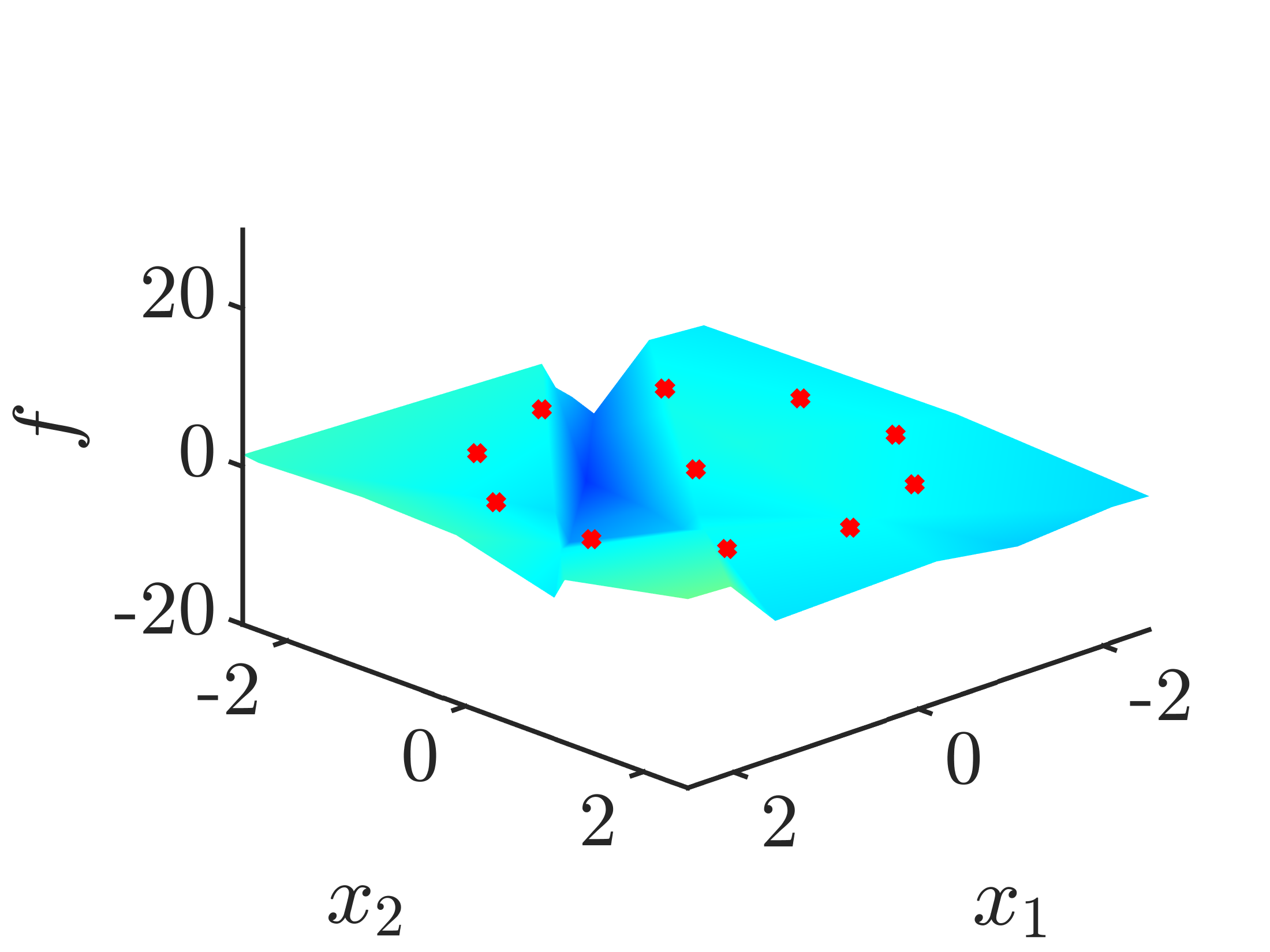}
    \caption{Medium step size ($\eta = 0.01$)}
\end{subfigure}
\begin{subfigure}{0.32\textwidth}
    \includegraphics[width=\linewidth]{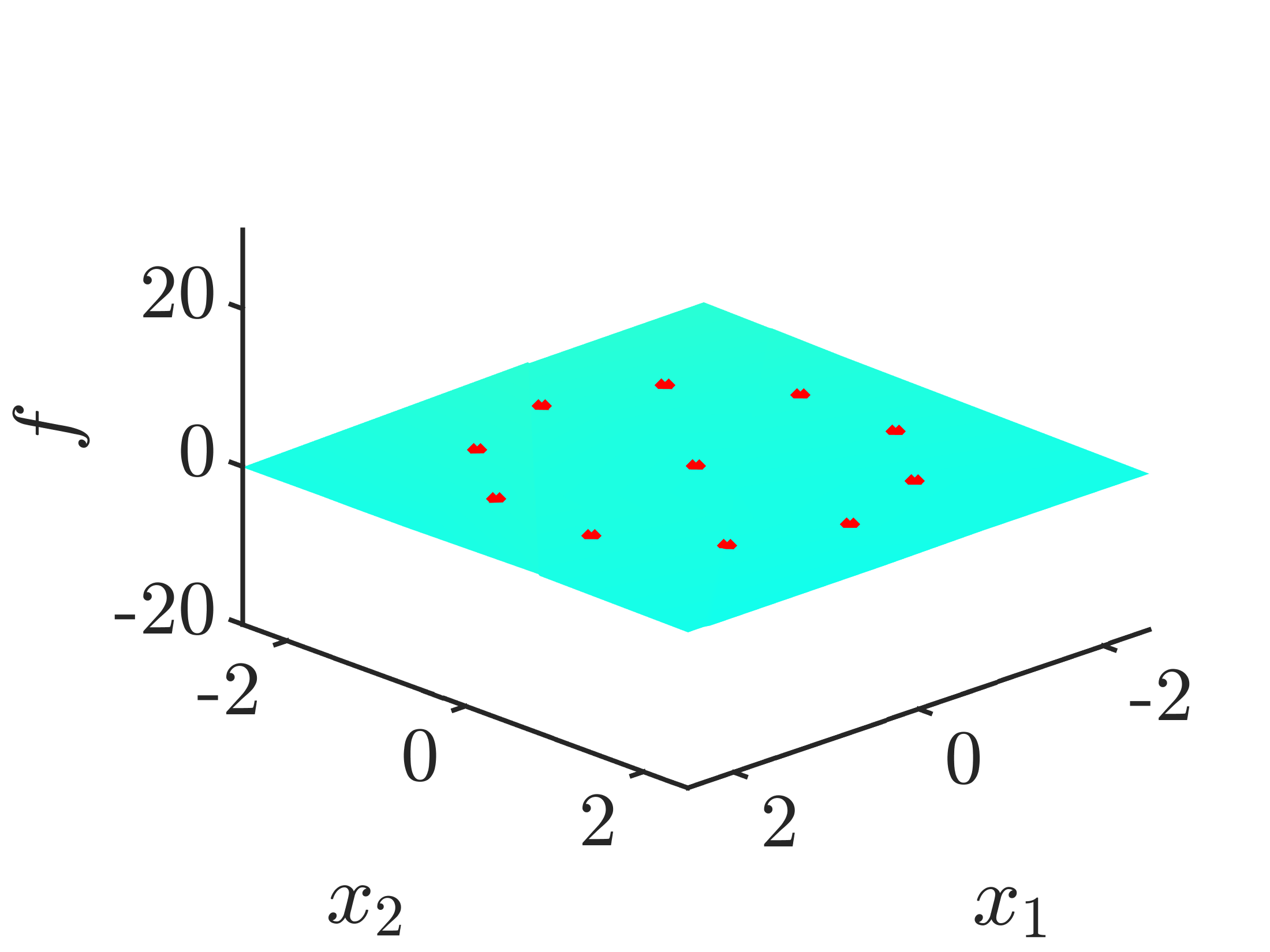}
    \caption{Large step size ($\eta = 0.1$)}
\end{subfigure}
\caption{\textbf{Larger step size leads to smoother prediction function.} We train a single hidden-layer ReLU network on a regression task with two-dimensional data, depicted by red points. The different panels show the predictor function $f$ obtained when training with different step sizes.}
\label{Fig: ilustration of step size inducing smoothness}
\end{figure}

In this paper, we study the stable solutions of single hidden-layer ReLU networks with multidimensional inputs, trained using SGD and the quadratic loss. Particularly, in Sec.~\ref{Sec:Stability} we generalize the result of \citet{mulayoffNeurips} to the multivariate setting. As it turns out, the natural extension of~\eqref{eq:UnivariateIntro} involves the Radon transform of the Laplacian of the predictor function, $ \Delta f $  (see Thm.~\ref{Thm: Properties of twice differentiable stable solutions}). 
However, we show this result can also be interpreted in primal space as
\begin{equation}\label{Eq: intro primal space characterization}
    \int_{\R^d}| \Delta f(\xx)|\rho(\xx)\dif \xx \le \frac{1}{\eta} -\frac{1}{2},
\end{equation}
where $\rho$ is some weighting function. Thus, stable solutions of SGD in the multivariate case also correspond to smooth predictors (\ie  functions whose Laplacian has a small weighted $L^1$ norm). The larger the step size, the smoother the function becomes. Figure~\ref{Fig: ilustration of step size inducing smoothness} illustrates this phenomenon. 

Additionally, we study the approximation power of single hidden-layer ReLU networks corresponding to stable minima. It is well known that shallow ReLU networks can approximate any continuous function over a compact set \citep{pinkus1999approximation}. However this does not imply that SGD can stably converge to such approximations. If there exist functions whose approximations are all unstable, then this property may be of limited practical interest. 
In Sec.~\ref{Sec:Depth separation} we prove that every convergent sequence of stable networks has a limit function that also satisfies the stability condition (Thm.~\ref{Thm: Properties of twice differentiable stable solutions}). Building on this, we prove a depth separation result. Specifically, we show that there exists a function that does not satisfy the stability condition for any positive step size. Namely, it cannot be stably approximated by a single hidden-layer ReLU network trained with a non-vanishing step size. Yet, the same function can be realized as a two hidden-layer ReLU network corresponding to a stable minimum. Moreover, in Sec.~\ref{Sec:Approximations} we show that if a function is sufficiently smooth (Sobolev) then it can be approximated arbitrarily well using single hidden-layer ReLU networks that correspond to stable solutions of GD.

Finally, in Sec.~\ref{Sec:Examples}~and~\ref{Sec:Experiments} we demonstrate our results. Particularly, we illustrate how our stable minima characterization (Thm.~\ref{Thm: Properties of twice differentiable stable solutions}) can be used to predict certain properties of the solution. For example, for certain isotropic data (\eg Gaussian), we show that a large step size tends to increase the biases of all neurons. We also demonstrate on the MNIST dataset the tightness of our stability bound, and that it predicts well the dependence of the stability and generalization performance on the step size.

\section{Background: Minima stability of SGD}\label{Sec: Background minima stability}
In this section we give a brief survey on minima stability. Let us consider the problem of minimizing an empirical loss using SGD. We are interested in the typical regime of overparameterized models. In this setting, there exist multiple global minimizers of the loss. Yet, SGD cannot stably converge to any minimum. The stability of a minimum is associated with the dynamics of SGD in its vicinity. Specifically, a minimum is said to be stable if once SGD arrives near the minimum, it stays in its vicinity. If SGD repels from the minimum, then we say that it is unstable.

Formally, let $\ell_j:\R^d\to\R$ be differentiable almost everywhere for all $j\in\bb{n}$. Here we consider a loss function $\mathcal{L}$ and its stochastic analogue,
\begin{equation}
	\loss(\params)=\frac{1}{n}\sum_{j=1}^n \ell_j(\params) \ \  \text{  and  } \ \   \stochasticLoss_t(\params) = \frac{1}{B}\sum_{j \in \batch_t } \ell_j(\params),
\end{equation}
where $\batch_t$ is a batch of size $B$ sampled at iteration $t$. We assume that the batches $ \{ \batch _t \} $ are drawn uniformly from the dataset, independently across iterations. SGD's update rule is given by
\begin{equation}\label{eq:UpdateRule}
	\params_{t+1} = \params_t - \eta \nabla \stochasticLoss_t(\params_t),
\end{equation}
where $ \eta $ is the step size. Analyzing the full dynamics of this system is intractable in most cases. Therefore, several works studied the behavior of this system near minima using linearized dynamics \citep{wu2018sgd,ma2021linear,nar2018step,mulayoffNeurips}, which is a common practice for characterizing the stability of nonlinear systems.
\begin{definition}[Linear stability]\label{def:LinearStability}
	Let $ \params^* $ be a twice differentiable minimum of $ \loss $. Consider the linearized stochastic dynamical system
	\begin{equation}\label{eq:LinearStabilityDef}
		\smash[t]{\params_{t+1} = \params_t - \eta \left(\nabla \stochasticLoss_t(\params^*) +  \nabla^2 \stochasticLoss_t(\params^*)  (\params_t - \params^*) \right)}.
	\end{equation}
	Then $ \params^* $ is $ \varepsilon$ linearly stable if  for any $\params_0$ in the $ \varepsilon$-ball $\ball_{\varepsilon}(\params^*) $, we have $ \smash[b]{ \limsup\limits_{t \to \infty} \E[ \Vert  \params_t - \params^* \Vert ] \leq \varepsilon }$.
\end{definition}
Namely, a minimum is $\varepsilon$ linearly stable if once $\params_t$ enters an $\varepsilon$-ball around the minimum, it ends up at a distance no greater than $ \varepsilon $ from it in expectation. Under mild conditions, any stable minimum of the nonlinear system is also linearly stable \cite[p.~268]{vidyasagar2002nonlinear}. We have the following condition.
\begin{lemma}[{Necessary condition for linear stability \citep[Lemma~1]{mulayoffNeurips}}]\label{lemma:StabilityNecessaryCondition}
	Consider SGD with step size $\eta$, where batches are drawn uniformly from the training set, independently across iterations. If $ \params^* $ is an $\varepsilon$ linearly stable minimum of $ \loss $, then
	\begin{equation} \label{eq:StabilityNecessaryCondition}
		\smash[u]{\lambda_{\max}\left(\nabla^2 \loss(\params^*)\right) \leq \frac{2}{\eta}.}
	\end{equation}
\end{lemma}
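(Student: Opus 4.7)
The plan is to analyze the residual dynamics in expectation: use the vanishing gradient and batch independence to collapse the stochastic recursion into a deterministic one, and then exhibit a divergent eigendirection whenever $\lambda_{\max}(\HH) > 2/\eta$, where $\HH := \nabla^2 \loss(\params^*)$.

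First, I would set $\rr_t = \params_t - \params^*$ so that the linearized update in the statement becomes $\rr_{t+1} = (\Identity - \eta \nabla^2 \stochasticLoss_t(\params^*))\rr_t - \eta \nabla \stochasticLoss_t(\params^*)$. Two facts remove the randomness in expectation: (i) uniform batch sampling together with $\nabla \loss(\params^*) = \zeroVec$ gives $\E[\nabla \stochasticLoss_t(\params^*)] = \zeroVec$ and $\E[\nabla^2 \stochasticLoss_t(\params^*)] = \HH$; (ii) $\batch_t$ is drawn independently of $\batch_0,\dots,\batch_{t-1}$, hence independently of the random vector $\rr_t$, so expectations factor: $\E[\nabla^2 \stochasticLoss_t(\params^*)\rr_t] = \HH\,\E[\rr_t]$. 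Taking total expectations on both sides and iterating then yields $\E[\rr_t] = (\Identity - \eta \HH)^t \rr_0$.

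Next, I would argue the contrapositive. Suppose $\lambda_{\max}(\HH) > 2/\eta$, so that $|1 - \eta\lambda_{\max}(\HH)| > 1$. Let $\vv$ be a unit top eigenvector of $\HH$ and initialize $\params_0 = \params^* + \delta \vv$ for some $0 < \delta < \varepsilon$; then $\params_0 \in \ball_{\varepsilon}(\params^*)$ and the initial residual $\rr_0 = \delta\vv$ is deterministic. Plugging into the mean recursion gives $\E[\rr_t] = \delta(1-\eta\lambda_{\max}(\HH))^t \vv$, whose norm $\delta|1-\eta\lambda_{\max}(\HH)|^t$ diverges as $t\to\infty$. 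By Jensen's inequality, $\E[\norm{\rr_t}] \geq \norm{\E[\rr_t]}$, hence $\limsup_{t\to\infty}\E[\norm{\params_t-\params^*}] = \infty$, contradicting the hypothesis that $\params^*$ is $\varepsilon$ linearly stable.

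The main (and essentially only) nontrivial point is the factorization $\E[\nabla^2 \stochasticLoss_t(\params^*)\rr_t] = \HH\,\E[\rr_t]$ used in step (ii). It relies crucially on the assumption that mini-batches are sampled independently across iterations; without this, the stochastic Hessian at step $t$ could be correlated with the accumulated residual $\rr_t$ and the mean recursion would fail to close. Once this decoupling is in place, the remainder of the argument is elementary linear algebra and a single application of Jensen's inequality.
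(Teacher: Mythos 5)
Your proposal is correct and follows essentially the same route as the paper's own argument (sketched in App.~\ref{App:FAQ1}): use batch independence to show the mean iterates follow the deterministic GD recursion $\E[\rr_{t+1}] = (\Identity - \eta\HH)\E[\rr_t]$, apply Jensen's inequality to relate $\norm{\E[\rr_t]}$ to $\E[\norm{\rr_t}]$, and conclude via the standard GD eigenvalue condition. Your version merely makes the final step explicit by exhibiting the divergent top-eigenvector direction, which the paper leaves as a ``well-known fact.''
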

This condition states that stable minima of SGD are flat w.r.t.\@ the step size. Although this result was proved for the linearized dynamics, it was observed to hold also in practice, where the full nonlinear dynamics apply. Particularly, much empirical evidence on real-world neural-network training \citep{cohen2020gradient,gilmer2021loss} points out that GD and SGD converge only to linearly stable minima, \ie minima satisfying \eqref{eq:StabilityNecessaryCondition}. More on dynamical stability and its interaction with common practices (\eg learning rate decay, absence of $ \varepsilon $ and $ B $ in Lemma~\ref{lemma:StabilityNecessaryCondition} result, \emph{etc.}) in App.~\ref{App:FAQ1}.

\section{Large step size biases to smooth functions}\label{Sec:Stability}
Consider the set of multivariate functions over $ \R^d $ that can be implemented by a single hidden-layer ReLU network with $k$ neurons,
\begin{equation} \label{Eq: ordinaryParameterization}
	\domain \triangleq \cb{f : \R^d \to \R \; \middle| \; f(\xx)=\sum_{i=1}^{k}w_{i}^{(2)}\sigma\left(\xx^{\top}\weightVec{1}_i+b_{i}^{(1)}\right)+b^{(2)}},
\end{equation}
where $\sigma(\cdot)$ denotes the ReLU activation function. Each $f\in\domain$ is a piecewise linear function with at most $k$ knots\footnote{A `knot' is a boundary between two pieces (\ie intersection between hyperplanes). See Fig.~\ref{Fig:illustration} for illustration.}. 
Given some training set $\cb{\xx_j,y_j}_{j=1}^n$, we are interested in functions that globally minimize the quadratic loss\footnote{We focus on MSE loss for simplicity, but the results can be extended to other loss functions, see App.~\protect\ref{App: General loss function}.}
\begin{equation}
    \smash[t]{\loss(f)\triangleq\frac{1}{2n}\sum_{j=1}^{n}\big(f(\xx_{j})-y_{j}\big)^{2}}.
\end{equation}
\begin{definition}[Solution]\label{Def:Solution}
    A function $f \in \domain$ is a `\emph{solution}' if $\mathcal{L}(f) = 0$, \ie $f(\xx_j) = y_j$ $\forall j\in[n]$.
\end{definition}
We focus on the overparameterized regime ($ kd>n $) in which there exist multiple solutions. We want to study the properties of solutions which correspond to stable minima of SGD. However, a key challenge is that any solution $f\in\domain$ typically has infinitely many different parameterizations. In other words, there are various parameter vectors
\begin{equation}
    \params \triangleq
    \begin{bmatrix}
        \vect{w}_{1}^{(1)\top} & \cdots & \vect{w}_{k}^{(1)\top} &
        \vect{b}^{(1)\top}&
        \vect{w}^{(2)\top}&
        b_{2}
    \end{bmatrix}^\top\in\mathbb{R}^{(d+2)k+1},
\end{equation}
that can implement the same function $f$. Different parameterizations correspond to different minima, which may have different Hessian eigenvalues. Therefore, for a given step size $\eta$, some parameterizations of $f$ may be stable while others may not. Thus, to determine whether SGD can stably converge to a solution $f$, we need to check whether there exists \emph{some} stable minimum $\params$, which corresponds to a parametrization of $f$. We therefore use the following definition.
\begin{definition}[Stable solution]\label{Def:Stable solution}
	A solution $ f \in \domain $ is said to be stable for step size $ \eta $ if there exists a minimum $ \params^* $ of the loss that corresponds to $ f $, where $ \params^* $ is linearly stable for SGD with step size $ \eta $.
\end{definition}
The next theorem characterizes stable solutions using the Radon transform $\Rad$ (see App.~\ref{App:Radon}) and the Laplace operator $\Delta$. Particularly, we use the inverse of the dual Radon transform, $(\Rad^*)^{-1}$, and interpret $\Delta f$ in the weak sense, \ie as a sum of weighted Dirac delta functions (see App.~\ref{App:Distributional framework}). 

\begin{theorem}[Properties of 
stable solutions] \label{Thm: Properties of twice differentiable stable solutions}
        Let $ f $ be a linearly stable solution for SGD with step size~$ \eta $. Assume that the knots of $ f $ do not coincide with any training point. Then
	\begin{equation}\label{Eq: minima property result}
		\snorm{f}{g} \leq \frac{1}{\eta} - \frac{1}{2},
	\end{equation}
	where $\snorm{\cdot}{g}$ is the stability norm, defined as 
	\begin{equation}\label{eq: stability norm def}
	    \snorm{f}{g} \triangleq \int_{\Sb^{d-1}\times \R} \abs{\bb{(\Rad^*)^{-1}\Delta f}(\vv,b)} g(\vv,b) \dif s(\vv) \dif b,
	\end{equation}
	and $g(\vv,b) \triangleq \min\big(\tilde{g}(\vv,b) ,\tilde{g}(-\vv,-b) \big)$ is a non-negative weighting function, with $\tilde{g}$ given by
    \begin{align}\label{Eq: g definition}
        \tilde{g}(\vv,b) & \triangleq \prob^2(\vect{X}^{\top}\vv>b)
        \E\left[\vect X^{\top}\vect{v}-b \middle| \vect{X}^{\top}\vv>b \right] \sqrt{\norm{\E\left[\vect{X} \middle| \vect{X}^{\top}\vv>b \right]}^2 +1}.
	\end{align}
	Here $ \vect{X} $ is a random vector drawn from the dataset's distribution (\ie sampled uniformly from $\{\xx_j\}$).
\end{theorem}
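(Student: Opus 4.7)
The plan is to start from the necessary linear-stability condition of Lemma~\ref{lemma:StabilityNecessaryCondition}, namely $\lambda_{\max}(\nabla^2\loss(\params^*)) \leq 2/\eta$, and lower-bound the left-hand side by the stability norm $\snorm{f}{g}$ (up to the $-1/2$ correction). At a zero-loss minimum the Hessian collapses to the Gauss--Newton form $\nabla^2\loss(\params^*) = \tfrac{1}{n}\sum_{j=1}^n \nabla_\params f(\xx_j)\nabla_\params f(\xx_j)^\top$, so for any unit test direction $\uu$,
\begin{equation*}
\frac{2}{\eta} \geq \lambda_{\max}\bigl(\nabla^2\loss(\params^*)\bigr) \geq \frac{1}{n}\sum_{j=1}^n \bigl(\uu^\top\nabla_\params f(\xx_j)\bigr)^2.
\end{equation*}
The goal is to design $\uu$, together with a particular scaling of the parameters, so that the right-hand side reproduces $\snorm{f}{g}$.

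First, I would exploit the positive-homogeneity of ReLU to canonically reparametrize each neuron as $a_i\sigma(\vv_i^\top\xx - c_i)$ with $\vv_i\in\Sb^{d-1}$, $a_i = w_i^{(2)}\|\weightVec{1}_i\|$, and $c_i = -b_i^{(1)}/\|\weightVec{1}_i\|$. Because the scaling $(\weightVec{1}_i, b_i^{(1)}, w_i^{(2)}) \mapsto (\alpha\weightVec{1}_i, \alpha b_i^{(1)}, w_i^{(2)}/\alpha)$ leaves $f$ invariant but changes the Hessian spectrum, one must pick the ``balanced'' per-neuron scaling that minimizes $\lambda_{\max}$; only this choice yields a tight stability bound. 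The assumption that no training point lies on a knot guarantees that each $\xx_j$ sits in an open activation cell, so $f$ is differentiable at the data and the Gauss--Newton reduction is valid. In the balanced parametrization the parameter gradient of $f$ at $\xx_j$ splits per neuron into a $\vv_i$-block equal to $a_i\,\Ind\{\vv_i^\top\xx_j > c_i\}\xx_j$, a $c_i$-block equal to $-a_i\,\Ind\{\vv_i^\top\xx_j > c_i\}$, and an $a_i$-block equal to $\sigma(\vv_i^\top\xx_j - c_i)$.

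Next, I would build $\uu$ as a block-wise sum over neurons. Within the $i$-th block I would place a unit vector aligned with $(\E[\vect X\mid\vect X^\top\vv_i>c_i], -1)/\sqrt{\|\E[\vect X\mid\vect X^\top\vv_i>c_i]\|^2+1}$ in the $(\vv_i, c_i)$ coordinates, multiplied by a sign $s_i = \operatorname{sign}(a_i)$. For this choice, $\uu^\top\nabla_\params f(\xx_j)$ vanishes off the active set of neuron $i$, and applying Jensen $\tfrac{1}{n}\sum_j Y_j^2 \geq (\tfrac{1}{n}\sum_j Y_j)^2$ together with the block normalization produces a sum in which the conditional mean $\E[\vect X^\top\vv_i - c_i\mid\vect X^\top\vv_i>c_i]$, the probability $\prob(\vect X^\top\vv_i>c_i)$, and the factor $\sqrt{\|\E[\vect X\mid\vect X^\top\vv_i>c_i]\|^2+1}$ combine to give precisely $|a_i|\,\tilde g(\vv_i,c_i)$ as in~\eqref{Eq: g definition}. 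The residual sign freedom $\vv\mapsto-\vv$, $c\mapsto -c$, reflecting that one can equivalently probe the opposite half-space through bias reversal, then replaces $\tilde g$ by its symmetrized version $g(\vv,b) = \min(\tilde g(\vv,b),\tilde g(-\vv,-b))$. Since any $f\in\domain$ with knots off the data satisfies $(\Rad^*)^{-1}\Delta f = \sum_i a_i\,\delta_{(\vv_i,c_i)}$ in the distributional sense, the resulting sum $\sum_i |a_i|g(\vv_i,c_i)$ is exactly $\snorm{f}{g}$, while the sharper factor $\tfrac{1}{\eta}-\tfrac{1}{2}$ (rather than $\tfrac{2}{\eta}$) comes from the refinement of Lemma~\ref{lemma:StabilityNecessaryCondition} that incorporates the variance of the stochastic Hessian.

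The hardest step is assembling the per-neuron block of $\uu$ so that the quadratic form reproduces the weighting function $g$ and not a looser surrogate. The $\sqrt{\|\E[\vect X\mid\cdot]\|^2+1}$ factor must emerge as the norm of a combined weight-plus-bias perturbation of unit length, while both copies of $\prob(\vect X^\top\vv_i>c_i)$ must arise from a single Jensen step together with the normalization of $\|\uu\|$; making these constants match simultaneously, without picking up a stray $\sqrt{k}$ or uncontrollable cross-neuron interference, is delicate and likely forces a data-dependent choice of signs $s_i$ and of the per-neuron balanced scaling. A secondary subtlety is the transition from the smooth case (where $\Delta f$ is a classical function) to the piecewise-linear case $f\in\domain$ (where $\Delta f$ is a measure concentrated on the knot hyperplanes), which must be handled within the distributional framework of App.~\ref{App:Distributional framework}, including a careful justification that $(\Rad^*)^{-1}$ sends $\Delta f$ to a signed sum of Dirac masses on $\Sb^{d-1}\times\R$ compatible with the integral defining $\snorm{f}{g}$.
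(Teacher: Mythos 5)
Your high-level skeleton matches the paper's: combine the necessary condition $\lambda_{\max}(\nabla^2\loss(\params^*))\le 2/\eta$ of Lemma~\ref{lemma:StabilityNecessaryCondition} with a lower bound on $\lambda_{\max}$ in terms of $\snorm{f}{g}$, using the Gauss--Newton form of the Hessian at an interpolating minimum and the identification $(\Rad^*)^{-1}\Delta f=\sum_i a_i\delta_{(\vv_i,b_i)}$. However, there are two genuine gaps in how you try to close the argument.

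First, the test-vector construction. You probe the Hessian with a unit vector $\uu$ in \emph{parameter} space assembled from per-neuron blocks, and you yourself flag that the normalization of $\|\uu\|$ across $k$ neurons and the cross-neuron interference in the square of $\uu^\top\nabla_\params f(\xx_j)$ are unresolved; as stated, each block is forced to carry weight of order $1/\sqrt{k}$ and the cross terms can be negative, so the bound does not come out. The paper avoids both problems with one move you are missing: since $\nabla^2_\params\loss=\frac{1}{n}\bPhi\bPhi^\top$, it writes $\lambda_{\max}=\max_{\uu\in\Sb^{n-1}}\frac{1}{n}\|\bPhi\uu\|^2$ and tests with $\uu=\oneVec/\sqrt{n}$ in the $n$-dimensional \emph{data} space. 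Then $\frac{1}{n^2}\|\bPhi\oneVec\|^2=\frac{1}{n^2}\bigl\|\sum_j\nabla_\params f(\xx_j)\bigr\|^2$ splits exactly as a sum over parameter blocks (different neurons occupy disjoint coordinates, so there is no cross-neuron interference and no $1/k$ penalty), the output-bias block contributes exactly $1$, and the elementary inequality $\alpha^2+\beta^2\ge 2|\alpha\beta|$ applied per neuron turns the first-layer and second-layer contributions into the scale-invariant product $|w_i^{(2)}|\,\|\weightVec{1}_i\|\,\tilde g$, evaluated at the normalized knot parameters. This also disposes of your ``balanced scaling'' step: no reparametrization is chosen, because the AM--GM step makes the lower bound $\lambda_{\max}\ge 1+2\sum_i|a_i|\,g(\vv_i,b_i)\ge 1+2\snorm{f}{g}$ hold at \emph{every} parametrization of $f$, in particular at whichever one happens to be stable. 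Your version, which establishes a bound only at a preferred balanced parametrization, would leave open the possibility that the stable minimum is an unbalanced one with smaller sharpness, so the conclusion about $f$ would not follow.

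Second, the constant. You attribute the factor $\frac{1}{\eta}-\frac{1}{2}$ to ``a refinement of Lemma~\ref{lemma:StabilityNecessaryCondition} that incorporates the variance of the stochastic Hessian.'' No such refinement exists or is used; the lemma gives exactly $2/\eta$. The $-\frac{1}{2}$ comes from the output bias $b^{(2)}$, whose gradient is identically $1$ and therefore contributes an additive $1$ to the lower bound, giving $1+2\snorm{f}{g}\le \lambda_{\max}\le 2/\eta$, which rearranges to $\snorm{f}{g}\le \frac{1}{\eta}-\frac{1}{2}$. Your test direction omits the $b^{(2)}$ coordinate entirely, so your construction cannot produce this term, and the mechanism you invent in its place is not correct.
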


This theorem, whose proof is provided in App.~\ref{App:StableTheoremProof},  shows that the step size constrains the stability norm of the solution. Notably, the constraint becomes stricter as the step size increases. Before interpreting this result, let us note that although it depends only on the step size, other hyper-parameters (\eg batch size, initialization) may potentially improve the bound. Yet, as we discuss in App.~\ref{App:FAQ1}, the effect of other hyper-parameters seems secondary in practical settings. The implications of Thm.~\ref{Thm: Properties of twice differentiable stable solutions} can be understood in primal space and in Radon space. In the following, we discuss both interpretations and give examples.
 
\subsection{Primal space interpretation}\label{Sec:Primal space interpretation} 

Theorem \ref{Thm: Properties of twice differentiable stable solutions} is stated in Radon space, which may be difficult to interpret. However, in some cases it can also be interpreted in primal space, by deriving an alternative form for the stability norm~$\| \cdot \|_{\Rad,g}$. Specifically, in App.~\ref{App:Derivation of the Stability Norm Interpretation} we show that if $g$ is piecewise continuous and $\smash{L^1}$-integrable\footnote{Which is true, for example, when the training set is finite.}, then for all~$ f \in \domain $ and  $\rho=\Rad^{-1}g$ we have\footnote{This integral should be interpreted in the distributional sense (see App.~\ref{App:Derivation of the Stability Norm Interpretation} for details).}
\begin{equation}\label{eq: interpretation formula}
    \snorm{f}{g} = \int_{\R^d}|\Delta f(\xx)| \rho(\xx) \dif\xx.
\end{equation}
In this presentation of the stability norm, $ \rho $ is not necessarily non-negative. Nevertheless, all its hyper-plane integrals are non-negative, since $\Rad \rho = g \ge0$. Thus, the stability norm can be interpreted as a non-negative linear combination of hyper-plane integrals of $\rho$ along the knots of $ f $. This is visualized in Fig.~\ref{Fig:illustration}. Hence, Thm.~\ref{Thm: Properties of twice differentiable stable solutions} combined with \eqref{eq: interpretation formula} implies that the larger the step size $\eta$, the smoother the solution becomes.

\begin{figure}[t]
\centering
\begin{subfigure}{0.32\linewidth}
    \includegraphics[width=\linewidth]{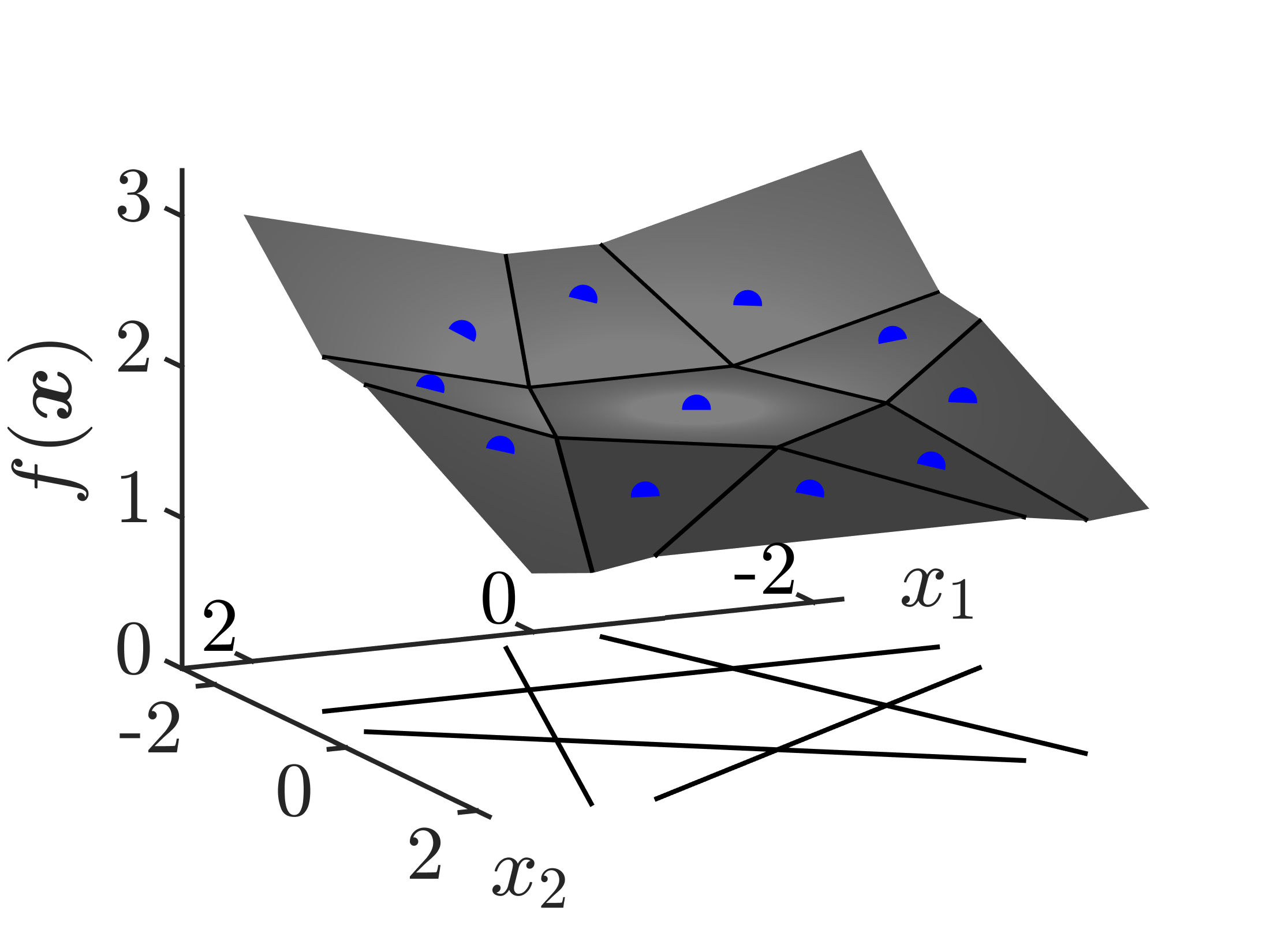}
    \caption{Surface of $ f $}
    \label{fig:illustration_1}
\end{subfigure}
\begin{subfigure}{0.32\linewidth}
    \includegraphics[width=\linewidth]{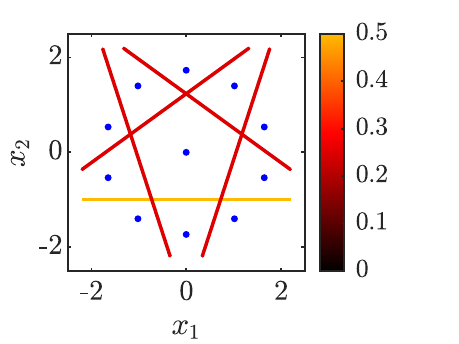}
    \caption{Laplacian magnitude $ |\Delta f | $}
    \label{fig:illustration_2}
\end{subfigure}
\begin{subfigure}{0.32\linewidth}
    \includegraphics[width=\linewidth]{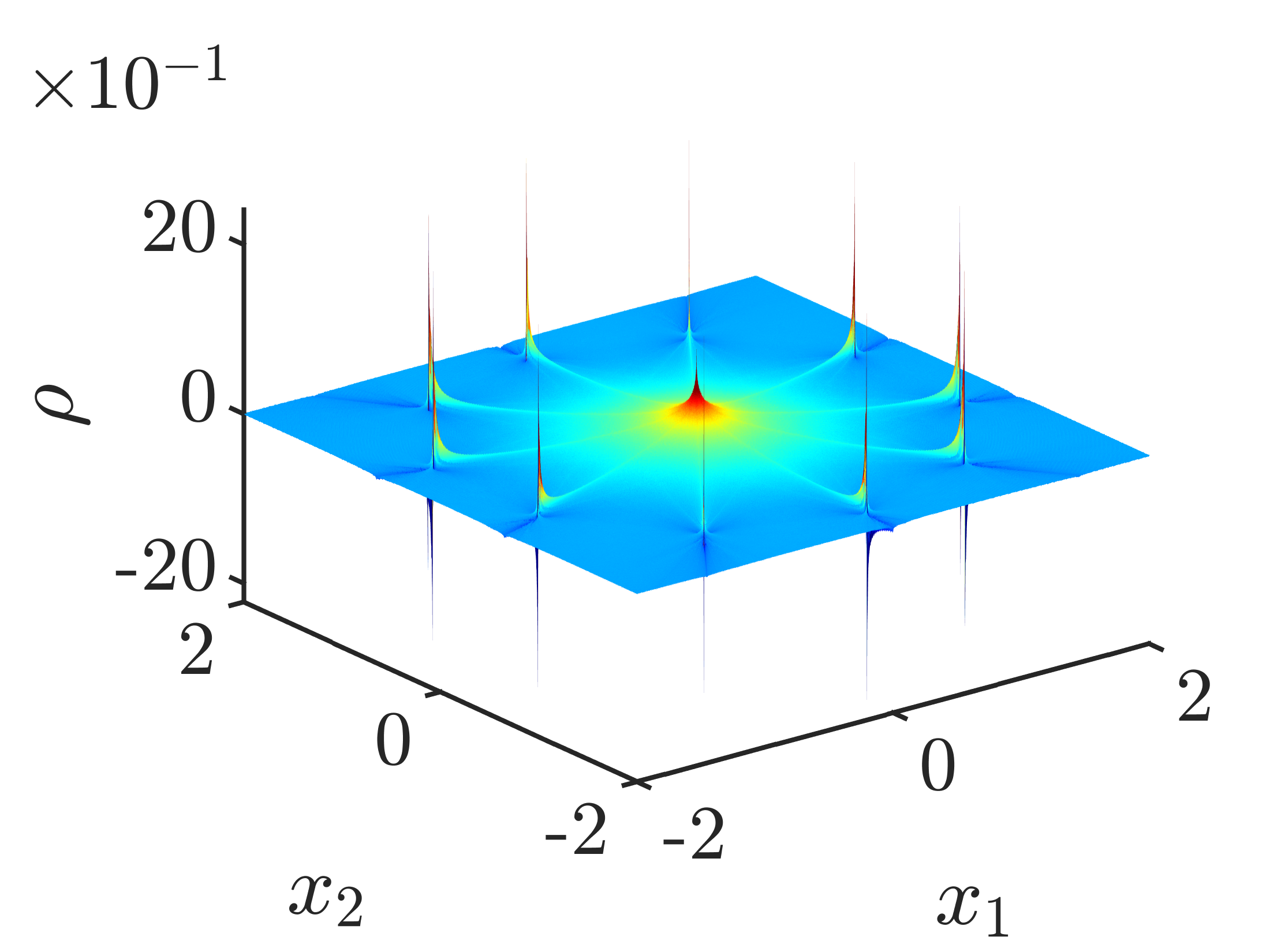}
    \caption{Weight function $ \rho $}
    \label{fig:illustration_3}
\end{subfigure}
\caption{\textbf{Illustration of the stability norm.} Panel \protect\subref{fig:illustration_1} depicts an interpolating function $ f $. Panel \protect\subref{fig:illustration_2} displays the absolute value of the Laplacian of $ f $, \ie $ | \Delta f | $. Here the color codes the amplitude of the delta functions. Panel \protect\subref{fig:illustration_3} presents the weight function $ \rho $. The stability norm is the weighted sum of line integrals of $\rho$, according to  $| \Delta f | $.}
\label{Fig:illustration}
\end{figure}

\subsection{Radon space interpretation}\label{Sec:Radon space interpretation} 
Another interesting interpretation of Thm. \ref{Thm: Properties of twice differentiable stable solutions} can be derived in Radon space. First, let us examine how the weight function $g(\vv,b)$ behaves as a function of $b$. For every fixed $\vv$, the function $g(\vv,\cdot)$ has a finite support, $[\min_j\{\xx_j^\top\vv\},\max_j\{\xx_j^\top\vv\}]$. Moreover, $g(\vv,\cdot)$ typically has most of its mass concentrated around the center of the distribution of the projected data points \smash{$\{\xx_j^\top\vv\}$}, and it decays towards the endpoints (see \eg Fig.~\ref{Fig: exmples g and rho}).

Next, let us interpret how the term $(\Rad^*)^{-1} \Delta f$ behaves. For a single hidden-layer ReLU network, $(\Rad^*)^{-1} \Delta f$ is a sum of Dirac deltas. Specifically, as shown in \citep{ongie2020function}, if $f$ is a function of the form $f(\xx) = \sum_{i=1}^k a_i\sigma(\vv_i^\top\xx - b_i) + c$ with $\|\vv_i\|_2=1$ for all $i\in[k]$, then (see App.~\ref{Sec: stability norm interpetation})\vspace{-0.1cm}
\begin{equation}\label{eq:Laplacian}
    (\Rad^*)^{-1} \Delta f = \sum_{i=1}^k a_i \delta_{(\vv_i,b_i)},\vspace{-0.1cm}
\end{equation}
where $\Delta f$ is the (distributional) Laplacian of $f$, and  $\delta_{(\vv,b)}$ denotes a Dirac delta centered at $(\vv,b)\in\Sb^{d-1}\times \R$. We can thus define a parameter space representation for the stability norm as (see App.~\ref{Sec: stability norm interpetation})\vspace{-0.2cm}
\begin{equation}\label{Eq: parameter space representation for the stability norm}
    S_{\params} \triangleq \sum_{i=1}^{k}\abs{a_i}g\rb{\vect{v}_i,b_i}.
\end{equation}
Generally, this parametric representation of the stability norm satisfies $\snorm{f}{g}\leq S_{\params}$, where equality happens whenever the ReLU knots of the representation do not coincide (\ie there is one Dirac function for each ReLU unit). Yet, this parametric view of the stability norm also obeys (see App.~\ref{Sec: stability norm interpetation})
\begin{equation}
    \smash[t]{S_{\params}\leq\frac{1}{\eta}-\frac{1}{2}}.
\end{equation}
Hence, larger step sizes $\eta$ push $S_{\params}$ to be smaller, and from \eqref{Eq: parameter space representation for the stability norm} we see that $|a_i|$ will tend to be small. Also, since $g(\vv,\cdot)$ typically decays towards the boundary of its support, this pushes the neurons' biases, $b_i$, away from the center of the distribution. The resulting effect is that the predictor function~$f$ becomes flatter, especially near the center of the distribution. This is illustrated in Figs.~\ref{Fig: ilustration of step size inducing smoothness} and~\ref{fig:b norm vs lr}.

\begin{figure}[t]
\centering
\begin{subfigure}{0.245\textwidth}
    \includegraphics[width=\textwidth]{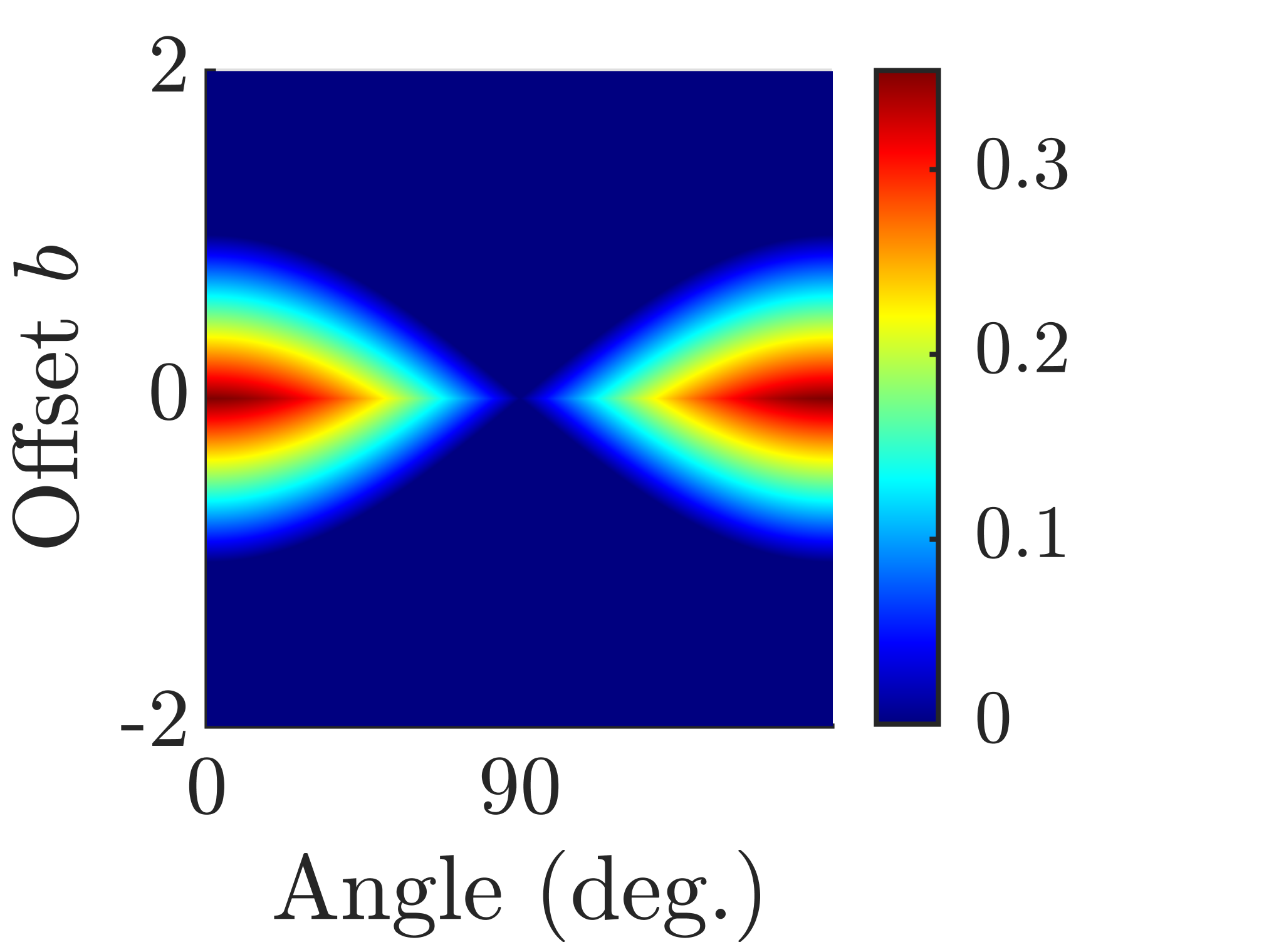}
    \caption{Two data points: $g$}
    \label{fig:first}
\end{subfigure}
\hfill
\begin{subfigure}{0.245\textwidth}
    \includegraphics[width=\linewidth]{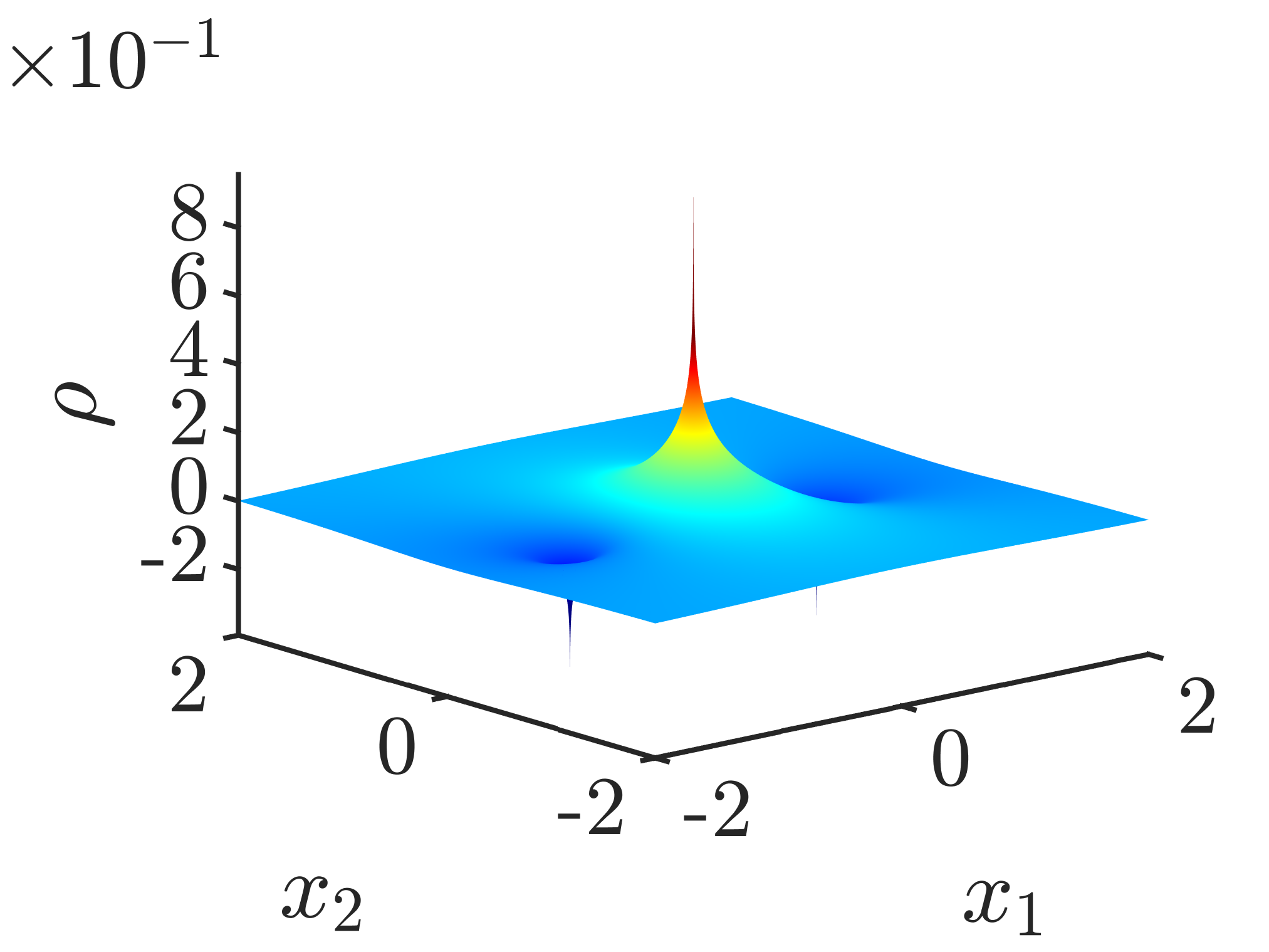}
    \caption{Two data points: $\rho$}
    \label{fig:second}
\end{subfigure}
\hfill
\begin{subfigure}{0.245\textwidth}
    \includegraphics[width=\linewidth]{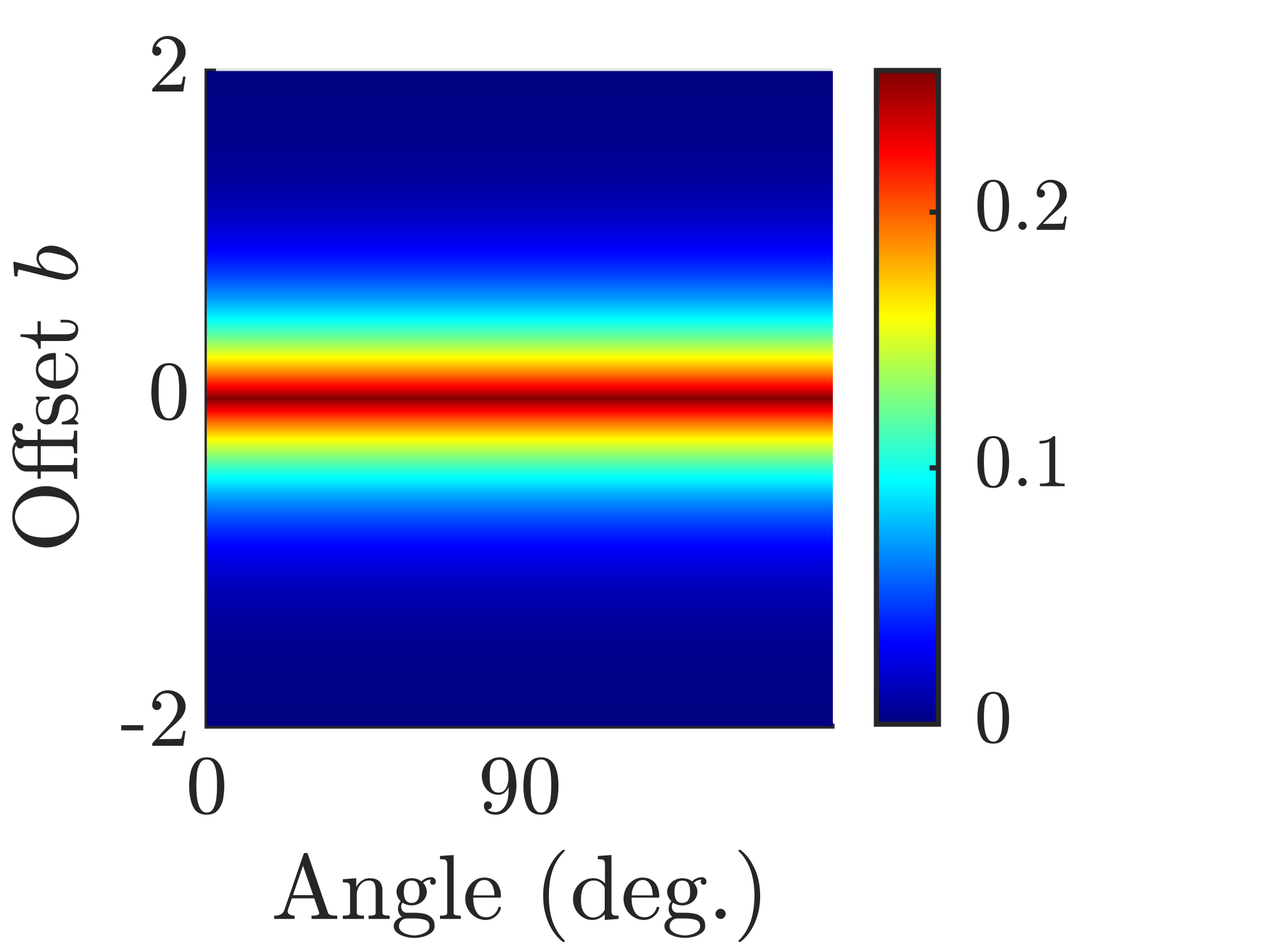}
    \caption{Gaussian data: $g$}
    \label{fig:third}
\end{subfigure}
\hfill
\begin{subfigure}{0.245\textwidth}
    \includegraphics[width=\textwidth]{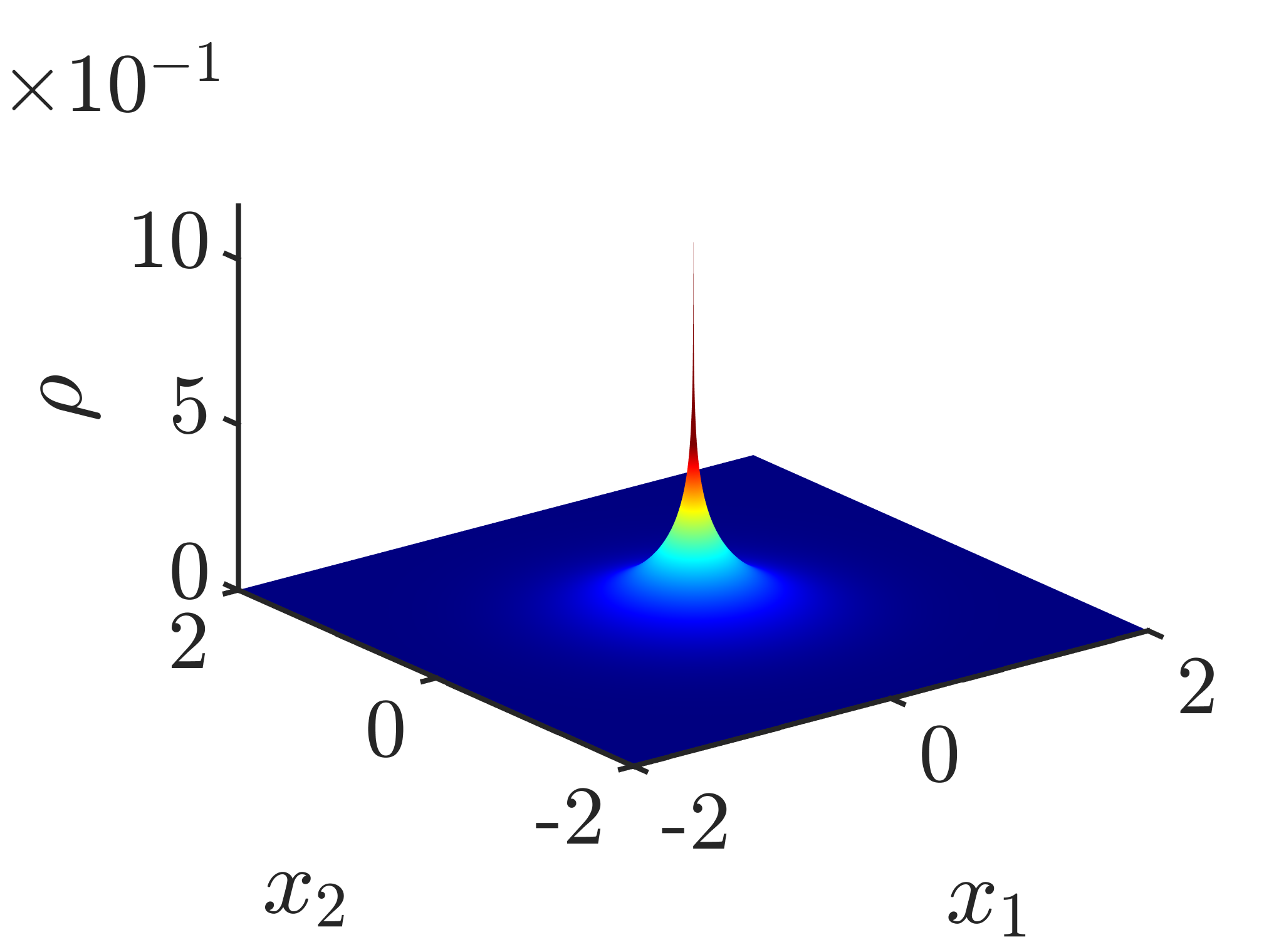}
    \caption{Gaussian data: $\rho$}
    \label{fig:fourth}
\end{subfigure}
\caption{\textbf{Visualization of $\boldsymbol{g}$ of Thm.~\protect\ref{Thm: Properties of twice differentiable stable solutions} and $\boldsymbol{\rho}$ of \eqref{eq: interpretation formula} for two toy examples.} \protect\subref{fig:first}, \protect\subref{fig:second} Two data points $\xx_1=(1,0)$ and $\xx_2=(-1,0)$. \protect\subref{fig:third},\protect\subref{fig:fourth} Two dimensional Gaussian data, \ie $\vect{X}\thicksim \mathcal{N}\rb{\vect{0},\vect{I}}$.}
\label{Fig: exmples g and rho}
\end{figure}

\subsection{Examples}\label{Sec:Examples}
Earlier we introduced two interpretations for the stability norm $\snorm{\cdot}{g}$: one in primal space, which uses the weight function $\rho$, and one in Radon space, which uses the weight function $g$. In this section, we compute $g$ and $\rho$ for two toy examples, for which \eqref{eq: interpretation formula} holds.

\vspace{-0.3cm}
\paragraph{Example 1: Two data points in $\R^2$.} Assume the dataset contains two points: $\xx_1=\rb{1,0}$ and $\xx_2=\rb{-1,0}$. In this case we can analytically calculate $g$ and $\rho$ (see App.~\ref{App:The weight function for the toy example}). Figure~\ref{Fig: exmples g and rho} depicts these functions. Here, $\rho$ has singularities at $\xx_1$, $\xx_2$ and at the origin. Yet, despite these singularities, all line integrals of $\rho$ are finite and thus the expression $ \int_{\R^{\text{d}}}|\Delta f|(\xx)\,\rho(\vect x)\dif\vect x$ 
is well-defined. Moreover, while $\rho$ takes negative values, all its line integrals take positive values. 

\vspace{-0.3cm}
\paragraph{Example 2: Isotropic distribution.} Suppose the data is isotropically distributed, \ie $\prob(\vect{X}^\top \vv>b)$ does not depend on the direction of $\vv$. Thus $ g $ is independent of $\vv$, which implies that $\rho=\Rad^{-1}g$ is a radial function. In App.~\ref{App:The weight function for isotropic data distribution} we give an analytic expression for $ g $ for any isotropic distribution. In the special case of $\vect{X}\sim \mathcal{N}(\vect{0},\vect{I})$, $g(\vv,\cdot)$ decays monotonically with $b$, and thus, as discussed in Sec.~\ref{Sec:Radon space interpretation}, large step sizes will tend to increase the biases of all neurons. Additionally, we show in App.~\ref{App:The weight function for isotropic data distribution} that for 2D data, $\rho$ is positive and strictly decreasing in $\|\xx\|$, and it satisfies the asymptotics $\rho(\xx) = O(\log(\|\xx\|))$ as $\|\xx\|\to 0$, and $\rho(\xx) = O(\|\xx\|^{-1})$ as $\|\xx\|\to \infty$. Figure~\ref{Fig: exmples g and rho} visualizes $g$ and~$\rho$ for two dimensional Gaussian data.

\section{Stability leads to depth separation}\label{Sec:Depth separation}
Single hidden-layer neural networks are universal approximators, \ie they can approximate arbitrarily well any continuous function over compact sets \citep{pinkus1999approximation}. However, some of these approximations may correspond to unstable minima that are virtually unreachable by training via SGD. To understand what is the \emph{effective} approximation power of neural networks, we need to identify the class of functions that have stable approximations. We have the following (see proof in App.~\ref{App:Stable minima approximation proof}).
\begin{proposition}\label{Thm: depth sep}
    Let $X $ be the interior of the convex hull of the training points, and $f:X\rightarrow \R$ be~any function. Suppose there exists a sequence of single hidden-layer ReLU networks~$\{f_k\}$ with bounded stability norm that converges to $f$ in $L^1$ over $X$. Then $\snorm{f}{g}$ is finite, and $\lim\limits_{k\to\infty}\snorm{f_k}{g}= \snorm{f}{g}$.
\end{proposition}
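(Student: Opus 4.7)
My approach is to view the stability norm $\snorm{f}{g}$ as the total variation norm of the $g$-weighted signed Radon measure $\nu_f \triangleq g\cdot(\Rad^*)^{-1}\Delta f$ on $\Sb^{d-1}\times\R$, and then combine weak-* compactness of signed Radon measures with lower semicontinuity of total variation. By Eq.~\eqref{eq:Laplacian}, for each shallow ReLU network $f_k$ the distribution $(\Rad^*)^{-1}\Delta f_k$ is a finite signed atomic measure, so $\nu_{f_k}$ is a finite signed Radon measure with $\|\nu_{f_k}\|_{TV}=\snorm{f_k}{g}$.

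First I would establish compactness. Since $g$ is supported on the bounded set $\{(\vv,b):\min_j\xx_j^\top\vv\le b\le \max_j\xx_j^\top\vv\}\subset\Sb^{d-1}\times\R$, the measures $\nu_{f_k}$ live on a common compact set, and the hypothesis $\sup_k\snorm{f_k}{g}<\infty$ provides a uniform total variation bound. Banach-Alaoglu then yields a weak-* convergent subsequence $\nu_{f_{k_j}}\rightharpoonup \nu^*$ for some finite signed Radon measure $\nu^*$.

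Next I would identify $\nu^*$. The $L^1$ convergence $f_{k_j}\to f$ on $X$ implies distributional convergence on $X$, and applying first the distributional Laplacian and then $(\Rad^*)^{-1}$ — both continuous operators in the framework of App.~\ref{App:Distributional framework} — gives $(\Rad^*)^{-1}\Delta f_{k_j}\to (\Rad^*)^{-1}\Delta f$ as distributions on $\Sb^{d-1}\times\R$. Multiplying by the bounded continuous weight $g$ preserves distributional convergence, so $\nu_{f_{k_j}}\to \nu_f$ as distributions. Since weak-* convergence of measures also implies distributional convergence, uniqueness of distributional limits forces $\nu^*=\nu_f$. Standard lower semicontinuity of total variation under weak-* convergence then yields $\snorm{f}{g}=\|\nu_f\|_{TV}\le \liminf_j\|\nu_{f_{k_j}}\|_{TV}=\liminf_j\snorm{f_{k_j}}{g}<\infty$, which proves finiteness. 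Running the same argument along an arbitrary subsequence shows every weak-* cluster point of $\{\nu_{f_k}\}$ equals $\nu_f$, so the entire sequence $\{\nu_{f_k}\}$ converges weak-* to $\nu_f$.

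The main obstacle will be strengthening the lower semicontinuity bound to the claimed equality $\lim_k\snorm{f_k}{g}=\snorm{f}{g}$. Weak-* convergence of signed measures can lose total variation through cancellation between positive and negative atoms of $\nu_{f_k}$, so \emph{a priori} $\liminf_k\snorm{f_k}{g}$ might strictly exceed $\snorm{f}{g}$. I would attack this in two complementary ways: (i) exploit the purely atomic structure of each $\nu_{f_k}$ — where, after merging coincident atoms, signed atoms are non-cancelling — together with continuity of $g$ on its support, to apply a Portmanteau-type identity separately on the positive and negative parts of the limit measure; and (ii) pass to the primal-space representation of Eq.~\eqref{eq: interpretation formula}, namely $\snorm{f}{g}=\int_{\R^d}|\Delta f|\rho\,\dif\xx$, and combine $L^1$ convergence of $f_k$ with regularity properties of $\rho$ away from the training points to extract the matching upper bound $\limsup_k\snorm{f_k}{g}\le \snorm{f}{g}$.
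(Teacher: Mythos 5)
Your overall route for the finiteness claim coincides with the paper's: both arguments pass from $L^1$ convergence on $X$ to weak (distributional) convergence of $\Delta f_k$, then to weak convergence of $g\cdot(\Rad^*)^{-1}\Delta f_k$ toward $g\cdot(\Rad^*)^{-1}\Delta f$, and conclude via the fact that a weak limit of order-$0$ distributions with uniformly bounded total variation is again an order-$0$ distribution (i.e.\ lower semicontinuity of the TV norm). However, there is a concrete gap in the step where you ``apply $(\Rad^*)^{-1}$'' as a ``continuous operator.'' The convergence $\Delta f_k\to\Delta f$ you obtain from $L^1$ convergence holds only when tested against functions supported in $X$; outside $X$ the networks $f_k$ need not converge at all, so $\Delta f_k$ has no controlled behavior there. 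Since $\langle (\Rad^*)^{-1}\Delta f_k,\varphi\rangle=\langle \Delta f_k,\Rad^{-1}\varphi\rangle$ and $\Rad^{-1}$ is nonlocal, you cannot conclude convergence of $(\Rad^*)^{-1}\Delta f_k$ against an arbitrary test function $\varphi$ on $\Sb^{d-1}\times\R$. The paper closes this by restricting to test functions supported in $Y=\{(\vv,b):\vv^\top\xx>b\text{ for some }\xx\in X\}$ and invoking the Helgason support theorem, which guarantees $\Rad^{-1}\varphi$ is then supported in $\overline{X}$; the argument is completed by observing that $g$ itself is supported in $\overline{Y}$, so convergence of the restriction to $Y$ suffices after multiplying by $g$. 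Your compactness/Banach--Alaoglu detour is not needed once this is in place, but it is not wrong.

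On the claimed equality $\lim_k\snorm{f_k}{g}=\snorm{f}{g}$: you are right to flag that lower semicontinuity only gives $\snorm{f}{g}\le\liminf_k\snorm{f_k}{g}$ and that cancellation of signed atoms could in principle make this strict. You should be aware that the paper's own appendix proof stops at exactly the same point --- it establishes finiteness of $\snorm{f}{g}$ and does not prove the matching upper bound --- and the way the proposition is used in the main text (to conclude that the limit function inherits the stability bound $\snorm{f}{g}\le 1/\eta-1/2$) only requires the lower-semicontinuity direction. So your proposal, with the support-theorem step added, reproduces everything the paper actually proves; neither your sketch nor the paper's proof completes the full limit statement, and your two proposed strategies for the upper bound (exploiting the atomic, non-cancelling structure of each $\nu_{f_k}$, or working in primal space with $\rho$) remain to be carried out. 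Note also that $g$ is only piecewise continuous, so the multiplication-by-$g$ step requires the limiting measure to put no mass on the discontinuity set of $g$; this is glossed over in both arguments.
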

Let $ \{f_k\}$ be a convergent sequence of stable solutions with a growing number of knots, \ie $\forall f_k \in \domain : \snorm{f_k}{g} \leq 1/\eta -1/2 \ $ (see Thm.~\ref{Thm: Properties of twice differentiable stable solutions}). Then, by the proposition above we have that the limit function~$f$ also satisfies this inequality. Therefore, the effective class of functions that can be approximated arbitrarily well by single hidden-layer ReLU networks includes only continuous functions $ f $ that satisfy the stability condition $ \snorm{f}{g} \leq 1/\eta -1/2 $. As the step size decreases, more functions satisfy this condition, suggesting that more functions can be stably approximated by single hidden-layer ReLU networks. Surprisingly,  there exists at least one continuous function $ p $ that has~$\snorm{p}{g}=\infty$ and therefore does not satisfy the stability condition for \emph{any} positive step size (see proof in App.~\ref{App:Depth seperation proof}). Therefore from Prop.~\ref{Thm: depth sep} and Thm.~\ref{Thm: Properties of twice differentiable stable solutions}, this function cannot be approximated arbitrarily well by single hidden-layer ReLU networks trained with a non-vanishing step size. 
\begin{proposition}\label{prop: pyramid}
    Assume the input dimension $d \geq 2 $, and let $p(\xx) = \sigma(1-\|\xx\|_1)$. Suppose the support of~$p$ is contained in the interior of the convex hull of the training points. Then $\snorm{p}{g} = +\infty$.
\end{proposition}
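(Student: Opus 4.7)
The plan is to show that $(\Rad^*)^{-1}\Delta p$, viewed as a distribution on $\Sb^{d-1}\times\R$, fails to be a finite Radon measure, with a derivative-of-Dirac singularity at the point $(\vv,b)=(\vect{e}_1,0)$, and then to deduce from the convex-hull hypothesis that the $g$-weighted total variation is infinite. By the invariance of $p$ under coordinate permutations and sign flips, it suffices to analyze the singularity along one coordinate direction, say $\vect{e}_1=(1,0,\ldots,0)$.

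First I would compute the 1D slice $\Rad p(\vect{e}_1,\cdot)$ in closed form. The hyperplane $\{x_1=b\}$ intersects $p$ in a $(d{-}1)$-dimensional pyramid of height $1-|b|$, and after rescaling $\bar\xx=(1-|b|)\bar y$ the cross-sectional integral evaluates to
\begin{equation*}
\Rad p(\vect{e}_1,b) \;=\; \int_{\R^{d-1}}\sigma\bigl(1-|b|-\norm{\bar\xx}_1\bigr)\,\dif\bar\xx \;=\; \frac{2^{d-1}}{d!}\,(1-|b|)_+^{\,d}.
\end{equation*}
Expanding $(1-|b|)^d$ by the binomial theorem and using the distributional identity $\partial_b^{\,k}|b|=2\,\delta^{(k-2)}(b)$ for $k\geq 2$ (which follows by iterating $\partial_b^{\,2}|b|=2\delta(b)$), the $|b|^1$ term of coefficient $-d$ alone produces $-2d\,\delta^{(d-1)}(b)$ in $\partial_b^{\,d+1}\Rad p(\vect{e}_1,b)$; other odd powers $|b|^m$ contribute strictly lower-order derivatives $\delta^{(d-m)}(b)$, and the endpoints $b=\pm 1$ give only ordinary Diracs. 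Hence $\partial_b^{\,d+1}\Rad p(\vect{e}_1,b)$ contains a genuine $(d{-}1)$-st derivative of a Dirac at $b=0$.

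Next I would translate this to $(\Rad^*)^{-1}\Delta p$ via the filtered back-projection identity $(\Rad^*)^{-1}\Delta = c_d\,\Lambda^{d-1}\partial_b^{\,2}\Rad$, where $\Lambda^{d-1}$ is the ramp filter of order $d-1$ in $b$. For $d$ odd, $\Lambda^{d-1}=(-\partial_b^{\,2})^{(d-1)/2}$ is a local differential operator that preserves (indeed raises) the order of the $\delta^{(d-1)}$-singularity; for $d$ even, $\Lambda^{d-1}$ incorporates a Hilbert transform in $b$, and since $\mathcal H[\delta^{(k)}](b)=(-1)^k k!/(\pi\,b^{\,k+1})$ it sends $\delta^{(d-1)}(b)$ to a kernel with non-integrable singularity $b^{-d}$ near $b=0$. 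In either parity, $(\Rad^*)^{-1}\Delta p$ is not a finite Radon measure on any neighborhood of $(\vect{e}_1,0)$.

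Finally, the hypothesis $\mathrm{supp}(p)\subset\Interior\bigl(\mathrm{conv}\{\xx_j\}\bigr)$ gives $\min_j\xx_j^{\top}\vect{e}_1<0<\max_j\xx_j^{\top}\vect{e}_1$, so $\tilde g(\vect{e}_1,0)>0$ and $g$ is positive and continuous in a neighborhood of $(\vect{e}_1,0)$. Reading $\snorm{p}{g}$ as the $g$-weighted total variation of $(\Rad^*)^{-1}\Delta p$ — by duality, as the supremum of $\langle(\Rad^*)^{-1}\Delta p,\phi\rangle$ over continuous $\phi$ with $|\phi|\leq g$ — one chooses test functions $\phi_n(\vv,b)=\tfrac{1}{2}g(\vect{e}_1,0)\cos(nb)\,\psi(\vv,b)$ with $\psi$ a smooth bump centered at $(\vect{e}_1,0)$, and pairing against the $\delta^{(d-1)}$ atom gives a value of order $n^{d-1}$ that diverges as $n\to\infty$ (in the even-$d$ case one may alternatively integrate $|(\Rad^*)^{-1}\Delta p|$ against $g$ directly and note divergence from the $b^{-d}$ singularity). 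The main technical obstacle is making the filtered back-projection identity rigorous on the relevant distributional class and verifying that the $\delta^{(d-1)}$ contribution is not cancelled by lower-order terms in $\partial_b^{\,d+1}\Rad p$ arising from the other $|b|^m$ pieces.
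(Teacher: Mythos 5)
Your slice computation is correct and is a nice closed form: $\Rad p(\vect{e}_1,b)=\tfrac{2^{d-1}}{d!}(1-|b|)_+^{d}$ indeed has a first-derivative kink at $b=0$, so the fixed-direction distribution $\Lambda^{d-1}\partial_b^{2}\,\Rad p(\vect{e}_1,\cdot)$ contains $\delta^{(d-1)}(b)$ for odd $d$ and a $\mathrm{p.v.}\,b^{-d}$ kernel for even $d$, and your filtered back-projection identity and the positivity of $g$ near $(\vect{e}_1,0)$ (from the convex-hull hypothesis) are both fine. The gap is in the final step. The object $(\Rad^*)^{-1}\Delta p$ is a distribution on $\Sb^{d-1}\times\R$, and the pairing $\langle(\Rad^*)^{-1}\Delta p,\phi\rangle$ integrates the fiberwise pairings against the surface measure $\dif s(\vv)$; the single slice $\{\vv=\vect{e}_1\}$ is a null set for that measure. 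The $\delta^{(d-1)}(b)$ you exhibit is only the weak limit of the nearby slices, not an atom of the joint distribution, so a test function cannot ``catch'' it and there is no $n^{d-1}$ contribution. Concretely, for $\vv$ near but not equal to $\vect{e}_1$, the hyperplane $\vv^{\top}\xx=b$ meets the $2(d-1)$ vertices $\pm\vect{e}_i$ ($i\geq 2$) at distinct values $b=\pm v_i$ clustering near $0$, and the concentrated singularity splits into weaker ones along the curves $b=\vv^{\top}(\pm\vect{e}_i)$. In $d=2$ one can check that $\partial_b^{2}\Rad p(\vv(\theta),b)\approx 2-\tfrac{2}{\sin\theta}\bone_{[-\sin\theta,\sin\theta]}(b)$ near $b=0$, and pairing $\mathcal{H}\partial_b$ of this against your $\phi_n=\cos(nb)\psi$ produces an integral of the form $\int\psi(\theta)\,\tfrac{\sin(n\sin\theta)}{\sin\theta}\,\dif\theta$, which converges as $n\to\infty$ (a Dirichlet integral) rather than growing. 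So your proposed test functions do not witness infinite total variation, and the claimed rate is an artifact of treating a measure-zero slice as if it carried mass.

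The divergence is real, but it lives elsewhere: it comes from the $\mathrm{p.v.}\,(b-\vv^{\top}\xx_0)^{-1}$-type singularities along the curve $\{(\vv,b):b=\vv^{\top}\xx_0\}$ for a corner $\xx_0$ of the pyramid, integrated over a \emph{positive-measure} set of directions. This is exactly what the paper does: it first reduces to showing $(\Rad^*)^{-1}\Delta p$ has order $>0$ by bounding $g$ above and below on the compact support, then (for general $d$) cites the infinite-$\Rad$-norm result of Ongie et al., and (for $d=2$) constructs one-sided ``rainbow'' test functions supported on $\cos\theta-\epsilon<b<\cos\theta$ over a fixed angular window around the corner $(1,0)$, extracting a $\log(\epsilon^{-1})$ divergence. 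To repair your argument you would need to replace the oscillating bump at the crossing point $(\vect{e}_1,0)$ by such a one-sided bump along one of the corner-curves and show the resulting $1/|b-\vv^{\top}\xx_0|$ tails are jointly non-integrable over a range of $\vv$ — at which point you have essentially reconstructed the paper's proof.
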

Intriguingly, this function does have an implementation as a finite-width two hidden-layer network, $p(\xx)= \sigma(1-\sum_{i=1}^{d}(\sigma(x_{i}) +\sigma(-x_{i}))) $, which is a stable solution for a fixed step size. Indeed, in App.~\ref{App:Pyramid GD convergence} we demonstrate that for an appropriate choice of $\eta$, GD is able to converge to this implementation. Thus, we have a depth separation result: the function $p$ cannot be approximated by stable minima of \emph{one} hidden-layer networks trained with a non-vanishing step size, yet with \emph{two} hidden-layers, GD can converge to this function with a fixed step~size.

\section{Shallow network approximations of smooth functions}\label{Sec:Approximations}
In Sec.~\ref{Sec:Depth separation} we showed that stable single-hidden layer ReLU networks are not universal approximators. In this section we give an approximation guarantee under smoothness assumptions. That is, we show that if a function is sufficiently smooth, then it can be approximated arbitrarily well using single hidden-layer networks that correspond to stable solutions of GD.

Let $W^{d+1,1}_{w}(\R^{d})$ denote the weighted Sobolev space of all functions whose weak partial derivatives up to order $d+1$ are bounded in a weighted $L^1$-norm~$\|\cdot\|_{1,w}$  
with weight function $w(\xx):= \Rad^*[1 + |b|](\xx)$.
Let $\|\cdot\|_{W_w^{d+1,1}(\R^d)}$ denote the corresponding Sobolev norm\vspace{0.2cm}
\begin{equation}
    \smash[t]
    {\|f\|_{W_w^{d+1,1}(\R^d)} = \|f\|_{1,w} + \sum_{k=1}^{d+1} \sum_{|\beta|=k} \left\|\partial^\beta f\right\|_{1,w}},
\end{equation}
where $ \beta $ is a multi-index. For technical convenience, we restrict ourselves to odd input dimensions $d$ only. Our results\footnote{In this revision of the manuscript we present an updated version of the results.} use the ``$ \Rad $-norm'' $ \| \cdot \|_{\Rad} $ introduced by \citet{ongie2020function} (see Sec.~\ref{sec: Related_work_short} for details), and the stability norm $ \| \cdot \|_{\Rad, \hat{g}} $ with a different weight function $ \hat{g} $ defined below (see proof in App.~\ref{App:ApproxTheoremProof}).
\begin{proposition}\label{Th:ApproxTheorem}
    Assume $d$ is odd and and let $f \in W_w^{d+1,1}(\R^d)$ be a function that fits the training points. Then, there exists a sequence of solutions $\{f_k : f_k \in \domain \}$ such that $f_k$ converges to $f$ in $L^1$-norm over any compact subset $K\subset \R^d$, and satisfies the bounds $\|f_k\|_{\Rad} + \snorm{f_k}{\hat{g}} \leq  c_{d,\hat g} \|f\|_{W_w^{d+1,1}(\R^d)}$ for all sufficiently large $k$, where $c_{d,\hat g}$ is a constant depending on $d$ and ${\hat g}$ but independent of $f$, and 
    \begin{equation}\label{Eq: g hat definition}
        \hat{g} (\vv,b) = \prob\left( \vect X^{\top}\vv > b \right) \sqrt{ \E \left[ \left(\vect X^{\top}\vv -b \right)^2 \middle| \vect X^{\top}\vv >b \right]}  \sqrt{1 + \E \left[ \norm{\vect X}^2 \middle| \vect X^{\top}\vv > b \right]  }.
    \end{equation}
    Here $ \vect{X} $ is drawn uniformly at random from the dataset.
\end{proposition}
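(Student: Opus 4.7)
The plan is to realize $f$ as a continuous superposition of ReLU ridges via the Radon inversion formula, bound the amplitude in both the unweighted and $\hat g$-weighted $L^1$ norms by the Sobolev norm, quantize to a finite-width network, and finally patch the discretization on the training set.

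For the first step I would invoke the Radon-domain integral representation employed by \citet{ongie2020function}: for odd $d$ the filtered back-projection inverse $(\Rad^*)^{-1}$ is a local differential operator of order $d+1$ in the offset variable, so
\begin{equation*}
    f(\xx) \;=\; \int_{\Sb^{d-1}\times \R} \alpha(\vv,b)\,\sigma(\vv^\top \xx - b)\,\dif s(\vv)\,\dif b \;+\; \vv_0^\top \xx + c_0,
    \qquad \alpha \;=\; c_d\,(\Rad^*)^{-1}\Delta f,
\end{equation*}
and the Sobolev regularity $f\in W^{d+1,1}_w(\R^d)$ is precisely what makes $\alpha\in L^1(\Sb^{d-1}\times\R)$. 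Integrating by parts $d+1$ times transfers the derivatives from the inversion formula onto $f$, producing a pointwise bound of the form
\begin{equation*}
    |\alpha(\vv,b)|\;\leq\; c_d \sum_{|\beta|\leq d+1}\Rad\bigl(|\partial^\beta f|\bigr)(\vv,b).
\end{equation*}

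The second step is the weight-matching. Applying the Radon--primal duality $\int (\Rad\phi)\,\psi\,\dif s(\vv)\dif b = \int \phi\,(\Rad^*\psi)\,\dif \xx$ once with $\psi\equiv 1$ controls $\|f\|_{\Rad}=\|\alpha\|_1$ by $\sum_{|\beta|\leq d+1}\|\partial^\beta f\|_1 \leq \|f\|_{W^{d+1,1}_w(\R^d)}$ (using $w\geq c$ pointwise). Taking instead $\psi=\hat g$ controls $\snorm{f}{\hat g}=\int|\alpha|\hat g\,\dif s(\vv)\dif b$ by the weighted Sobolev norm, provided $\Rad^*\hat g \leq C\,w$ pointwise. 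This last matching is the reason $\hat g$ takes the specific form in \eqref{Eq: g hat definition}: a single Cauchy--Schwarz step on $\vect X^\top\vv - b$ and on $\|\vect X\|$ inside the conditional expectations collapses their product into a factor of the form $1+|\vv^\top\xx|$ after pulling back through $\Rad^*$, which is exactly the primal weight $w(\xx)=\Rad^*[1+|b|](\xx)$.

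The third step is discretization and interpolation. Writing the integral as $\|\alpha\|_1\,\E_\mu[\mathrm{sign}(\alpha)\,\sigma(\vv^\top\xx - b)]$ with $\mu=|\alpha|/\|\alpha\|_1$, I sample $k$ i.i.d.\ atoms $(\vv_i,b_i)\sim \mu$ and set
\begin{equation*}
    \tilde f_k(\xx) \;=\; \frac{\|\alpha\|_1}{k}\sum_{i=1}^k \mathrm{sign}\bigl(\alpha(\vv_i,b_i)\bigr)\,\sigma(\vv_i^\top\xx - b_i) + \vv_0^\top\xx + c_0.
\end{equation*}
A Barron/Maurey-style argument gives $\E\|\tilde f_k-f\|_{L^2(K)}^2 = O(1/k)$ on any compact $K$, hence $L^1$-convergence along a subsequence. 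The parameter-space representation~\eqref{Eq: parameter space representation for the stability norm} combined with Fubini gives $\E[\|\tilde f_k\|_{\Rad}+\snorm{\tilde f_k}{\hat g}]\leq \|\alpha\|_1+\int|\alpha|\hat g\,\dif s(\vv)\dif b$, so Markov's inequality yields a sample realization satisfying the bound up to a universal constant. The affine part $\vv_0^\top\xx+c_0$ is absorbed into a constant number of extra ReLU neurons without affecting either norm by more than a factor depending on $d$. To promote $\tilde f_k$ to a solution, I add at most $n$ localized ReLU ``bump'' neurons whose amplitudes interpolate the residuals $r_j = y_j - \tilde f_k(\xx_j)$; uniform convergence of $\tilde f_k$ to $f$ on a compact containing the training data makes $\max_j|r_j|\to 0$, and the bumps can be chosen narrow enough that their total contribution to both norms is $o(1)$.

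The main obstacle is the weight-matching $\Rad^*\hat g \lesssim w$ in the second step, which both forces the exact form of $\hat g$ and is the place where the odd-$d$ hypothesis (making the back-projection filter local and of integer order $d+1$) and the Sobolev order $d+1$ enter. The interpolation correction in the final step is standard, but must be implemented so that the weighted-norm cost of the bumps is dominated by $\max_j|r_j|$ rather than scaling unfavorably with $n$ or $k$.
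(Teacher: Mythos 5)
Your proposal follows the same three-stage architecture as the paper's proof: (i) use the odd-$d$ Radon inversion to identify $\alpha=(\Rad^*)^{-1}\Delta f$ with $-\gamma_d\,\Rad(-\Delta)^{(d+1)/2}f$, expand the integer power of the Laplacian into order-$(d+1)$ partial derivatives, and bound $\|\alpha\|_1+\int|\alpha|\hat g$ by $\|f\|_{W^{d+1,1}_w(\R^d)}$, the key weight-matching fact being that $\hat g(\vv,b)=O(1+|b|)$ uniformly in $\vv$ (which, by positivity of $\Rad^*$, yields your condition $\Rad^*\hat g\lesssim w=\Rad^*[1+|b|]$ — the paper states this as $\inf_{(\vv,b)}\frac{1+|b|}{1+\hat g(\vv,b)}>0$); (ii) discretize $\alpha$ into $k$ Diracs; (iii) patch the training data with $O(n)$ narrow ridge bumps whose amplitudes are the residuals and hence vanish. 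Stages (i) and (iii) are essentially the paper's. The genuinely different step is (ii): the paper uses a deterministic partition-based narrow approximation (adapting Malliavin) of the reweighted measure $\tilde\alpha=(1+\hat g)\alpha$, which preserves $\|\tilde\alpha_k\|_{\mathrm{TV}}\leq\|\tilde\alpha\|_1$ exactly and allows the atoms to be placed off the measure-zero set of hyperplanes through training points, whereas you use a Maurey/Barron i.i.d.\ sampling argument, which buys an explicit $O(1/k)$ rate in $L^2(K)$ but controls the weighted norm only in expectation, hence the extra universal constant via Markov.

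Two points in your stage (ii)--(iii) need repair. First, the Maurey variance $\E_\mu[\sigma(\vv^\top\xx-b)^2]$ requires a second moment of $|b|$ under $|\alpha|$, while the hypothesis only guarantees the first moment $\int|\alpha|(1+|b|)<\infty$; you should sample the centered integrand $\sigma(\vv^\top\xx-b)-\sigma(-b)$ (as in the representation $h_\beta$ the paper borrows from Ongie et al.), which is bounded by $\|\xx\|$ uniformly in $(\vv,b)$ so the variance is finite on any compact. Second, defining the residuals $r_j=y_j-\tilde f_k(\xx_j)$ and sending them to zero requires pointwise convergence at the training points, which does not follow from $L^2(K)$ convergence; either apply the same second-moment bound to the $n$ evaluation functionals, or follow the paper's route of pointwise convergence from narrow convergence of $\alpha_k$ upgraded to $L^1$ on any compact by dominated convergence using the Lipschitz bound $|f_k(\xx)-f_k(\zeroVec)|\leq\|f_k\|_{\Rad}\|\xx\|$. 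Finally, the bumps' contribution to the norms is $o(1)$ because their amplitudes vanish at a \emph{fixed} width $\epsilon$ (each bump has $\Rad$-norm $4\epsilon^{-1}$); making them narrower increases, not decreases, that cost.
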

This proposition shows that for any $ f \in W_w^{d+1,1}(\R^d) $ there exists a sequence of single hidden-layer ReLU network approximations $ \{ f_k \} $ that fit the training data and for which $ \{ \|f_k\|_{\Rad} \} $ and $ \{ \snorm{f_k}{\hat{g}} \} $ are bounded. To prove that these functions can have stable parameterizations for GD, we need to show that if both the stability norm and the $ \Rad $-norm are bounded (a function space property), then there exists a corresponding minimum with bounded sharpness\footnote{Note that for GD, $ \eta < 2/\lambda_{\max}  $ is a necessary and sufficient condition for linear stability.} in parameter space. To this end, we derive an upper bound on the minimal sharpness of a solution $f$ among its different parameterizations in terms of the stability norm and the $ \Rad $-norm (see proof in App.~\ref{App:Upper bound proof}).
\begin{lemma}\label{Lemma: lambda_max upper bound}
    Let $f\in\domain$ be a solution for which the knots do not coincide with any training point.
    Then there exists an implementation  $ \params^* $ corresponding to $ f $ such that
    \begin{align}
    \lambda_{\max}\left(\nabla^{2}\loss(\params^*)\right) 
        \leq  1+2 \snorm{f}{\hat{g}}
        + 4 \rb{\norm{f}_{\Rad} +\inf_{\xx \in \R^d } \norm{\nabla f (\xx) }} \sqrt{\lambda_{\max} \big( \CovMat_{\vect X} \big) } \sqrt{ 1+  \E \left[ \norm{\vect X}^2 \right]}.
    \end{align}
    Here $ \vect{X} $ is drawn uniformly at random from the dataset, and $ \CovMat_{\vect X} $ is the covariance matrix of $ \vect X $.
\end{lemma}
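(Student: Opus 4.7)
My plan is to reduce the Hessian to Gauss--Newton form and bound the induced quadratic form in a favorable parameterization. Since $f$ is a zero-loss solution, the residual--$\nabla_\params^2 f$ terms in $\nabla^2\loss$ vanish at any implementation $\params^*$ of $f$, and
\begin{equation}
\nabla^{2}\loss(\params^{*}) = \frac{1}{n}\sum_{j=1}^{n}\nabla_{\params}f(\xx_j)\nabla_{\params}f(\xx_j)^{\top},
\end{equation}
so $\lambda_{\max}(\nabla^{2}\loss(\params^{*})) = \sup_{\|\uu\|=1}\frac{1}{n}\sum_{j=1}^{n}(\uu^{\top}\nabla_{\params}f(\xx_j))^{2}$. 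The task reduces to choosing $\params^*$---exploiting the per-neuron rescaling freedom $(\ww_i^{(1)}, b_i^{(1)}, w_i^{(2)}) \mapsto (\alpha_i \ww_i^{(1)}, \alpha_i b_i^{(1)}, w_i^{(2)}/\alpha_i)$ that leaves $f$ invariant---so that this supremum is small.

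Writing $f(\xx) = \sum_i a_i \sigma(\vv_i^\top \xx - \beta_i) + \vect q^\top \xx + c$ in canonical Radon form with $\|\vv_i\|=1$, I would adopt the balanced implementation $w_i^{(2)} = \mathrm{sgn}(a_i)\sqrt{|a_i|}$, $\ww_i^{(1)} = \sqrt{|a_i|}\vv_i$, $b_i^{(1)} = -\sqrt{|a_i|}\beta_i$. This is the natural choice: it is the unique rescaling under which each neuron's block of $\nabla_\params f$ scales as $\sqrt{|a_i|}$, matching the order of $\snorm{f}{\hat g}$ and $\|f\|_{\Rad}$ in the target bound. In this implementation $\uu^\top \nabla_\params f(\xx_j) = u^{b,2} + \sum_i \sqrt{|a_i|}\phi_{i,j} Q_{i,j}(\uu)$, where $\phi_{i,j} = \Ind[\vv_i^\top \xx_j > \beta_i]$ and $Q_{i,j}(\uu)$ is affine in $\xx_j$ and linear in the neuron-$i$ block of $\uu$. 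The affine component $\vect q^\top \xx + c$ is absorbed into the hidden layer using the identity $\sigma(v) - \sigma(-v) = v$, with $\vect q$ chosen optimally to minimize the cross-term contribution that appears below.

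Squaring, averaging over $j$, and using $(a+b)^2 \leq 2a^2 + 2b^2$ splits the result into an output-bias contribution (yielding the additive $1$), a per-neuron diagonal contribution $\tfrac{2}{n}\sum_{i,j}|a_i|\phi_{i,j}Q_{i,j}^2$, and an inter-neuron cross contribution $\tfrac{2}{n}\sum_{j}\sum_{i \neq i'}\sqrt{|a_ia_{i'}|}\phi_{i,j}\phi_{i',j}Q_{i,j}Q_{i',j}$. The diagonal piece, after Cauchy--Schwarz on the $\uu$-dependence of $Q_{i,j}$, recombines into $\sum_i |a_i|\prob(\phi_i)\sqrt{\E[(\vv_i^\top\vect X - \beta_i)^2|\phi_i]}\sqrt{1 + \E[\|\vect X\|^2|\phi_i]}$, which is exactly $\snorm{f}{\hat g}$ by the definition of $\hat g$ in~\eqref{Eq: g hat definition}, giving the factor $2$. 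The cross piece is the main obstacle: I would invoke the identity $\sum_{i,i'}a_ia_{i'}\phi_{i,j}\phi_{i',j}\vv_i^\top\vv_{i'} = \|\nabla_\xx f(\xx_j)\|^2$ to rewrite it in terms of the predictor's input-space gradient, pull out $\|f\|_{\Rad}=\sum_i|a_i|$ via Cauchy--Schwarz, and use the optimal affine shift $\vect q$ to replace $\nabla_\xx f$ by $\nabla_\xx f - \vect q$, producing the $\inf_\xx \|\nabla f(\xx)\|$ factor. The remaining expectation in $\vect X$ is then bounded by $\sqrt{\lambda_{\max}(\CovMat_{\vect X})}\sqrt{1 + \E[\|\vect X\|^2]}$, and the factor $4$ emerges from a second application of $(a+b)^2 \leq 2a^2 + 2b^2$ together with Cauchy--Schwarz. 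The most delicate step is verifying that the optimal shift and the $\Rad$-norm extraction combine to give exactly the stated form with no spurious factors.
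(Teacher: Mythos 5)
There is a genuine gap, and it sits exactly where you flag the argument as ``the main obstacle'': the inter-neuron cross terms. You expand $\lambda_{\max}$ as $\sup_{\|\uu\|=1}\frac{1}{n}\sum_j(\uu^\top\nabla_\params f(\xx_j))^2$ with $\uu$ in \emph{parameter} space, which forces you to control $\sum_{i\neq i'}\sqrt{|a_i a_{i'}|}\,\phi_{i,j}\phi_{i',j}Q_{i,j}Q_{i',j}$. The identity you propose, $\sum_{i,i'}a_i a_{i'}\phi_{i,j}\phi_{i',j}\vv_i^\top\vv_{i'}=\|\nabla_\xx f(\xx_j)\|^2$, does not match these terms: the $Q_{i,j}(\uu)$ are linear in the neuron-$i$ block of the \emph{test vector} $\uu$ (whose within-block directions are arbitrary and being maximized over), not inner products $\vv_i^\top\vv_{i'}$ of the network's own weight directions, and the weights are $\sqrt{|a_ia_{i'}|}$ rather than $a_ia_{i'}$. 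Absent that identity, the only generic control is Cauchy--Schwarz, which bounds the cross sum by $(\sum_i\sqrt{|a_i|}\,|\cdots|)^2$ and costs a factor of $k$ relative to $\|f\|_\Rad=\sum_i|a_i|$. The paper avoids cross terms entirely by using the other Gram form: since $\nabla^2\loss=\frac1n\bPhi\bPhi^\top$, one has $\lambda_{\max}=\max_{\qq\in\Sb^{n-1}}\frac1n\|\bPhi\qq\|^2$ with $\qq$ in \emph{data} space, and $\|\bPhi\qq\|^2$ is a sum of squares indexed by parameters, hence decomposes exactly neuron-by-neuron with no cross terms; each per-neuron square is then bounded by Cauchy--Schwarz in $j$. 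In the paper, the $\rb{\norm{f}_\Rad+\inf_\xx\norm{\nabla f(\xx)}}\sqrt{\lambda_{\max}(\CovMat_{\vect X})}\sqrt{1+\E\|\vect X\|^2}$ term has nothing to do with cross terms: it is the contribution of the two extra ReLU units implementing the affine part $\beta\,\xx^\top\hh+c$ of the canonical representation, with $|\beta|\le\norm{f}_\Rad+\inf_\xx\norm{\nabla f(\xx)}$ obtained from the reverse triangle inequality on $\nabla f=\sum_i a_i\Ind[\cdot]\vv_i+\beta\hh$.

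A second, smaller discrepancy: your fixed ``balanced'' rescaling $w_i^{(2)}=\mathrm{sgn}(a_i)\sqrt{|a_i|}$, $\|\ww_i^{(1)}\|=\sqrt{|a_i|}$ yields a per-neuron contribution of the form $|a_i|(A_i+B_i)$ (arithmetic mean of the first-layer and second-layer squared blocks), whereas $\snorm{f}{\hat g}$ is built from the geometric mean $2|a_i|\sqrt{A_iB_i}$ --- note $\hat g$ in \eqref{Eq: g hat definition} is a \emph{product} of two square roots. Since $A_i+B_i\ge 2\sqrt{A_iB_i}$, your choice overshoots the stated constant. The paper instead minimizes over the per-neuron scale $c_i$ explicitly (a data- and $\qq$-dependent choice), which achieves equality $c_i^2A_i+c_i^{-2}B_i=2\sqrt{A_iB_i}$ and recovers exactly $2\snorm{f}{\hat g}$. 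Your splitting $(a+b)^2\le 2a^2+2b^2$ for the output bias likewise gives an additive $2$ rather than the stated $1$ (the data-space form gives $\frac1n(\sum_jq_j)^2\le1$ directly). These constant-factor issues are fixable, but the cross-term step is not, so the proof as proposed does not go through.
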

Combining Prop.~\ref{Th:ApproxTheorem} and Lemma~\ref{Lemma: lambda_max upper bound} we get that any $f \in W_w^{d+1,1}(\R^d)$ can be approximated arbitrarily well by a sequence of stable solutions for GD with a \emph{fixed} step size~$\eta$.
\begin{theorem}\label{Th:ApproxTheoremReal}
    Suppose the input dimension $d$ is odd, and let $f \in W_w^{d+1,1}(\R^d)$ be a function that fits the training points. Then, there exist $\eta>0$ and a sequence of single hidden-layer ReLU network functions $\{f_k : f_k \in \mathcal{F}_{k} \}$ such that $ f_k $ converges to~$f$ in $L^1$-norm over any compact subset $K\subset \R^d$, and~$\{f_k \}$ are stable solutions for GD with step size $\eta > 0$ for all sufficiently large $k$.
\end{theorem}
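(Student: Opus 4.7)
The strategy is to chain Proposition~\ref{Th:ApproxTheorem} with Lemma~\ref{Lemma: lambda_max upper bound} and then close the loop with the linear-stability characterization $\eta < 2/\lambda_{\max}(\nabla^2 \loss(\params^*))$ for GD. Concretely, I would first invoke Proposition~\ref{Th:ApproxTheorem} on the given $f \in W_w^{d+1,1}(\R^d)$ to extract a sequence of single hidden-layer ReLU solutions $\{f_k\}$ converging to $f$ in $L^1$ on compact subsets, with the uniform bound $\|f_k\|_{\Rad} + \snorm{f_k}{\hat g} \leq c_{d,\hat g}\,\|f\|_{W_w^{d+1,1}(\R^d)}$ holding for all $k$ sufficiently large. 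Then, for each such $f_k$, I would apply Lemma~\ref{Lemma: lambda_max upper bound} to produce a specific parameterization $\params_k^*$ of $f_k$ for which $\lambda_{\max}(\nabla^2 \loss(\params_k^*))$ is bounded by a quantity involving only $\snorm{f_k}{\hat g}$, $\|f_k\|_{\Rad}$, $\inf_{\xx}\|\nabla f_k(\xx)\|$, and the data-dependent constants $\lambda_{\max}(\CovMat_{\vect X})$ and $\E[\|\vect X\|^2]$. A uniform upper bound $L$ on these Hessian top eigenvalues then yields a single step size $\eta \in (0, 2/L)$ for which every $\params_k^*$ is linearly stable, turning each $f_k$ into a stable solution in the sense of Definition~\ref{Def:Stable solution}.

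Two bookkeeping issues need to be dispatched before the chain closes. The first is verifying the hypothesis of Lemma~\ref{Lemma: lambda_max upper bound} that the knots of $f_k$ avoid the training points; this is a generic condition, and if the construction in Proposition~\ref{Th:ApproxTheorem} does not already ensure it, an arbitrarily small perturbation of the inner weights $\weightVec{1}_i,\bias{1}_i$ removes any accidental coincidences without altering the training-data fit or disturbing the $L^1$ convergence or the norm bounds by more than an $o(1)$ amount. The second and more substantive issue is obtaining a uniform-in-$k$ bound on $\inf_{\xx}\|\nabla f_k(\xx)\|$, since Proposition~\ref{Th:ApproxTheorem} only directly controls $\|f_k\|_\Rad$ and $\snorm{f_k}{\hat g}$. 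For a finite-width shallow ReLU network $f_k(\xx) = \sum_i a_i \sigma(\vv_i^\top \xx - b_i) + c$ the infimum is attained in an unbounded linear region, where the gradient equals $\sum_{i\in S(\uu)} a_i \vv_i$ for the set $S(\uu)$ of neurons active at infinity along some direction $\uu$; this is bounded by $\sum_i|a_i|\|\vv_i\|$, which with the normalization $\|\vv_i\|=1$ used in~\eqref{eq:Laplacian} is essentially $\|f_k\|_\Rad$. Hence $\inf_{\xx}\|\nabla f_k(\xx)\| \lesssim \|f_k\|_\Rad$ is also uniformly controlled.

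With all four quantities in the bound of Lemma~\ref{Lemma: lambda_max upper bound} uniformly controlled in terms of $\|f\|_{W_w^{d+1,1}(\R^d)}$ and dataset-dependent constants, define
\begin{equation}
    L := 1 + 2 c_{d,\hat g}\|f\|_{W_w^{d+1,1}(\R^d)} + C_d\,c_{d,\hat g}\|f\|_{W_w^{d+1,1}(\R^d)}\sqrt{\lambda_{\max}(\CovMat_{\vect X})}\sqrt{1+\E[\|\vect X\|^2]},
\end{equation}
choose any $\eta \in (0, 2/L)$, and conclude that $\lambda_{\max}(\nabla^2\loss(\params_k^*)) \leq L < 2/\eta$ for all $k$ large, so by Definition~\ref{Def:Stable solution} each $f_k$ is a stable solution of GD with step size $\eta$, while simultaneously $f_k \to f$ in $L^1$ on compacta. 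The main obstacle I anticipate is precisely the uniform control of $\inf_{\xx}\|\nabla f_k(\xx)\|$: the sketch above gives a dataset-independent bound at the cost of introducing a constant not already visible in Proposition~\ref{Th:ApproxTheorem}; if a sharper constant is desired, one could instead modify the approximating construction to embed an explicit linear region on which $\nabla f_k \equiv 0$ (e.g.\ by ensuring that a cluster of neurons at large negative bias leaves a non-empty open set on which every ReLU is inactive), thereby making the infimum identically zero and removing this term from the bound entirely.
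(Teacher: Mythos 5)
Your proposal is correct and follows essentially the same route as the paper, which proves Theorem~\ref{Th:ApproxTheoremReal} precisely by chaining Proposition~\ref{Th:ApproxTheorem} with Lemma~\ref{Lemma: lambda_max upper bound} and then picking $\eta < 2/L$ for a uniform sharpness bound $L$. The two bookkeeping points you flag are handled the same way in effect: the construction in Appendix~\ref{App:ApproxTheoremProof} already guarantees that no ReLU knot of $f_k$ contains a training point, and the uniform control of $\inf_{\xx}\|\nabla f_k(\xx)\|$ follows from the explicit form $f_k = h_{\alpha_k} + \xx^\top\ww + c + q_k$ with $\|\alpha_k\|_{\mathrm{TV}} \leq \|f\|_{\Rad}$ and $\|q_k\|_{\Rad}\to 0$, matching your gradient-bound argument.
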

This theorem states that any sufficiently smooth function can be stably approximated in the limit of infinitely many neurons. We can also use Lemma~\ref{Lemma: lambda_max upper bound} to guarantee the stability of solutions in the finite case. Since $\lambda_{\max}\leq 2/\eta$ is a sufficient condition for the stability of GD, we have the following.
\begin{theorem}\label{Thm: sufficient condition}
    Let $f\in\domain$ be a solution for which the knots do not coincide with any training point. If
    \begin{align}
        \snorm{f}{\hat{g}}
        + 2\rb{\norm{f}_{\Rad} +\inf_{\xx \in \R^d } \norm{\nabla f (\xx) }} \sqrt{\lambda_{\max} \big( \CovMat_{\vect X} \big) } \sqrt{ 1+  \E \left[ \norm{\vect X}^2 \right]} \leq \frac{1}{\eta}-\frac{1}{2},
    \end{align}
    then $f$ is a stable solution for GD with step size $\eta$.
\end{theorem}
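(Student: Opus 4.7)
The proof is essentially a direct chaining of Lemma~\ref{Lemma: lambda_max upper bound} with the standard necessary and sufficient condition $\eta \leq 2/\lambda_{\max}$ for linear stability of GD, so the plan is short.

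First, I invoke Lemma~\ref{Lemma: lambda_max upper bound}, which, since $f\in\domain$ is a solution whose knots do not coincide with any training point, guarantees the existence of a parameterization $\params^*$ of $f$ (i.e.\ a minimum of $\loss$ corresponding to $f$) such that
\begin{equation*}
    \lambda_{\max}\!\left(\nabla^{2}\loss(\params^*)\right)
    \leq 1 + 2\snorm{f}{\hat{g}}
    + 4\bigl(\norm{f}_{\Rad} + \inf_{\xx\in\R^d}\norm{\nabla f(\xx)}\bigr)\sqrt{\lambda_{\max}(\CovMat_{\vect X})}\sqrt{1+\E[\norm{\vect X}^2]}.
\end{equation*}

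Next, I multiply the hypothesis of the theorem by $2$ and add $1$ to both sides. The right-hand side becomes $2\bigl(\tfrac{1}{\eta}-\tfrac{1}{2}\bigr)+1 = 2/\eta$, and the left-hand side exactly matches the upper bound from the lemma. Combining, I obtain $\lambda_{\max}(\nabla^2\loss(\params^*))\leq 2/\eta$.

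Finally, I use the standard fact (recalled in the footnote preceding Lemma~\ref{Lemma: lambda_max upper bound}) that for full-batch GD the condition $\lambda_{\max}(\nabla^{2}\loss(\params^*)) \leq 2/\eta$ is sufficient for linear stability of the minimum $\params^*$. Hence $\params^*$ is a linearly stable minimum of $\loss$ for GD with step size $\eta$, and by Definition~\ref{Def:Stable solution} the solution $f$ is itself a stable solution for GD with step size $\eta$. There is no real obstacle here: once Lemma~\ref{Lemma: lambda_max upper bound} is in hand, the theorem is an immediate rearrangement, and essentially all the mathematical work sits in proving that lemma (which is deferred to App.~\ref{App:Upper bound proof}).
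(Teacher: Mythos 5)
Your proof is correct and is exactly the argument the paper intends: it states immediately before the theorem that the result follows from Lemma~\ref{Lemma: lambda_max upper bound} together with the fact that $\lambda_{\max}\leq 2/\eta$ suffices for linear stability of GD, and your rearrangement (multiply the hypothesis by $2$, add $1$, match against the lemma's bound) is the same chain of reasoning. Nothing is missing.
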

Theorem~\ref{Thm: sufficient condition} complements Theorem~\ref{Thm: Properties of twice differentiable stable solutions}, as it gives a sufficient condition for stability in function space.

\begin{figure}[t]
	\centering
	\begin{subfigure}{0.49\linewidth}
		\includegraphics[width=\linewidth]{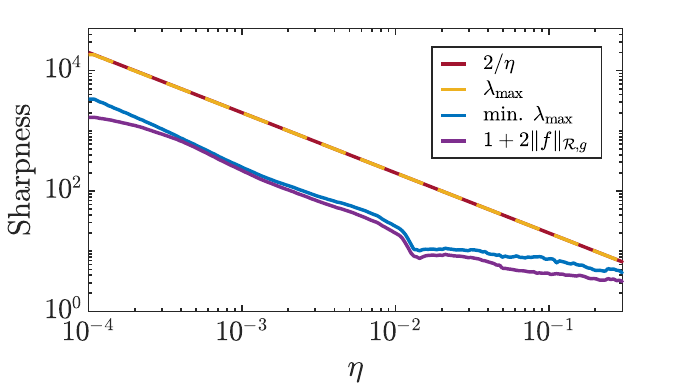}
		\caption{Sharpness versus step size}
		\label{fig:sharpness}
	\end{subfigure}
	\begin{subfigure}{0.49\linewidth}
		\includegraphics[width=\linewidth]{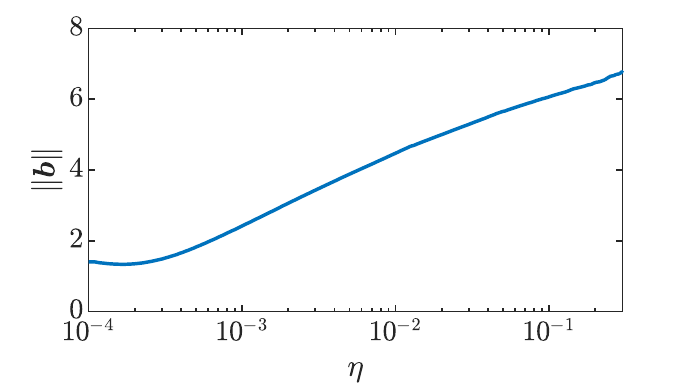}
		\caption{Bias versus step size}
		\label{fig:b norm vs lr}
	\end{subfigure}
	\caption{\textbf{Validating the bounds on synthetic data.} We trained a two-layer ReLU network on a regression task with synthetic data using GD (see Sec.~\protect\ref{Sec:Experiments}). Panel \protect\subref{fig:sharpness} depicts the sharpness of the minima to which GD converged, as a function of the step size $\eta$. As $\eta$ increases, the minima get flatter in parameter space (yellow curve), which translates to smoother predictors in function space (purple curve). Panel \protect\subref{fig:b norm vs lr} shows the norm of the bias vector $\vect{b}$ as a function of the step size. Here we see that the bias vector grows with the step size, as the predictor function gets smoother.}
	\label{Fig:SyntheticData}
	\vspace{-0.15cm}
\end{figure}

\section{Experiments}\label{Sec:Experiments}
We now demonstrate our theoretical results. We start with a regression task on synthetic data. Here, we drew $ n = 100 $ pairs $(\xx_j,y_j)$ in $ \R^{20}\times\R $ from the standard normal distribution to serve as our training set. We fit a single hidden-layer ReLU network with $ k = 40 $ neurons to the data using GD with various step sizes (runs were stopped when the loss dropped bellow $ 10^{-8} $). For each minimum~$\params^*$ to which GD converged, we computed the loss' sharpness, $\lambda_{\max}(\nabla^2 \loss(\params^*))$, and our lower bound on the sharpness, $1+2 \snorm{f}{g} $ (Lemma~\ref{Lem: lambda_max lower bound} in the appendix). Additionally, we numerically determined the sharpness of the flattest implementation for every solution. Figure~\ref{fig:sharpness} depicts the results for this experiment. The red line marks the border of the stable region, which is $ 2/\eta $. Namely, (S)GD cannot stably converge to a minimum whose sharpness is above this line. The dashed yellow line shows the sharpness of the minima to which GD converged in practice. As can be seen, here GD converged at the edge of stability (the two lines coincide), a phenomenon discussed in \citep{cohen2020gradient}. The blue curve is the sharpness of the flattest implementation of each solution (see App.~\ref{App:FAQ1}), while the the purple curve is our lower bound. We see that our bound is quite tight (blue vs.~purple). Furthermore, as the step size increases the minima get flatter in parameter space (yellow curve), which translates to smoother predictors in function space (purple curve). Additionally, we see from Fig.~\ref{fig:b norm vs lr} that the norm of the bias vector $\vect{b}$ increases with the step size, as our theory predicts (Sec.~\ref{Sec:Radon space interpretation}).

Next, we present an experiment with binary classification on MNIST \citep{lecun1998mnist} using SGD. 
In this experiment we used $n=512$ samples from two MNIST classes, `$ 0 $' and `$ 1 $'.
The classes were labeled as $ y = 1 $ and $ y = -1 $, respectively. For the validation set we used $ 4000 $ images from the remaining samples in each class.
We trained a single hidden-layer ReLU network with $ k=200 $ neurons using SGD with batch size $ B = 16 $, and the quadratic loss. To perform classification at inference time, we thresholded the net's output at $ 0 $.
We ran SGD until the loss dropped below $ 10^{-8} $ for $ 2000 $ consecutive epochs. Figure~\ref{fig:MNIST_1} shows the same indices as in the previous experiment. 
Here we see again that as the step size increases, the minima get flatter in parameter space (yellow curve), which translates to smoother predictors in function space (purple curve). 
Figure~\ref{fig:MNIST_2} shows the classification accuracy on the validation set, where we see that the network generalizes better as the step size increases, as past work showed, \eg~\citep{keskar2016large}. More experiments in App.~\ref{App:InitScale}.

\begin{figure}[t]
\centering
\begin{subfigure}{0.49\linewidth}
    \includegraphics[width=\linewidth]{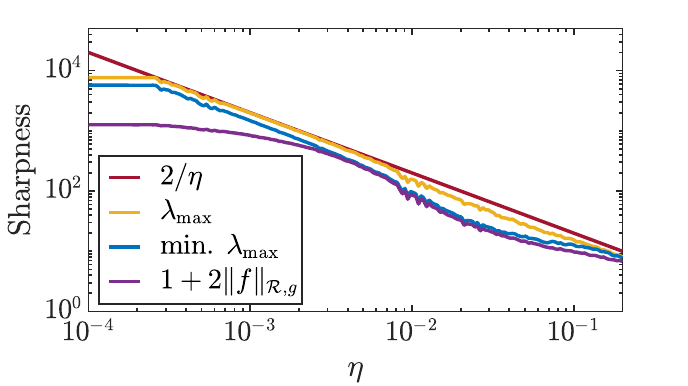}
    \caption{Sharpness versus step size}
    \label{fig:MNIST_1}
\end{subfigure}
\begin{subfigure}{0.49\linewidth}
    \includegraphics[width=\linewidth]{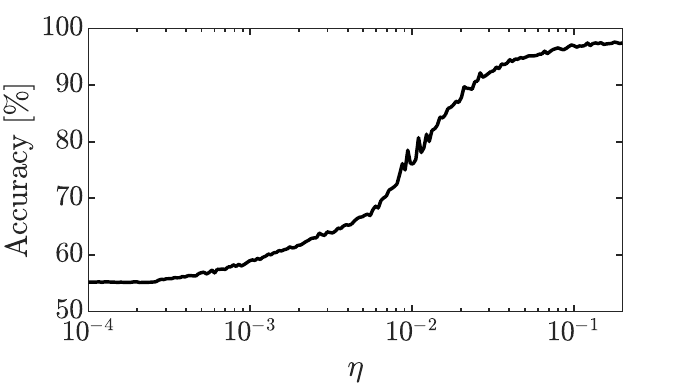}
    \caption{Val. accuracy vs. step size}
    \label{fig:MNIST_2}
\end{subfigure}
\caption{\textbf{Validating the bounds on MNIST.} We trained a single hidden-layer ReLU network for binary classification on two classes from MNIST using SGD (see Sec.~\protect\ref{Sec:Experiments}). Panel \protect\subref{fig:MNIST_1} depicts the sharpness versus the step size $\eta$. Here as $\eta$ increases, the minima get flatter in parameter space (yellow curve), which translates to smoother predictors in function space (purple curve). Panel \protect\subref{fig:MNIST_2} shows the performance on the validation set. Here the trained model generalizes better as the step size increases.}
\label{Fig:MNIST}
\vspace{-0.4cm}
\end{figure}

\section{Related work}\label{sec: Related_work_short}

Dynamical stability analysis was applied to neural network training in several works. 
In particular, \citet{nar2018step} analyzed Lyapunov stability
of one hidden-layer ReLU networks \emph{without bias}. They proved a bound on the network's output which depends on the step size and training data and implies that the network's output should be smaller for training samples with larger magnitude. 
\citet{mulayoff2020unique} characterized the flattest minima for linear nets and showed that these minima have unique properties. Yet, in their setting all minima implement the same input-output function. Thus, their results only show that SGD is biased toward certain \emph{implementations} of the same function, whereas our result shows that SGD is biased toward certain \emph{functions}.

\citet{wu2018sgd} proved a sufficient condition for dynamical stability of SGD in expectation using second moment. \citet{ma2021linear} extended their result by showing a necessary and sufficient condition for dynamical stability in higher moments. 
In addition, the authors combined this condition with the multiplicative structure of neural nets to prove an upper bound on the Sobolev seminorm of the model's input-output function at stable interpolating solutions. 
Their upper bound extends to deep nets, yet it depends on the norm of the first layer of the network, which in general can be large.

\citet{mulayoffNeurips} characterized the stable solutions of SGD for \emph{univariate} single hidden-layer ReLU networks with the square loss. Our Thm.~\ref{Thm: Properties of twice differentiable stable solutions} is the natural extension of \cite[Thm.~1]{mulayoffNeurips} to the multivariate case. To prove it, we combine the proof technique of lower bounding the top eigenvalue of the Hessian, used by \citet{mulayoffNeurips}, with the Radon transforms analysis used by \citet{ongie2020function}. Combining these techniques is not \emph{a priori} trivial, since Radon transform was not used before for Hessian analysis. Also, it required several subtle steps that are not encountered in the univariate setting nor in \citep{ongie2020function} (\eg working with the inverse of the dual Radon transform to obtain the primal space representation).

\citet{ongie2020function} studied the space of functions realizable as infinite-width single hidden-layer ReLU nets with bounded weights norm. Their settings assumes \emph{explicit} regularization, \ie min-norm solution, whereas here we derived our results for SGD \emph{without} regularization, via implicit bias. On the technical level, they introduced the ``$\Rad$-norm''  $\|\cdot\|_{\Rad}$ that is closely related to the stability norm. 
Particularly, $\|\cdot\|_{\Rad} = \|\cdot\|_{\Rad,g}$, for $g =\vect{1}$ the constant $ 1 $ function.
They proved similar results of depth separation and approximation guarantees, shown here in Secs.~\ref{Sec:Depth separation}-\ref{Sec:Approximations}.  More related work in App.~\ref{App:AdditionalRelatedWork}.


\section{Conclusion}
Large step sizes are often used to improve generalization \citep{li2019towards}. This work suggests an explanation to this practice. Specifically, we showed that large step sizes lead to smaller stability norm and thus can bias towards smooth predictors in shallow multivariate ReLU networks. We find the smoothness measure depends on the data via specific functions $g$ or $\rho$, and exemplify their properties. Moreover, we studied the approximation power of ReLU networks that correspond to stable solutions. Although shallow networks are universal approximators, we proved that stable solutions of these networks are not. Namely, there is a function that cannot be well-approximated by stable single hidden-layer ReLU networks trained with a non-vanishing step size. Yet we showed that the same function can be realized as a stable two hidden-layer network, leading to a depth separation result. This result can explain the success of deep models over shallow ones. Finally, we gave approximation guarantees for stable shallow ReLU networks. In particular, we proved that any Sobolev function can be approximated arbitrarily well using GD with single hidden-layer ReLU networks.

\subsubsection*{Acknowledgments}
The research of RM was supported by the Planning and Budgeting Committee of the Israeli Council for Higher Education, and by the Andrew and Erna Finci Viterbi Graduate Fellowship. GO was supported by NSF CRII award CCF-2153371. The research of DS was funded by the European Union (ERC, A-B-C-Deep, 101039436). Views and opinions expressed are however those of the author only and do not necessarily reflect those of the European Union or the European Research Council Executive Agency (ERCEA). Neither the European Union nor the granting authority can be held responsible for them. DS also acknowledges the support of Schmidt Career Advancement Chair in AI. TM was supported by grant 2318/22 from the Israel Science Foundation and by the Ollendorff Center of the Viterbi Faculty of Electrical and Computer Engineering at the Technion.

\clearpage

\bibliography{myBib}

\bibliographystyle{iclr2023_conference}


\clearpage

\appendix

\section{Additional discussion}\label{App:FAQ1}

\paragraph{The independence of Lemma~\ref{lemma:StabilityNecessaryCondition} and Theorem~\ref{Thm: Properties of twice differentiable stable solutions} on the batch size $B$.}
Theorem~\ref{Thm: Properties of twice differentiable stable solutions} relies on Lemma~\ref{lemma:StabilityNecessaryCondition} (see App.~\ref{App:StableTheoremProof}), which was proved in \citep{mulayoffNeurips}. This lemma states that if a minimum $ \params^* $ is linearly stable for SGD with batch-size $B$, then the Hessian $ \boldsymbol{H} $ of the loss at $ \params^* $ must satisfy $ \lambda_{\max}( \boldsymbol{H} ) \leq 2/\eta $, where $ \eta $ is the step-size. Importantly, this necessary condition holds true for any batch size $B$, and thus Theorem~\ref{Thm: Properties of twice differentiable stable solutions} is independent of $B$. We note, however, that the precise stability threshold of SGD might depend on $B$, yet the important points to notice are:
(1) Here all we need is a necessary condition, and Lemma~\ref{lemma:StabilityNecessaryCondition} provides a simple bound that holds for any $B$.
(2) Empirical evidence shows that there is not much room for improvement upon this batch-size-independent bound in real-world settings. Specifically, for practical batch sizes, the gap between ${2}/{\eta}$ and the stability threshold of SGD is often very small (see Fig.~\ref{Fig:MNIST} in our paper and Figures 2-3 in \citep{gilmer2021loss}).

The proof of Lemma~\ref{lemma:StabilityNecessaryCondition} is actually quite short and easy to follow (see \cite[App. II]{mulayoffNeurips}). The idea is that if $ \params^* $ is an $ \varepsilon $ linearly stable minimum, then by definition we have 
\begin{equation}
	\limsup\limits_{t \to \infty} \mathbb{E}[ \Vert  \params_t - \params^* \Vert ] \leq \varepsilon,
\end{equation}
where $  \{ \params_t\}_{t = 0}^{\infty} $ are governed by the linearized  stochastic dynamics given in Eq.~\eqref{eq:LinearStabilityDef}. Using Jensen’s inequality, for all $ t>0  $ we get $ \Vert  \mathbb{E}[ \params_t] - \params^* \Vert  \leq \mathbb{E}[ \Vert  \params_t - \params^* \Vert ]  $. Thus,
\begin{equation}
	\limsup\limits_{t \to \infty}  
	{\Vert  \mathbb{E}[ \params_t ] - \params^* \Vert }
	\leq { \limsup\limits_{t \to \infty} } \mathbb{E}[ \Vert  \params_t - \params^* \Vert ] \leq \varepsilon .
\end{equation}
Note that under the linearized dynamics, $  \{ \mathbb{E}[ \params_t ] \}_{t = 0}^{\infty} $ are precisely GD steps. Therefore, we have that if $  \params^* $ is linearly stable for SGD, then it must be linearly stable also for GD. Now, a well-known fact is that $ \params^* $ is linearly stable for GD if and only if $ \lambda_{\max} (\boldsymbol{H} ) \leq 2/\eta $ . This is how we get the necessary condition in Lemma~\ref{lemma:StabilityNecessaryCondition}, which does not depend on the batch size.

\paragraph{The independence of Lemma~\ref{lemma:StabilityNecessaryCondition} and Theorem~\ref{Thm: Properties of twice differentiable stable solutions} on $\varepsilon$.} Lemma~\ref{lemma:StabilityNecessaryCondition} states that a necessary condition for a twice differentiable minimum to be $\varepsilon$ linearly stable is $\smash[u]{\lambda_{\max}\left(\nabla^2 \loss(\params^*)\right) \leq {2}/{\eta}.}$ That is, the condition does not depend on $\varepsilon$ which might seem not intuitive. However, the reason Lemma~\ref{lemma:StabilityNecessaryCondition} does not depend on $\varepsilon$ is because it refers to linear stability, opposed to non-linear dynamical stability. In linear stability for twice-differentiable minima, all we care about is the second-order Taylor approximation of the loss at the minimum. In see previous paragraph we explained that Lemma~\ref{lemma:StabilityNecessaryCondition} gives a necessary condition through reduction to GD. Now, when applying GD on a quadratic loss (with a PSD matrix), for any $\varepsilon>0$ only one of two things can happen:
\begin{enumerate}
	\item Either $ \exists \params_0 \in   \mathcal{B}_{\varepsilon}( \params^*):\   \limsup\limits_{t \to \infty}\Vert  \params_t - \params^* \Vert = + \infty $ (unstable for any $ \varepsilon >0 $),
	\item or $  \forall \params_0 \in \mathcal{B}_{\varepsilon}( \params^*):    \qquad  \limsup\limits_{t \to \infty}  \Vert  \params_t - \params^* \Vert   \leq  \Vert  \params_0 - \params^*  \Vert  \leq \varepsilon  $ (stable for any $ \varepsilon >0 $).
\end{enumerate}
In any outcome, the result does not depend on $ \varepsilon$, and therefore $ \varepsilon $ does not appear in the result of Lemma~\ref{lemma:StabilityNecessaryCondition}. Note that for non-differentiable minima, $ \varepsilon $ does affect linear stability in GD, however Lemma~\ref{lemma:StabilityNecessaryCondition} only refers to twice-differentiable minima.

Theorem~\ref{Thm: Properties of twice differentiable stable solutions} is based on Lemma~\ref{lemma:StabilityNecessaryCondition}, and therefore does not depend on $\varepsilon$. Yet, beyond this technical reasoning, it is important to note that here we consider \emph{interpolating solutions}. For those solutions, the global minimum of the loss is also a global minimum w.r.t.\@ each data sample $(\xx_j,y_j)$ separately. Therefore, despite the stochasticity of SGD, every step points towards a global minimum. This implies that if the stability criterion is satisfied, then SGD converges to the minimum ($\limsup\limits_{t \to \infty} \mathbb{E}[ \Vert  \params_t - \params^* \Vert ]=0 $) and if it is not satisfied, then SGD repels from the minimum ($ \limsup\limits_{t \to \infty} \mathbb{E}[ \Vert  \params_t - \params^* \Vert ] =\infty$ for the linearized dynamics). This is also seen in simulations where models are overfit to training data using SGD, \eg \citep{ma2018power}. Particularly, in our simulations the loss always converged to $0$ when it converged (we arbitrarily decided to stop each run when the loss dropped below $10^{-8}$). In the general case of non-interpolating solutions, the expected final distance to the minimum in mini-batch SGD ($ \limsup\limits_{t \to \infty} \mathbb{E}[ \Vert  \params_t - \params^* \Vert ] $) can be a strictly positive finite number, and therefore in those cases $\varepsilon$ does play a role. 

\paragraph{Large step size training and warmup.} While Theorem~\ref{Thm: Properties of twice differentiable stable solutions} applies to any positive step size, it is most interesting when considering large step sizes. High learning rates are standard practice, as they are associated with good generalization \citep{li2019towards}. However, there are cases, \eg large initialization, in which high learning rate might cause the training to diverge. In these cases, a learning rate warmup is applied, enabling training with large step sizes. 

\paragraph{Learning rate decay.} Practitioners often work with learning rate schedule, which typically reduces the step size toward the end of training. In this scenario, $ 2/\eta $ can be quite high at the end, making Theorem~\ref{Thm: Properties of twice differentiable stable solutions} loose. Here, empirical evidence shows that when reducing the step size at a late stage, the sharpness of the obtained minimum is often still controlled by the initial step size, \eg Figs. 1 and~3 in \citep{gilmer2021loss}. Moreover, although learning rate decay is a popular technique, there are other popular training schemes in which the learning rate is not reduced, \eg \citep{smith2018don}. Lastly, as for the depth separation results (Sec.~\ref{Sec:Depth separation}) and the approximation results (Sec.~\ref{Sec:Approximations}), they apply for \emph{any} fixed positive step size. In other words, the learning rate decay does not affect these results.

\paragraph{Initialization independent results.} Our results are independent of the initialization. In the past it was shown that initialization can have large effect on which minimum GD converges under certain conditions. However, this do not contradict our results, as explained below.

For very small step sizes, the GD trajectory follows that of gradient flow (GF). Under certain conditions, \eg infinite width or vanishing initialization, it was shown that the network does not change much on the evolution trajectory of GF. In this case, the initialization dominates the properties of the obtained solution. This is known as kernel regime or Neural Tangent Kernel (NTK) regime \citep{jacot2018neural,chizat2019lazy}. However, for practical step sizes and standard initialization, recent work \citep{cohen2020gradient} showed that GD typically deviates from the GF trajectory, while entering the Edge of Stability regime. This occurs when the stability threshold is achieved during training, \ie $ \lambda_{\max}(\nabla^2 \loss(\params_t)) \geq 2/\eta $ for some $ t>0 $. In this case, GD converges to a different minimum than GF, \ie GD escapes the NTK regime. Similar behavior was shown also for SGD \citep{gilmer2021loss}.

\paragraph{Theorem~\ref{Thm: Properties of twice differentiable stable solutions} Proof idea.}
Theorem~\ref{Thm: Properties of twice differentiable stable solutions} is a result of two properties of twice-differentiable minima. First, we show in Lemma~\ref{Lem: lambda_max lower bound} in the appendix that these minima satisfy $\lambda_{\max}(\nabla^2_\params \loss ) \geq 1+2\snorm{f}{g}$. Second, we know from Lemma~\ref{lemma:StabilityNecessaryCondition} that stable minima satisfy \smash{$\lambda_{\max}(\nabla^2_\params \loss )\leq2/\eta$}. Together, these properties imply that $2/\eta\geq1+2\snorm{f}{g}$, from which we get the result of the theorem, $\snorm{f}{g}\leq1/\eta-1/2$. Note that twice-differentiable minima correspond to functions whose knots do not coincide with any training point. Although we prove our result only for such functions, we observed that in practice the condition $\lambda_{\max}(\nabla^2_\params \loss )\leq2/\eta$ is always met around minima to which SGD converges (see Sec.~\ref{Sec:Experiments} and the next paragraph). For full derivation of the theorem see App.~\ref{App:StableTheoremProof}.

\paragraph{Theorem~\ref{Thm: Properties of twice differentiable stable solutions} assumption and minima that are not twice-differentiable.}
Theorem~\ref{Thm: Properties of twice differentiable stable solutions} assumes that the knots of $f$ do not coincide with any training point. 
This assumption is done for technical simplicity, that is to ensure the minimum is twice-differentiable. In practice, we observed that usually only a small fraction of the knots coincide with training points. For example, in our MNIST experiment we had only 30 training points coinciding with knots of $f$ (out of $n = 512$ training points and $k=200$ neurons). Note that the twice-differentiability assumption in Theorem \ref{Thm: Properties of twice differentiable stable solutions} is required for Lemma \ref{lemma:StabilityNecessaryCondition} to hold  as Theorem \ref{Thm: Properties of twice differentiable stable solutions} invokes Lemma \ref{lemma:StabilityNecessaryCondition}. However, we observed that in practical settings, Lemma \ref{lemma:StabilityNecessaryCondition} still applies in settings where the assumption is violated. This can be appreciated by the red curve upper bounding the orange curve in Figs. \ref{fig:sharpness} and \ref{fig:MNIST_1}. Namely, despite the fact that the minima are not always twice-differentiable in our experiments, the stability criterion for SGD is still observed to be upper bounded by $2/\eta$.

It is possible to extend the analysis to minima that are not twice-differentiable by using the same method as in \citep{mulayoffNeurips}. However, this makes the analysis much more complex.

\paragraph{The meaning of $ \min \lambda_{\max} $ in figures \ref{fig:sharpness} and \ref{fig:MNIST_1}.}
The curve $ \min \lambda_{\max} $ shows the sharpness of the flattest implementation of each solution $ f $. In more detail, as discussed earlier, Theorem~\ref{Thm: Properties of twice differentiable stable solutions} is a result of two properties of twice-differentiable minima. On the one hand, we know from Lemma~\ref{lemma:StabilityNecessaryCondition} that stable minima satisfy $\lambda_{\max} (\nabla^2_\params \loss ) \leq 2 / \eta$. On the other hand, Lemma~\ref{Lem: lambda_max lower bound} in App.~\ref{App:StableTheoremProof} asserts that these minima satisfy $\lambda_{\max} (\nabla^2_\params \loss ) \geq 1+2\Vert f \Vert_{ \Rad, g}$. Note that each function $ f \in \mathcal{F}_k $ has multiple implementations, \ie different minima in parameter space which all correspond to $ f $. These minima can have different sharpness. Here, we can look at the best implementation, \ie a solution to  $ \min \lambda_{\max} $, where the minimum is taken over all loss' minima $ \{ \params  \}  $ that implement $ f $. Overall, given a minimum $ \params  $ with a corresponding function $ f $, we have
\begin{equation}
	1+2\Vert f \Vert_{ \Rad, g} 
	\leq \min \lambda_{\max}(\nabla^2_\params \loss )
	\leq \lambda_{\max}(\nabla^2_\params \loss )
	\leq 2/\eta.
\end{equation}
These inequalities give us the result of Theorem~\ref{Thm: Properties of twice differentiable stable solutions}, $1+2\Vert f \Vert_{ \Rad, g} \leq 1/\eta-1/2$. To understand the tightness of each part of our analysis, we added to the plots $ \lambda_{\max} $ and $ \min \lambda_{\max} $. In both figures, $ \lambda_{\max} (\nabla^2_\params \loss ) $ equals or just below $ 2 / \eta $, a phenomenon known as edge of stability \citep{cohen2020gradient}. Additionally,  $ \min \lambda_{\max}(\nabla^2_\params \loss ) $ is close to $ 1+2\Vert f \Vert_{ \Rad, g}  $ in these experiments. Yet, Figure~\ref{Fig:SyntheticData} shows that $ \lambda_{\max} (\nabla^2_\params \loss ) $ can be quite larger than $ \min \lambda_{\max}(\nabla^2_\params \loss ) $, meaning that there exists a far flatter minimum that implements the same function. This fact was used by \citet{dinh2017sharp} to show that sharp minima can generalize.

\section{Additional related work}\label{App:AdditionalRelatedWork}
\paragraph{Implicit bias.} A long line of works studied the implicit bias of the training procedure in an attempt to better understand generalization in overparameterized models. For the classification setting, in the case of linear prediction function, linearly separable data, and exponentially tailed loss functions (\eg logistic and exponential), \citet{soudry2018implicit} showed that GD converges in the direction of the SVM solution. This result was later extended to linear fully connected and convolutional neural networks \citep{gunasekar2018implicit, ji2018gradient}, more loss functions \citep{nacson2019convergence, ji2021characterizing}, SGD optimization algorithm \citep{nacson2019stochastic}, other generic optimization methods \citep{gunasekar2018characterizing}, non-separable data \citep{ji2019implicit}, and homogeneous prediction functions \citep{nacson2019lexicographic, lyu2019gradient, JiDST20}. However, all those results do not depend on the step size, except for the requirement that it be sufficiently small. 

Another line of works studied the implicit bias in the context of linear models with quadratic loss such as matrix factorization \citep{gunasekar2017implicit, li2018algorithmic, arora2018optimization, arora2019implicit, belabbas2020implicit, Eftekhari2020ImplicitRI, gidel2019implicit, ma2018implicit,woodworth2020kernel,azulay2021implicit}. However, all of these works relied on either small or infinitesimal step size (\ie gradient flow). Thus, they do not capture how the step size affects the implicit bias. Moreover, they assumed a manifold property \citep{azulay2021implicit}. As pointed out by \citet{razin2020implicit} and \citet{vardi2021implicit}, these assumptions do not always apply. In contrast, our result is based on a stability condition of SGD, which depends on the step size and does not require the manifold assumption.

\paragraph{How the step size affects the implicit bias.} To investigate  the implicit bias of the step size, \citet{barrett2021implicit} and \citet{smith2021on} suggested using a modified loss. Under this modified loss, gradient flow approximates the trajectory of (S)GD on the original loss. However, the step size should be sufficiently small for the approximation to hold true. Moreover, the induced regularization term this method yields is expressed in terms of the model's parameters. Additionally, this term increases linearly with the step size and vanishes at any stationary point.

\paragraph{Radon transform analysis of shallow networks.} Radon transform analysis has previously been used in studies of the approximation capabilities of single hidden-layer neural networks with bounded activation functions \citep{carrolldickinson,ito1991representation}, and more general activation functions in the ridgelet framework \citep{candes1999ridgelets,candes1999harmonic}. More recently, \citet{sonoda2017neural} used ridgelet transform analysis to study the approximation properties of two-layer neural networks with unbounded activation functions, including the ReLU. 

Parts of this work extend results by \citet{ongie2020function}, which defined a similar Radon-domain seminorm (the ``$\Rad$-norm'') to determine the space of functions realizable as an infinite-width single hidden-layer ReLU networks with square-summable weights. \citet{parhi2021banach} proved a representer theorem for single hidden-layer ReLU networks using the $\Rad$-norm. Finally, an $L^2$ version of the $\Rad$-norm is used by \citet{jin2020implicit} to describe the function space implicit bias of training a single hidden-layer ReLU network using gradient descent in the neural tangent kernel regime.

\section{The Radon transform}\label{App:Radon}

For a function $f:\R^d \to \R$, the $d$-dimensional Radon transform $\Rad f$ is the collection of all integrals of $f$ over \mbox{$(d-1)$}-dimensional affine hyperplanes in $\R^d$. Every hyperplane can be parametrized by a pair $(\vv,b)\in\Sb^{d-1}\times \R$, where $\vv$ is a unit normal to the hyperplane and $b \in \R$ is its distance from the origin. Therefore, the Radon transform $\Rad f$ is the function over $(\vv,b)\in\Sb^{d-1}\times\R$ given by
\begin{equation}
    \Rad f \rb{\vv,b} \triangleq \int_{\vv^\top\xx=b} f(\xx)\dif s(\xx),
\end{equation}
where $\dif s(\xx)$ represents integration with respect to the $(d-1)$-dimensional surface measure on the hyperplane $\vv^\top\xx=b$. Note that the Radon transform is an even function, \ie $\Rad f (\vv,b)=\Rad f (-\vv,-b)$, since $(\vv,b)$ and $(-\vv,-b)$ describe the same hyperplane.

The dual Radon transform $\Rad^*$ maps functions defined on $\Sb^{d-1}\times \R$ to functions on $\R^d$ by
\begin{equation}
    \Rad^*\varphi\rb{\xx}\triangleq \int_{\Sb^{d-1}}\varphi\rb{\vv,\vv^\top \xx}\dif s(\vv)
\end{equation}
for all $\xx\in\R^d$,
where $\dif s(\vv)$ represents integration with respect to the $\rb{d-1}$-dimensional surface measure on the unit sphere $\Sb^{d-1}$.

The Radon transform and its dual are invertible over spaces of smooth functions via the inversion formulas:
\begin{align}\label{eq: radon inverse}
    \Rad^{-1} & = \gamma_d (-\Delta)^{\frac{d-1}{2}}\Rad^*,\\
    \left(\Rad^{*}\right)^{-1} & = \gamma_d \Rad (-\Delta)^{\frac{d-1}{2}},
\end{align}
where the fractional Laplacian operator $(-\Delta)^{\frac{d-1}{2}}$ is defined by application of a ramp function in Fourier domain (\ie multiplication by $ \| \boldsymbol{\omega} \|^{d-1} $ in Fourier domain), and
\begin{equation}
    \gamma_d=\frac{1}{2(2\pi)^{d-1}}
\end{equation}
is a dimension dependent constant.

These transforms may be extended to spaces of distributions (\eg Dirac deltas) in a standard way \citep{ludwig1966radon, helgason1999radon}, which we summarize in App.~\ref{App:Distributional framework}. Important for this work is the distributional dual inverse Radon transform $(\Rad^*)^{-1}$, which maps a distribution defined over Euclidean space $\R^d$ to a distribution in Radon domain $\Sb^{d-1}\times \R$.

\section{Distributional framework}\label{App:Distributional framework}
Let $f:\R^d\to \R$ be any locally integrable function. Then its Laplacian $\Delta f$ can be interpreted as a tempered distribution, meaning that $\Delta f$ is defined via the duality pairing
\begin{equation}
	\langle \Delta f, \varphi \rangle \triangleq \langle f, \Delta \varphi \rangle_{\R^d} = \int_{\R^d} f(\xx)\,\Delta\varphi(\xx)\,\dif{\xx},
\end{equation}
where $\varphi$ is any Schwartz test function on $\R^d$, \ie a smooth function such that the function and its partial derivatives of all orders have sufficiently fast decay at infinity; denote this space of functions by~$\mathcal{S}(\R^d)$. For example, if $f$ consists of a single ReLU unit, \ie $f(\xx) =  \sigma(\vv^\top\xx - b)$ such that $\|\vv\|=1$, then it is easy to show $\Delta f = \delta(\vv^\top\xx - b)$, meaning $\langle \Delta f, \varphi \rangle = \int_{\{x : \vv^\top\xx = b\}} \varphi(\xx) \dif{\xx} = \mathcal{R}\varphi(\vv,b)$. In other words, $\Delta f$ is the distribution given by evaluation of the Radon transform of a test function at the point $(\vv,b) \in \Sb^{d-1}\times\R$.

Next, we describe how to understand the operator $(\Rad^*)^{-1}$ in a distributional sense. Let $\mathcal{S}_H(\Sb^{d-1}\times \R)$ denote the image of Schwartz functions $\mathcal{S}(\R^d)$ under the classical Radon transform $\mathcal{R}$. The space $\mathcal{S}_H(\Sb^{d-1}\times \R)$ is characterized in \citep{ludwig1966radon, helgason1999radon}; it is the space of all even Schwartz functions defined on $\Sb^{d-1}\times \R$ that additionally satisfy some moment conditions\footnote{Specifically, for all positive integers $k$, the function $\vv \to \int_{b\in\R} \phi(\vv,b)b^k \dif b$ needs to be a homogeneous polynomial in $\vv$ of degree $k$.}. It is also shown by \citet{ludwig1966radon} that the classical inverse Radon transform $\mathcal{R}^{-1}$ is a linear homeomorphism of $\mathcal{S}_H(\Sb^{d-1}\times \R)$ onto $\mathcal{S}(\R^d)$. Therefore, we may define its distributional transpose $(\mathcal{R}^{-1})^*=(\Rad^*)^{-1}$ applied to any tempered distribution $h$ by
\begin{equation}
	\langle (\Rad^*)^{-1} h, \phi\rangle = \langle h, \mathcal{R}^{-1} \phi\rangle
\end{equation}
for all $\phi \in \mathcal{S}_H(\Sb^{d-1}\times \R)$. Note that $(\Rad^*)^{-1} h$ is a distribution belonging to $\mathcal{S}_H'(\Sb^{d-1}\times \R)$, the topological dual of $\mathcal{S}_H(\Sb^{d-1}\times \R)$.

Returning to the example where $f$ is a single ReLU unit, \ie $f(\xx) = \sigma(\vv^\top\xx-b)$ with $\|\vv\| = 1$, then for any test function $\phi \in \mathcal{S}_H(\Sb^{d-1}\times \R)$ we have
\begin{equation}
	\langle (\Rad^*)^{-1} \Delta f, \phi\rangle = \langle \Delta f, \mathcal{R}^{-1} \phi\rangle = [\mathcal{R}\mathcal{R}^{-1}\phi](\vv,b) =  \phi(\vv,b).
\end{equation}
This shows $(\Rad^*)^{-1} \Delta f = \delta_{(\vv,b)}$, \ie a Dirac delta centered at $(\vv,b)$. If $f(\xx) = \sum_{i=1}^k a_i \sigma(\vv_i^\top\xx - b_i) + c$ is any single hidden-layer ReLU network such that $\|\vv_i\|=1$ for all $i=1,...,k$, then by linearity we have
\begin{equation}
	(\Rad^*)^{-1} \Delta f = \sum_{i=1}^k a_i \delta_{(\vv_i,b_i)}.
\end{equation}
Finally, we may define the total variation $\|\cdot\|_{\textrm{TV}}$ for any distribution $\alpha \in \mathcal{S}'_H(\Sb^{d-1}\times \R)$ by
\begin{equation}
	\|\alpha\|_{\mathrm{TV}} \triangleq \sup_{\phi \in \mathcal{S}_H(\Sb^{d-1}\times \R)}|\langle \alpha, \phi\rangle|.
\end{equation}
If $\|\alpha\|_{\mathrm{TV}}$ is finite, then $\alpha$ is a distribution of order-$0$. In this case, since $\mathcal{S}_H(\Sb^{d-1}\times \R)$ is dense in the space of even and continuous functions on $\Sb^{d-1}\times \R$ that vanish at infinity, $\alpha$ can be extended uniquely to an even signed measure on $\Sb^{d-1}\times \R$, and $\|\alpha\|_{\mathrm{TV}}$ is equal to the total variation norm of~$\alpha$.

\section{Proof of Theorem \ref{Thm: Properties of twice differentiable stable solutions}}\label{App:StableTheoremProof}

In the proof of the theorem we use the following lemma (for the proof of this lemma see Appendix~\ref{App:Lower bound proof}).
\begin{lemma}[Top eigenvalue lower bound] \label{Lem: lambda_max lower bound}
	Let $ f \in \domain $ be a twice-differentiable minimizer of the loss function, then
	\begin{equation}
		\lambda_{\max}\left(\nabla^2_\params \loss \right) \geq 1+2\snorm{f}{g},
	\end{equation}
	where $\snorm{\cdot}{g}$ denotes the stability norm.
\end{lemma}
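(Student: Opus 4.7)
The plan is to apply the variational characterization $\lambda_{\max}(\nabla^2_\params \loss(\params^*)) \geq \uu^\top \nabla^2_\params \loss(\params^*) \uu / \|\uu\|^2$ to a carefully constructed test direction $\uu$, and to show that the resulting Rayleigh quotient is at least $1 + 2\snorm{f}{g}$.

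\emph{Gauss--Newton reduction.} Since $f$ interpolates the training set and the knots of $f$ avoid the training points, $\loss$ is twice differentiable at $\params^*$ and the residual second-order term vanishes, yielding $\nabla^2_\params \loss(\params^*) = \frac{1}{n}\sum_{j=1}^n \nabla_\params f(\xx_j)\,\nabla_\params f(\xx_j)^\top$. Hence $\uu^\top \nabla^2_\params \loss(\params^*) \uu = \E[(\uu^\top \nabla_\params f(\vect X))^2]$ with $\vect X$ uniform on the training set. Writing $f(\xx) = \sum_{i=1}^k a_i\sigma(\vv_i^\top\xx - b_i) + c$ with $\|\vv_i\|=1$ and, by positive homogeneity of ReLU, parameterizing the $i$-th hidden unit as $(\weightVec{1}_i, b_i^{(1)}, w_i^{(2)}) = (\alpha_i\vv_i,\,-\alpha_i b_i,\, a_i/\alpha_i)$ for a parameterization-dependent scale $\alpha_i > 0$, the partial gradients take the compact form $\nabla_{\weightVec{1}_i} f = w_i^{(2)}\Ind_i\xx$, $\nabla_{b_i^{(1)}} f = w_i^{(2)}\Ind_i$, $\nabla_{w_i^{(2)}} f = \alpha_i\sigma_i$, $\nabla_{b^{(2)}} f = 1$, with $\Ind_i := \Ind\{\vv_i^\top\vect X > b_i\}$ and $\sigma_i := \sigma(\vv_i^\top\vect X - b_i)$. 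Because $f$ is invariant under the flip $(\vv_i, b_i, a_i) \mapsto (-\vv_i, -b_i, -a_i)$ while $\tilde g(\vv_i, b_i)$ and $\tilde g(-\vv_i, -b_i)$ are swapped, I may without loss of generality choose the orientation with $g(\vv_i, b_i) = \tilde g(\vv_i, b_i)$ for each $i$.

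\emph{Witness direction.} With $s_i = \mathrm{sign}(a_i)$, I would take $\uu$ with $b^{(2)}$-component equal to a scalar $c$ and $i$-th neuron block equal to $s_i(\mu_i\vv_i,\,-\mu_i b_i,\,\nu_i)$, where $c, \mu_i, \nu_i$ are free scalars. A direct calculation using the identity $\Ind_i\,(\vv_i^\top\xx - b_i) = \sigma_i(\xx)$ produces the parameterization-invariant formula
\begin{equation*}
    \uu^\top \nabla_\params f(\xx) = c + \sum_{i=1}^k s_i\bigl(\mu_i w_i^{(2)} + \nu_i \alpha_i\bigr)\,\sigma_i(\xx),
\end{equation*}
while $\|\uu\|^2 = c^2 + \sum_i\bigl(\mu_i^2(1+b_i^2) + \nu_i^2\bigr)$. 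The per-neuron amplification $\kappa_i := s_i(\mu_i w_i^{(2)} + \nu_i\alpha_i)$ can be prescribed independently, and the minimum of $\mu_i^2(1+b_i^2) + \nu_i^2$ subject to fixed $\kappa_i$ is obtained by Cauchy--Schwarz. After substitution and applying AM--GM to the resulting $\alpha_i^2$ and $a_i^2/\alpha_i^2$ terms, one obtains a parameterization-free lower bound on the Rayleigh quotient as a function of the free scalars $c$ and $\{\kappa_i\}$.

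\emph{Extracting the additive bound.} The hard part is tuning $c$ and $\{\kappa_i\}$ so that the lower bound separates additively as $1 + 2\sum_i|a_i|g(\vv_i,b_i)$. I would tune $c$ first: the $b^{(2)}$-component contributes $c^2$ to both numerator and denominator, and its cross-term $2c\,\E[\sum_i\kappa_i\sigma_i]$, once optimized over $c$, produces the ``$1$'' term by the same mechanism as in the univariate proof of \citet{mulayoffNeurips}. For the sum over neurons, aligning the $(\weightVec{1}_i, b_i^{(1)})$-subdirection of $\uu$ with the augmented conditional mean $(\bar{\xx}_i, 1) \in \R^{d+1}$, where $\bar{\xx}_i = \E[\vect X\mid \vv_i^\top\vect X > b_i]$, and applying Cauchy--Schwarz in this augmented space introduces the distinctive factor $\sqrt{\|\bar{\xx}_i\|^2 + 1}$. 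Combined with $\E[\sigma_i] = p_i m_i$ (where $p_i = \prob(\vv_i^\top\vect X>b_i)$ and $m_i = \E[\vv_i^\top\vect X - b_i\mid\vv_i^\top\vect X>b_i]$) and an extra factor of $p_i$ from the probability of the activation event, this reproduces exactly $\tilde g(\vv_i,b_i) = p_i^2 m_i\sqrt{\|\bar{\xx}_i\|^2 + 1}$ as the weight attached to each $|a_i|$. The main obstacle is controlling the cross-terms $\E[\sigma_i\sigma_j]$ between distinct neurons: since $\sigma_i \ge 0$ these are nonnegative, and combining them with the Jensen bound $\E[\sigma_i^2] \ge p_i m_i^2$ should preserve the separation into the sum $2\sum_i|a_i|g(\vv_i,b_i)$. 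The argument is the natural multivariate lift of the univariate construction of \citet{mulayoffNeurips}, with the Radon variables $(\vv_i, b_i)$ and the augmented data vector $(\vect X, 1)$ playing the roles of the sign and scalar data in the one-dimensional case.
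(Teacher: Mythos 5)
There is a real gap, and it sits in your witness direction. The neuron block you actually write down, $s_i(\mu_i\vv_i,\,-\mu_i b_i,\,\nu_i)$, aligns the $(\weightVec{1}_i,b^{(1)}_i)$-subdirection with $(\vv_i,-b_i)$, whereas your third paragraph (correctly) says it must be aligned with the augmented conditional mean $(\bar\xx_i,1)$; these are different vectors and only the second choice works. Concretely, with your stated block the neuron contributes $\kappa_i\sigma_i(\xx)$ to $\uu^\top\nabla_\params f(\xx)$ and $\mu_i^2(1+b_i^2)+\nu_i^2$ to $\|\uu\|^2$; optimizing the Jensen-lowered Rayleigh quotient $\bigl(c+\sum_i\kappa_i p_i m_i\bigr)^2/\bigl(c^2+\sum_i(\mu_i^2(1+b_i^2)+\nu_i^2)\bigr)$ over all free scalars and applying AM--GM attaches the weight $p_i^2 m_i^2/\sqrt{1+b_i^2}$ to $|a_i|$, not $\tilde g(\vv_i,b_i)=p_i^2m_i\sqrt{\|\bar\xx_i\|^2+1}$. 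The ratio of the two is $m_i/\sqrt{(1+b_i^2)(1+\|\bar\xx_i\|^2)}$, which can be arbitrarily small (take a knot near the edge of the data so that $m_i\to 0$), so the bound your construction actually yields does not dominate $1+2\snorm{f}{g}$. Two further problems: (i) the claimed invariance of $f$ under $(\vv_i,b_i,a_i)\mapsto(-\vv_i,-b_i,-a_i)$ is false --- the flip changes $f$ by the affine function $a_i(\vv_i^\top\xx-b_i)$ --- so you cannot choose the orientation; this happens to be harmless only because $\tilde g(\vv_i,b_i)\ge g(\vv_i,b_i)=\min\bigl(\tilde g(\vv_i,b_i),\tilde g(-\vv_i,-b_i)\bigr)$ by definition, which is all you need. (ii) Your plan for the cross-terms (drop $\E[\sigma_i\sigma_j]\ge 0$, keep $\E[\sigma_i^2]\ge p_im_i^2$ per neuron) does not reassemble into the additive form $1+2\sum_i|a_i|g(\vv_i,b_i)$; in particular the ``$1$'' and the $2c\,\E[\sum_i\kappa_i\sigma_i]$ term cannot be balanced this way as $c$ varies.

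All of these issues disappear if you apply Jensen once, globally: $\E\bigl[(\uu^\top\nabla_\params f(\vect X))^2\bigr]\ge\bigl(\E[\uu^\top\nabla_\params f(\vect X)]\bigr)^2$, whose optimal witness is $\uu\propto\E[\nabla_\params f(\vect X)]$, giving $\lambda_{\max}\ge\|\E[\nabla_\params f(\vect X)]\|^2$ with no cross-terms at all. The $(\weightVec{1}_i,b^{(1)}_i)$-block of this witness is $w^{(2)}_ip_i(\bar\xx_i,1)$ --- exactly the $(\bar\xx_i,1)$ alignment, producing $\sqrt{\|\bar\xx_i\|^2+1}$ --- and the remaining AM--GM between the $(w^{(2)}_i)^2$ and $\|\weightVec{1}_i\|^2$ terms is as you describe. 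This is precisely the paper's proof seen from the other side of the singular value problem: the paper takes the data-space test vector $\oneVec/\sqrt n$ in $\lambda_{\max}=\max_{\uu\in\Sb^{n-1}}\frac1n\|\bPhi\uu\|^2$ and gets $\frac1{n^2}\|\bPhi\oneVec\|^2=\|\E[\nabla_\params f(\vect X)]\|^2$ directly. You would also still need the final step $\sum_i|a_i|g(\vv_i,b_i)\ge\snorm{f}{g}$ (knots of distinct units may coincide), which your write-up omits.
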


Let $ f \in \domain $ be a stable solution of the loss function. Then, according to Definition~\ref{Def:Stable solution}, there exists a linearly stable minimum point $ \params \in \R^{(d+2)k+1} $ such that the network at this minimum implements~$ f $. Due to the fact that the knots of $ f $ do not contain any training point, we have that $ \params $ is a twice-differentiable minimum. From Lemma~\ref{lemma:StabilityNecessaryCondition}, since $ \params $ is a twice differentiable stable minimum then
\begin{equation}\label{eq:lambda max upper bound}
	\lambda_{\max}\left(\nabla^2_\params \loss \right) \leq \frac{2}{\eta}.
\end{equation}
On the other hand, from Lemma~\ref{Lem: lambda_max lower bound} we have that
\begin{equation}\label{eq:lambda max lower bound}
	\lambda_{\max}\left(\nabla^2_\params \loss \right) \geq 1+2\snorm{f}{g}.
\end{equation}
Using \eqref{eq:lambda max upper bound} and \eqref{eq:lambda max lower bound} we get
\begin{equation}
	\snorm{f}{g} \leq \frac{1}{\eta} - \frac{1}{2}.
\end{equation}


\section{Proof of Lemma \ref{Lem: lambda_max lower bound}}\label{App:Lower bound proof}

The proof of the Lemma consists of the following steps:
\begin{enumerate}
	\item Calculating $\nabla^2_{\params} \loss $, and showing that at a global minimum it takes the form $\nabla^2_{\params}\loss~=~\frac{1}{n}\bPhi\bPhi^\top$ (Appendix~\ref{Sec: hessian calc}).
	
	\item Lower bounding $ \lambda_{\max}(\nabla^2_\params \loss) $ by using
	\begin{equation}\label{eq:Lambda_max}
		\lambda_{\max}\left(\nabla^2_\params \loss\right)
		= \max_{\vv \in \Sb^{(d+2)k}} \vv^\top\left(\nabla^2_\params \loss\right) \vv
		= \max_{\vv \in \Sb^{(d+2)k} } \frac{1}{n}  \norm{\bPhi^\top \vv }^2
		= \max_{\uu \in \Sb^{n-1} } \frac{1}{n}  \norm{\bPhi \uu }^2,
	\end{equation}
	and lower bounding the right hand side (Appendix~\ref{Sec: lambda_max lower bound}).
	
	\item Simplifying the lower bound to obtain a more interpretable version which does not depend on the specific implementation of $f$ (Appendix~\ref{Sec: stability norm interpetation}).
\end{enumerate}
\subsection{Hessian computation}\label{Sec: hessian calc}
Recall that
\begin{equation}
	\loss(f) = \frac{1}{2n}\sum_{j=1}^{n}\left(f\left( \xx_{j}\right)-y_{j}\right)^{2},
\end{equation}
where
\begin{equation}
	f(\xx)=\sum_{i=1}^{k} w_{i}^{(2)}\sigma\left(\vect x^{\top}\vect w_{i}^{(1)}+b_{i}^{(1)}\right)+b^{(2)}.
\end{equation}
We denote
\begin{align}
	\WW^{(1)}&=\left[\ww_{1}^{(1)},\cdots,\ww_{k}^{(1)}\right]\in\R^{d\times k}, \quad
	\bbb^{(1)}=\left[b_{1}^{(1)},\cdots,b_{k}^{(1)}\right]^\top\in\R^{k}, \nonumber\\
	& \qquad \ww^{(2)}=\left[w_{1}^{(2)},\cdots,w_{k}^{(2)}\right]^\top\in\R^{k}, \quad
	b^{(2)}\in\R
\end{align}
and
\begin{equation}
	\params =
	\begin{bmatrix}
		\vectorization\left(\WW^{(1)}\right)\\
		\bbb^{(1)}\\
		\ww^{(2)}\\
		b^{(2)}
	\end{bmatrix}
	\in\mathbb{R}^{(d+2)k+1}.
\end{equation}

Using these notations, assuming that $ \params^* $ is a twice differentiable global minimum of $ \loss $,  we have that the gradient is
\begin{equation}
	\nabla_{\params}\loss = \frac{1}{n}\sum_{j=1}^{n}\left(f\left( \xx_{j}\right)-y_{j}\right)\nabla_{\params}f\left(\xx_j\right).
\end{equation}
The Hessian is given by
\begin{align}
	\nabla_{\params}^{2}\loss 
	& =\frac{1}{n}\sum_{j=1}^{n}\nabla_{\params}f\left(\vect x_{j}\right)\nabla_{\params}f\left(\vect x_{j}\right)^{\top}+\frac{1}{n}\sum_{j=1}^{n}\left(f\left(\vect x_{j}\right)-y_{j}\right)\nabla_{\params}^{2}f\left(\vect x_{j}\right)\nonumber\\
	& =\frac{1}{n}\sum_{j=1}^{n}\nabla_{\params}f\left(\vect x_{j}\right)\nabla_{\params}f\left(\vect x_{j}\right)^{\top},
\end{align}
where in the last transition we used $\forall j\in\bb{n}:f (\xx_{j}) = y_{j}$ (see Def.~\ref{Def:Solution}). From direct calculation we obtain
\begin{equation}
	\nabla_{\params}f\left(\vect x\right) =
	\begin{pmatrix}
		\vectorization\left(\frac{\partial f}{\partial \vect W^{(1)}}  \right)\\
		\nabla_{\vect b^{(1)}}f\\
		\nabla_{\vect w^{(2)}}f\\
		\nabla_{b^{(2)}}f
	\end{pmatrix}
	=
	\begin{pmatrix}
		\left(\vect w^{(2)}\odot\mathbb{I}\left(\vect x;\params\right)\right)\otimes\vect x\\
		\vect w^{(2)}\odot\mathbb{I}\left(\vect x;\params\right)\\
		\left(\left(\vect W^{(1)}\right)^{\top}\vect x+\vect b^{(1)}\right)\odot\mathbb{I}\left(\vect x;\params\right)\\
		1
	\end{pmatrix},
\end{equation}

where $\odot$ denotes the Hadamard product, $\otimes$ represents
the Kronecker product and $\mathbb{I}:\R^{d}\times\R^{(d+2)k+1}\to\{0,1\}^{k}$
is the activation pattern of all neurons for input $\xx$, namely
$[\mathbb{I}(\xx;\params)]_{i}=1$ if $\xx^{\top}\vect w_{i}^{(1)}+b_{i}^{(1)}>0$
and $[\mathbb{I}(\xx;\params)]_{i}=0$ otherwise.
Let us denote the tangent features matrix by
\begin{equation}
	\vect{\bPhi}=
	\begin{bmatrix}
		\nabla_{\params}f\left(\xx_{1}\right) & \nabla_{\params}f\left(\xx_{2}\right) & \cdots & \nabla_{\params}f\left(\xx_{n}\right)
	\end{bmatrix}
	\in\mathbb{R}^{(dk+2k+1)\times n}.
\end{equation}
Then the Hessian can be expressed as $  \nabla^2_\params \loss = \bPhi \bPhi^{\top}/n $, and its maximal eigenvalue can be written as
\begin{equation}\label{Eq: lambda max expression}
	\lambda_{\max}(\nabla^2_\params \loss) = \max_{\vv \in \Sb^{(d+2)k}} \vv^{\top}\nabla^2_\params \loss \vv = \max_{\vv \in \Sb^{(d+2)k} } \frac{1}{n}  \left\|{\bPhi^{\top} \vv }\right\|^2 = \max_{\uu \in \Sb^{n-1} } \frac{1}{n}  \norm{\bPhi \uu }^2.
\end{equation}

\subsection{Lower bounding the top eigenvalue}\label{Sec: lambda_max lower bound}
Continuing from the previous section's calculation, if we take $\vect u=\frac{1}{\sqrt{n}}\vect 1$, we obtain
\begin{align}
	&\max_{\uu\in\Sb^{n-1}}\frac{1}{n} \norm{\bPhi\uu}^{2} \nonumber\\
	& \ge \frac{1}{n^{2}}\norm{\bPhi \oneVec }^{2} \nonumber\\
	& =1+\frac{1}{n^{2}}\sum_{i=1}^{k}\left[\sum_{l=1}^{d}\left(\sum_{j=1}^{n}w_{i}^{(2)}\xx_{j,l}\mathbb{I}_{j,i}\right)^{2}+\left(\sum_{j=1}^{n}w_{i}^{(2)}\mathbb{I}_{j,i}\right)^{2}+\left(\sum_{j=1}^{n}\sigma\left(\xx_{j}^{\top} \ww_{i}^{(1)}+b_{i}^{(1)}\right)\right)^{2}\right] \nonumber\\
	& =1+\frac{1}{n^{2}}\sum_{i=1}^{k}\left[\left(w_{i}^{(2)}\right)^{2}\left(\sum_{l=1}^{d}\left(\sum_{j=1}^{n}\xx_{j,l}\mathbb{I}_{j,i}\right)^{2}+\left(\sum_{j=1}^{n}\mathbb{I}_{j,i}\right)^{2}\right)+\left(\sum_{j=1}^{n}\sigma\left(\vect x_{j}^{\top}\ww_{i}^{(1)}+b_{i}^{(1)}\right)\right)^{2}\right]\nonumber\\
	& \overset{(*)}{\ge} 1 + \frac{2}{n^{2}}\sum_{i=1}^{k}\left|w_{i}^{(2)}\right|\sqrt{\sum_{l=1}^{d}\left(\sum_{j=1}^{n}\xx_{j,l}\mathbb{I}_{j,i}\right)^{2}+\left(\sum_{j=1}^{n}\mathbb{I}_{j,i}\right)^{2}}\left|\sum_{j=1}^{n}\sigma\left(\xx_{j}^{\top}\ww_{i}^{(1)}+b_{i}^{(1)}\right)\right|,
\end{align}
where in $(*)$ we used $\alpha^{2}+\beta^{2}\ge2\left|\alpha\beta\right|.$
Let $C_{i}\subseteq\{\vect x_{j}\}$ be the set of training points
for which the $i$th neuron is active, and denote $n_{i}=|C_{i}|$,
that is 
\begin{equation}
	n_{i}=\sum_{j=1}^{n}\mathbb{I}_{j,i}.
\end{equation}
Then,
\begin{align}
	\lambda_{\max}\left(\nabla_{\params}^{2}\loss\right)
	& \ge 1+\frac{2}{n^{2}}\sum_{i=1}^{k}\left|w_{i}^{(2)}\right|\sqrt{\left\Vert \sum_{\vect x\in C_{i}}\vect x\right\Vert ^{2}+n_{i}^{2}}\left|\sum_{\vect x\in C_{i}}\left(\vect x_{j}^{\top}\vect w_{i}^{(1)}+b_{i}^{(1)}\right)\right|\nonumber\\
	& =1+2\sum_{i=1}^{k}\left|w_{i}^{(2)}\right|\left(\frac{n_{i}}{n}\right)^{2}\sqrt{\left\Vert \frac{1}{n_{i}}\sum_{\vect x\in C_{i}}\vect x\right\Vert ^{2}+1}\left|\frac{1}{n_{i}}\sum_{\vect x\in C_{i}}\left(\vect x^{\top}\vect w_{i}^{(1)}+b_{i}^{(1)}\right)\right|\nonumber\\
	& =1+2\sum_{i=1}^{k}\left|w_{i}^{(2)}\right|\left(\mathbb{P}\left(\vect X\in C_{i}\right)\right)^{2}\sqrt{\left\Vert \mathbb{E}\left[\vect X|\vect X\in C_{i}\right]\right\Vert ^{2}+1}\mathbb{E}\left[\vect X^{\top}\vect w_{i}^{(1)}+b_{i}^{(1)}|\vect X\in C_{i} \right],
\end{align}
where $\vect X$ is a random sample from the dataset under uniform
distribution. Next, we define
\begin{equation}
	\bar{\vect w}_{i}^{(1)}\triangleq\frac{\vect w_{i}^{(1)}}{\left\Vert \vect w_{i}^{(1)}\right\Vert }, \qquad \bar{b}_{i}^{(1)}\triangleq\frac{-b_{i}^{(1)}}{\left\Vert \vect w_{i}^{(1)}\right\Vert }\,.
\end{equation}

Using these notations we obtain
\begin{align}
    \lambda_{\max}\left(\nabla_{\params}^{2}\loss\right) & \ge1+2\sum_{i=1}^{k}\left|w_{i}^{(2)}\right|\left\Vert \vect w_{i}^{(1)}\right\Vert \left(\mathbb{P}\left(\vect X\in C_{i}\right)\right)^{2}\sqrt{\norm{\E \left[\vect X \middle| \vect X\in C_{i}\right]}^2 +1} \nonumber \\
    & \hspace{7cm} \times \E \left[\vect X^{\top}\vect{\bar{w}}_{i}^{(1)}-\bar{b}_{i}^{(1)}|\vect X\in C_{i}\right]\nonumber\\
    & \overset{(*)}{=}1+2\sum_{i=1}^{k}\left|w_{i}^{(2)}\right|\left\Vert \vect w_{i}^{(1)}\right\Vert \tilde{g}\left(\bar{\vect w}_{i}^{(1)},\bar{b}_{i}^{(1)}\right)\nonumber \\
    & \overset{(**)}{\ge}1+2\sum_{i=1}^{k}\left|w_{i}^{(2)}\right|\left\Vert \vect w_{i}^{(1)}\right\Vert g\left(\bar{\vect w}_{i}^{(1)},\bar{b}_{i}^{(1)}\right),
\end{align}
where in $(*)$ and $(**)$ we defined, respectively,
\begin{align}
	\tilde{g}\left(\bar{\vect w},\bar{b}\right)
	& = \left(\prob \left( \vect X^{\top} \bar{\vect w}> \bar{b} \right)\right)^{2}
	\E\left[\vect X^{\top}{\bar{\vect w}}-\bar{b} \middle| \vect X^{\top} \bar{\vect w} > \bar{b} \right]
	\sqrt{\norm{ \E\left[\vect X \middle| \vect X^{\top} \bar{\vect w} > \bar{b} \right]}^2 +1}, \nonumber\\
	g\left(\bar{\vect w},\bar{b}\right) 
	& = \min\left(\tilde{g}\left(\bar{\vect w},\bar{b}\right),\tilde{g}\left(-\bar{\vect w},-\bar{b}\right)\right).
\end{align}

From our derivation so far, we obtain that
\begin{equation}\label{eq: representation bound}
	\lambda_{\max}\left(\nabla_{\params}^{2}\loss\right)\ge 1+2\sum_{i=1}^{k}\left|w_{i}^{(2)}\right|\left\Vert \vect w_{i}^{(1)}\right\Vert g\left(\bar{\vect w}_{i}^{(1)},\bar{b}_{i}^{(1)}\right).
\end{equation}
We denote $a_i=w_{i}^{(2)}\|\ww_{i}^{(1)}\|$ and the representation dependent stability norm as 
\begin{equation}
	S_{\params} \triangleq \sum_{i=1}^{k} \abs{a_i} g\left(\bar{\vect w}_{i}^{(1)},\bar{b}_{i}^{(1)}\right).
\end{equation}

\subsection{Implementation free lower bound}\label{Sec: stability norm interpetation}
In this section, our goal is to give a simpler lower bound of the multivariate stability norm $S_{\params}$, that does not depend on the specific representation of $f$.
Let $\alpha$ be the signed measure over $\mathbb{S}^{d-1}\times\R$
given by $\alpha=\sum_{i=1}^{k}a_{i}\delta_{\left(\bar{\vect w}_{i}^{\left(1\right)},\bar{b}_{i}^{\left(1\right)}\right)}$,
and whose total variation measure $\left|\alpha\right|$ is given
by $|\alpha|=\sum_{i=1}^{k}|a_{i}|\delta_{\left(\bar{\vect w}_{i}^{(1)},\bar{b}_{i}^{(1)}\right)}$. Recall that
\begin{equation}
	f(\xx) = \sum_{i=1}^{k} \norm{\ww_{i}^{(1)}} w_{i}^{(2)}\sigma\left(\xx^{\top}\bar{\ww}_{i}^{(1)}-\bar{b}_{i}^{(1)}\right)+b^{(2)} = \sum_{i=1}^{k} a_i \sigma\left(\xx^{\top}\bar{\ww}_{i}^{(1)}-\bar{b}_{i}^{(1)}\right)+b^{(2)},
\end{equation} 
and thus
\begin{align}
	\Delta f (\xx ) & = \sum_{l=1}^{d}\frac{\partial^{2} f (\xx )}{\partial x_{l}^{2}} \nonumber \\
	& =\sum_{i=1}^{k} a_{i} \delta \left(\xx^{\top}\bar{\vect w}_{i}^{(1)}-\bar{b}_{i}^{(1)}\right)\nonumber \\
	& =\int_{\Sb^{d-1}\times\R} \alpha \left(\bar{\vect w},\bar{b}\right)\delta\left(\bar{\ww}^{\top}\vect x-\bar{b}\right) \dif s(\bar{\ww})\dif\bar{b} \nonumber \\
	& =\int_{\Sb^{d-1}} \alpha\left( \bar{\ww},\bar{\ww}^{\top}\xx \right)\dif s(\bar{\ww})\nonumber \\
	& =\Rad^{*}\alpha.
\end{align}
Namely, $\Delta f$ is a weighted sum of Diracs supported on hyperplanes. From the last equation we obtain $\alpha = (\Rad^*)^{-1} \Delta f$. Combining these results we get that
\begin{equation}\label{eq: lower bounding the parameter depending stability norm}
	S_{\params}=\int_{\Sb^{d-1}\times\R}g\dif|\alpha| =\langle|\alpha|,g\rangle\ge
	\int_{\Sb^{d-1}\times \R} \abs{\bb{\left(\Rad^*\right)^{-1}\Delta f}\rb{\vv,b}} g\rb{\vv,b} \dif s(\vv) \dif b = \snorm{f}{g},
\end{equation}
where the inequality in the third step is due to $g$ being non-negative and the scenarios in which multiple deltas become active at the same location, namely $\exists i\neq j: \bar{\vect{w}}^{(1)}_i=\bar{\vect{w}}^{(1)}_j $ and $ \bar{b}^{(1)}_i = \bar{b}^{(1)}_j $. Note that if the deltas do not align, then $S_{\params}=\snorm{f}{g}$.
Overall we have that
\begin{equation}
	\lambda_{\max}\left( \nabla^2_\params \loss \right) \geq 1+2 S_{\params} \geq 1+2\snorm{f}{g}.
\end{equation}

\section{Derivation of the stability norm in primal space}\label{App:Derivation of the Stability Norm Interpretation}
We defined the multivariate stability norm as $\snorm{f}{g} = \langle |(\Rad^*)^{-1}\Delta f|,g\rangle_{\Sb^{d-1}\times \R}$, where $\langle \cdot, \cdot \rangle_{\Sb^{d-1}\times \R}$ denotes the integral inner-product on $\Sb^{d-1}\times \R$. Supposing that the inverse Radon transform of $g$ exists, then by purely formal reasoning we ought to have
\begin{align}
	\snorm{f}{g} & = \left\langle \abs{(\Rad^*)^{-1}\Delta f},g\right\rangle_{\Sb^{d-1}\times \R} \nonumber\\
	& = \left\langle \abs{(\Rad^*)^{-1}\Delta f},\Rad\Rad^{-1}g\right\rangle_{\Sb^{d-1}\times \R} \nonumber\\
	& = \left\langle\Rad^* \abs{(\Rad^*)^{-1}\Delta f},\Rad^{-1}g\right\rangle_{\R^d}.
\end{align}
Therefore, making the (formal) definitions $|\Delta f|_\Rad \triangleq \Rad^* |(\Rad^*)^{-1}\Delta f|$ and $\rho \triangleq \Rad^{-1}g$, we may also interpret the stability norm $\snorm{f}{g}$ as the quantity
\begin{equation}\label{eq:weirdintegral}
	\int_{\R^d} |\Delta f|_{\Rad}(\xx) \rho(\xx)\dif\xx.   
\end{equation}
In the event that $f$ and $g$ are smooth, and $g$ is in the range of the classical Radon transform, then the above expression is equal to $\snorm{f}{g}$. However, this is not generally the case in our setting, and below we show how to give a more precise interpretation of this integral formula using distributional theory. In particular, we show that when $f$ is a finite-width ReLU network $|\Delta f|_\Rad$ is equal to the  total variation measure of $\Delta f$ (\ie the measure-theoretic analog of the absolute value of a function). Additionally, in the event that $g$ does not have a classically defined Radon inverse, we show how the integral in \eqref{eq:weirdintegral} can be interpreted using a smoothing approach.

Let $f$ be a finite width single hidden-layer ReLU network, \ie $f \in \domain$ for some finite $k$. Recall that $(\Rad^*)^{-1}\Delta f$ is a finite weighted sum of Diracs in $\mathbb{S}^{d-1}\times \R$. Let $|(\Rad^*)^{-1}\Delta f|$ be the associated total variation measure, and define $\left|\Delta f \right|_\Rad = \Rad^*|(\Rad^*)^{-1}\Delta f|$, where $\Rad^*$ is the distributional dual Radon transform. Here $\left|\Delta f \right|_\Rad$ is a tempered distribution given by a (positive) weighted sum of Diracs supported on hyperplanes. For example, if $f$ is a single ReLU unit of the form $f(\xx) = a\,\sigma(\xx^\top\vv - b)$  with $\|\vv\|_2=1$, then $|\Delta f|_\Rad$ is the distribution $|a|\delta(\xx^\top \vv - b)$, that is, for any test function $\phi$  we have 
\begin{equation}
	\langle |\Delta f|_\Rad, \phi\rangle_{\R^d}= |a|\int_{\xx^\top \vv = b} \phi(\xx) \dif s(\xx) = |a|\Rad \phi(\vv,b).
\end{equation}
On the other hand, treating $\Delta f$ as a measure defined over a compact subset of $\R^d$, its total variation measure $|\Delta f|$ is also equal to $|a| \delta(\xx^\top \vv -b)$. The following result shows that, more generally, when~$f$ is any finite width ReLU net then $|\Delta f|_\Rad$ and $|\Delta f|$ are equal as measures.

\begin{proposition}
	Let $f\in\domain$, then $|\Delta f|_\Rad = |\Delta f|$ as measures defined over any compact set of $\R^d$.
\end{proposition}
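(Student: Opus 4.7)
The strategy is to put both measures into canonical form as positive linear combinations of hyperplane measures and then observe they agree coefficient by coefficient. Let $f(\xx) = \sum_{i=1}^k a_i \sigma(\vv_i^\top \xx - b_i) + c$ with $\|\vv_i\|_2 = 1$. A direct distributional computation (the same one used to derive Eq.~\eqref{eq:Laplacian}) gives
\begin{equation}
    \Delta f = \sum_{i=1}^k a_i\, \mu_{(\vv_i,b_i)},
\end{equation}
where $\mu_{(\vv,b)}$ denotes the $(d-1)$-dimensional Hausdorff measure on the hyperplane $\{\xx: \vv^\top \xx = b\}$. In particular, $\Delta f$ is a finite signed Radon measure on any compact set of $\R^d$.

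The next step is to pass to a canonical representation. Using the identification $(\vv,b) \sim (-\vv,-b)$ (the two descriptions yield the same hyperplane and $\mu_{(\vv,b)} = \mu_{(-\vv,-b)}$), I would group together all indices $i$ whose pair $(\vv_i,b_i)$ corresponds to the same hyperplane, obtaining
\begin{equation}
    \Delta f = \sum_{j=1}^m \tilde a_j\, \mu_{(\tilde\vv_j, \tilde b_j)}
\end{equation}
with $m\le k$ and pairwise distinct hyperplanes $\{\xx : \tilde\vv_j^\top \xx = \tilde b_j\}$. Because distinct hyperplanes are pairwise disjoint (hence the measures $\mu_{(\tilde\vv_j,\tilde b_j)}$ are mutually singular), the total variation measure decomposes cleanly as
\begin{equation}
    |\Delta f| = \sum_{j=1}^m |\tilde a_j|\, \mu_{(\tilde\vv_j,\tilde b_j)}.
\end{equation}

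Now I would compute the other side. By Eq.~\eqref{eq:Laplacian} together with the same regrouping (using symmetrized Diracs $\tfrac12(\delta_{(\tilde\vv_j,\tilde b_j)} + \delta_{(-\tilde\vv_j,-\tilde b_j)})$ so the representative lives in $\mathcal{S}_H'(\Sb^{d-1}\times\R)$), one has
\begin{equation}
    (\Rad^*)^{-1}\Delta f = \sum_{j=1}^m \tilde a_j\, \delta_{(\tilde\vv_j,\tilde b_j)}.
\end{equation}
Since the support points are distinct in the Radon-domain quotient, the total variation of this discrete measure is $|(\Rad^*)^{-1}\Delta f| = \sum_{j=1}^m |\tilde a_j|\, \delta_{(\tilde\vv_j,\tilde b_j)}$. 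Finally, applying $\Rad^*$ to each Dirac gives the hyperplane measure (tested against $\phi\in\mathcal{S}(\R^d)$, $\langle \Rad^*\delta_{(\vv,b)},\phi\rangle = \Rad\phi(\vv,b) = \int_{\vv^\top\xx = b}\phi\,\dif s$), so
\begin{equation}
    |\Delta f|_{\Rad} = \Rad^* |(\Rad^*)^{-1}\Delta f| = \sum_{j=1}^m |\tilde a_j|\, \mu_{(\tilde\vv_j,\tilde b_j)},
\end{equation}
which matches $|\Delta f|$.

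\textbf{Main obstacle.} The routine parts (formulas for $\Delta f$, action of $\Rad^*$ on Diracs) are mechanical; the delicate step is the regrouping that produces the canonical form. One must handle both the antipodal identification $(\vv,b)\sim(-\vv,-b)$ inherent to the Radon parameterization and the possibility that distinct neurons share a knot hyperplane. Only after this grouping are the summands mutually singular, which is what makes the absolute value commute with the sum and gives the clean coefficient-wise identification of $|\Delta f|_\Rad$ with $|\Delta f|$.
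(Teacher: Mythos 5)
Your proof is correct and takes essentially the same route as the paper's: put $f$ into a canonical form with one knot per hyperplane, observe that both $|\Delta f|$ and $|\Delta f|_\Rad$ then decompose term by term into the same positive combination of hyperplane measures, and match coefficients. One justification needs fixing: distinct hyperplanes in $\R^d$ with $d\geq 2$ are \emph{not} pairwise disjoint in general (non-parallel ones meet in a $(d-2)$-dimensional affine subspace); what is true, and what the paper states, is that their pairwise intersections are null sets for the $(d-1)$-dimensional surface measures, which still gives the mutual singularity you need for the total variation to split across the sum. With that wording corrected, the argument goes through; the remaining differences (you absorb antipodal collisions by summing coefficients on a common hyperplane, the paper absorbs them into an affine term and splits $\Delta f$ by sign into $\mu_+ - \mu_-$) are cosmetic.
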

\begin{proof}
	Since $f \in \domain$, there exists a representation of $f$ as $f(\xx) = \sum_{i=1}^{k'} a_i\sigma(\vv_i^\top \xx -b_i) + \xx^\top\vect{q}  + c$ where $\|\vv_i\| = 1$ for all $i$, $a_i \neq 0$ for all $i$, and $(\vv_i,b_i)\neq \pm (\vv_j,b_j)$ for all $i\neq j$ (\ie the knots of all ReLU units are distinct\footnote{If $(\vv_i,b_i) = (\vv_j,b_j)$ then one of the neurons is redundant. If $(\vv_i,b_i) = -(\vv_j,b_j)$ then these units can be combined into an affine function, which we ``absorb'' into the term $\xx^\top \vect{q} + c$.}), and where $k' \leq k$. Therefore, each ReLU unit in this representation of $f$ maps to a distinct Dirac $\delta_{(\vv_i,b_i)}$ in Radon space after applying the operator $(\Rad^*)^{-1} \Delta$, and so
	\begin{equation}
		|(\Rad^*)^{-1} \Delta f| = \sum_{i=1}^{k'}|a_i| \delta_{(\vv_i,b_i)}.
	\end{equation}
	Let $X$ be any compact subset of $\R^d$, and let $\phi$ be any continuous test function defined over $X$. Then,
	\begin{equation}
		\langle |\Delta f|_\Rad,\phi\rangle = \langle |(\Rad^*)^{-1} \Delta f|, \Rad \phi \rangle = \sum_{i=1}^{k'}|a_i| \Rad \phi (\vv_i,b_i).
	\end{equation}
	Now, we show the same equality holds with $|\Delta f|$ in place of $|\Delta f|_\Rad$. Let $I^+$ denote the set of indices $i$ such that $a_i > 0$ and $I^-$ denote the set of indices $i$ such that $a_i < 0$. Define the measures $\mu_+ = \sum_{i \in I^+} a_i \delta(\vv_i^\top \cdot - \,b_i)$ and $\mu_- = -\sum_{i \in I^-} a_i\, \delta(\vv_i^\top \cdot - \,b_i)$. Observe that $\mu_+$ and $\mu_-$ are both positive measures whose supports only possibly intersect on a set of measure zero, and $\Delta f = \mu_+-\mu_-$. This implies the total variation measure of $\Delta f$ is given by $|\Delta f| = \mu_+ + \mu_- = \sum_{i=1}^{k'} |a_i| \delta(\vv_i^\top \cdot  - \,b_i)$. Hence,
	\begin{equation}
		\langle |\Delta f|,\phi \rangle = \sum_{i=1}^{k'} |a_i| \Rad\phi(\vv_i,b_i),
	\end{equation}
	as claimed.
\end{proof}

Note, however, that when $f$ corresponds to finite-width ReLU network, $|\Delta f|$ does not have finite total variation considered as a measure defined over all $\R^d$. Due to this technicality, when $\rho$ is not compactly supported, we need to understand the integral in \eqref{eq:weirdintegral} as being with respect to the distribution $|\Delta f|_{\Rad}$ in place of the measure $|\Delta f|$.

Now we show that when $\rho = \Rad^{-1} g$ does not exist in a classical sense, the integral in \eqref{eq:weirdintegral} can still be interpreted using a smoothing approach.
\begin{proposition}
	Let $f\in\domain$ and suppose $g$ is an even, piecewise continuous $L^1$ function on $\mathbb{S}^{d-1}\!\!~\times~\R$ and let $\rho = \Rad^{-1}g$ be its distibutional Radon inverse. Further, assume the support of $|(\Rad^*)^{-1}\Delta f|$ does not intersect the set of points where $g$ is discontinuous.
	Then $\snorm{f}{g}$ is finite and
	\begin{equation}
		\snorm{f}{g} = \int_{\R^d}|\Delta f(\xx)| \rho(\xx) \dif\xx,
	\end{equation}
	where the integral above is understood as the finite limit
	\begin{equation}
		\lim_{\epsilon\to 0} \left\langle |\Delta f|_\Rad, \rho_\epsilon \right\rangle_{\R^d},
	\end{equation}
	where $\rho_\epsilon$ is a smooth approximation of $\rho$ defined independently of $f$ whose classical Radon transform $g_\epsilon = \Rad \rho_\epsilon$ exists for all $\epsilon > 0$, and $g_\epsilon \rightarrow g$ uniformly as $\epsilon\rightarrow 0$ on any closed subset of $\R^d$ over which $g$ is continuous.
\end{proposition}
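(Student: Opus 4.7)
The plan is to convert the right-hand side integral into a duality pairing over $\Sb^{d-1}\times \R$, exploiting the identity $|\Delta f|_\Rad = \Rad^* |(\Rad^*)^{-1}\Delta f|$ used to define $|\Delta f|_\Rad$. For each fixed $\epsilon>0$, since $\rho_\epsilon$ is smooth and $g_\epsilon = \Rad\rho_\epsilon$ is its classical Radon transform, the distributional transposition relation $\langle \Rad^*\alpha,\phi\rangle_{\R^d} = \langle \alpha,\Rad\phi\rangle_{\Sb^{d-1}\times\R}$ from App.~\ref{App:Distributional framework}, applied to $\alpha = |(\Rad^*)^{-1}\Delta f|$ and $\phi = \rho_\epsilon$, yields
\begin{equation}
\langle |\Delta f|_\Rad, \rho_\epsilon\rangle_{\R^d} = \langle |(\Rad^*)^{-1}\Delta f|, g_\epsilon\rangle_{\Sb^{d-1}\times\R}.
\end{equation}

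Next I would exploit the explicit structure of $|(\Rad^*)^{-1}\Delta f|$. Since $f\in\domain$, the preceding proposition gives $|(\Rad^*)^{-1}\Delta f| = \sum_{i=1}^{k'}|a_i|\,\delta_{(\vv_i,b_i)}$ for a finite collection of distinct knots $(\vv_i,b_i)\in \Sb^{d-1}\times\R$. By hypothesis each knot lies in the continuity set of $g$, so one can choose a compact neighborhood $K$ of $\{(\vv_i,b_i)\}_{i=1}^{k'}$ on which $g$ is continuous (possible since the knots are finitely many and separated from the discontinuity set). The assumed uniform convergence $g_\epsilon\to g$ on $K$ then gives $g_\epsilon(\vv_i,b_i)\to g(\vv_i,b_i)$ for every $i$, and hence
\begin{equation}
\lim_{\epsilon\to0}\langle|\Delta f|_\Rad,\rho_\epsilon\rangle_{\R^d} \;=\; \lim_{\epsilon\to0}\sum_{i=1}^{k'}|a_i|\,g_\epsilon(\vv_i,b_i) \;=\; \sum_{i=1}^{k'}|a_i|\,g(\vv_i,b_i) \;=\; \snorm{f}{g}.
\end{equation}
Finiteness of $\snorm{f}{g}$ follows immediately since this is a finite sum and each $g(\vv_i,b_i)$ is finite by continuity of $g$ at the knots.

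The main obstacle I expect is rigorously justifying the duality identity in the first display, since it requires $\rho_\epsilon$ and $g_\epsilon$ to live in function spaces on which the distributional $\Rad^*$ acts as the formal transpose of $\Rad$ against $|(\Rad^*)^{-1}\Delta f|\in\mathcal{S}'_H(\Sb^{d-1}\times\R)$. Concretely, one should take $\rho_\epsilon$ of Schwartz class (\eg a standard mollification of $\rho$ by a Schwartz bump independent of $f$), so that $g_\epsilon = \Rad\rho_\epsilon$ lies in $\mathcal{S}_H(\Sb^{d-1}\times\R)$ as required by the framework of App.~\ref{App:Distributional framework}, and so that the moment/decay conditions needed for the inversion formulas are satisfied. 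Once this technicality is handled, the remaining steps merely evaluate finite Dirac measures against continuous test functions, which is elementary.
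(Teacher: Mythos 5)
Your overall architecture matches the paper's: both proofs reduce the claim to the duality identity $\langle |\Delta f|_\Rad, \rho_\epsilon\rangle_{\R^d} = \langle |(\Rad^*)^{-1}\Delta f|, g_\epsilon\rangle_{\Sb^{d-1}\times\R}$, exploit the fact that $|(\Rad^*)^{-1}\Delta f|$ is a finite sum of Diracs supported away from the discontinuity set of $g$, and pass to the limit using uniform convergence of $g_\epsilon$ to $g$ on a closed neighborhood of that support. (The paper phrases the last step slightly more abstractly, as continuity of the finite measure as a functional on $C_0(U)$, rather than evaluating the finite sum $\sum_i |a_i| g_\epsilon(\vv_i,b_i)$ termwise, but this is the same argument.)

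The one place where your proposal would not go through as written is exactly the step you flag as the ``main obstacle'': the construction of $\rho_\epsilon$. You propose taking $\rho_\epsilon$ to be a standard mollification of $\rho$ in primal space. But $\rho = \Rad^{-1}g$ is only given as a distribution, and a mollification $\rho \ast \psi_\epsilon$ of a tempered distribution, while smooth, need not be integrable along hyperplanes, so its classical Radon transform $g_\epsilon = \Rad(\rho\ast\psi_\epsilon)$ may simply fail to exist; even when it does, establishing that $g_\epsilon \to g$ uniformly on the continuity sets of $g$ from this primal-space definition is not straightforward (the convolution corresponds in Radon domain to a one-dimensional, direction-dependent smoothing in $b$, and you would still have to verify the inversion relation). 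The paper avoids this by working in the opposite direction: it first mollifies $g$ in Radon domain (convolving with an even compactly supported approximate identity and applying a smooth cutoff in $b$) to produce an even Schwartz function $g_\epsilon$ with the required uniform convergence by construction, and only then defines $\rho_\epsilon = \Rad^{-1}g_\epsilon$, invoking \citep[Thm.~7.7]{solmon1987asymptotic} to guarantee that $\rho_\epsilon$ is a $C^\infty$ function, integrable along hyperplanes, and satisfies $\Rad\rho_\epsilon = g_\epsilon$ classically. If you reverse the order of your construction in this way, the rest of your argument is correct and coincides with the paper's.
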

\begin{proof}
	For any $\epsilon > 0$ let $\phi_\epsilon \in \mathcal{S}( \mathbb{S}^{d-1}\times \R)$ be a compactly supported even function acting as a smooth approximation of the identity, \ie for any continuous, even function $h$ vanishing at infinity we have $\phi_\epsilon\ast h \rightarrow h$ uniformly as $\epsilon \rightarrow 0$, where $\ast$ denotes convolution of functions on $\mathbb{S}^{d-1}\times \R$. Define $g_\epsilon = (\phi_\epsilon \ast g)\cdot \chi_\epsilon$, where $\chi_\epsilon(\vv,b)$ is a smooth cutoff function that is equal to one if $|b| \leq 1/\epsilon$ and rapidly decays to zero for $|b| \geq 1/\epsilon$. Observe that $g_\epsilon$ is an even Schwartz function by construction. Furthermore, since $g$ is piecewise continuous and $L^1$ (and in particular, it vanishes at infinity), this implies $g_\epsilon \rightarrow g$ uniformly over any closed set that does not intersect the set of points where $g$ is discontinuous. Therefore, if we let $U \subset \mathbb{S}^{d-1} \times \R$ be any closed set containing the support of $|(\Rad^*)^{-1} \Delta f|$ that does not intersect the set of points where $g$ is discontinuous (which is guaranteed to exist since the support of $|(\Rad^*)^{-1} \Delta f|$ is a finite set), then we have $g_\epsilon \rightarrow g$ uniformly over $U$. Therefore
	\begin{equation}
		\snorm{f}{g} = \langle |(\Rad^*)^{-1}\Delta f|, g\rangle_{\Sb^{d-1}\times \R} =  \lim_{\epsilon\to 0}\langle |(\Rad^*)^{-1}\Delta f|, g_\epsilon\rangle_{\Sb^{d-1}\times \R},
	\end{equation}
	where the limit is guaranteed to exist since the finite measure $|(\Rad^*)^{-1}\Delta f|$ is a continuous linear functional over $C_0(U)$, the space of continuous functions over $U$ vanishing at infinity. Finally, since $g_\epsilon$ is Schwartz, \citep[Thm.~7.7]{solmon1987asymptotic} guarantees $\rho_\epsilon = \Rad^{-1} g_\epsilon$ exists as a $C^\infty$-smooth function on $\R^d$ that is also integrable along hyperplanes and for which the classical Radon inversion formula holds: $\Rad \rho_\epsilon = g_\epsilon$. Therefore, we have
	\begin{align}
		\snorm{f}{g}
		& = \lim_{\epsilon\to 0}\langle |(\Rad^*)^{-1}\Delta f|, \Rad\rho_\epsilon\rangle_{\mathbb{S}^{d-1}\times \R} \nonumber\\
		& = \lim_{\epsilon\to 0}\langle \Rad^*|(\Rad^*)^{-1}\Delta f|, \rho_\epsilon\rangle_{\R^d} \nonumber\\
		& = \lim_{\epsilon\to 0}\langle |\Delta f|_\Rad, \rho_\epsilon\rangle_{\R^d},
	\end{align}
	as claimed.
\end{proof}

The assumption made above that the support of $|(\Rad^*)^{-1}\Delta f|$ does not intersect the set of points where $g$ is not overly restrictive. For example, this assumption holds when $f$ corresponds to a differentiable minimizer of the squared loss defined in terms of a finite set of training points and $g$ is the data dependent weighting function defined in \eqref{Eq: g definition}. In this case, the discontinuity set of $g(\vv,b)$ corresponds to the set of hyperplanes $\{\xx \in \R^d : \vv^\top \xx = b\}$ that intersect one or more of the training points. And $f$ corresponds to a differentiable minimizer if and only if the hyperplanes defined by the knots of the ReLU units making up $f$ (\ie the support of $|(\Rad^*)^{-1}f|$) do not intersect any training points.

\section{Examples of $g$ and $ \rho $}
\subsection{Two datapoints}\label{App:The weight function for the toy example}
For this example, it is easy to calculate that
\begin{equation}
	g(\vv,b) = \alpha\, \sigma\big(|v_1|-|b|\big),
\end{equation}
where $\alpha$ is a positive constant. We compute $\rho = \Rad^{-1}g$ by first determining its Laplacian, $\Delta\rho$, then inverting and Laplacian to recover $\rho$. 
First, the intertwining property of the Laplacian and the Radon transform gives
\begin{equation}
	\Rad \Delta \rho = \frac{\partial}{ \partial b^2 } \Rad \rho = \frac{\partial}{ \partial b^2 } g.
\end{equation}
Therefore, by the Fourier slice theorem \citep{helgason1999radon}, for all $(\vv,s) \in \Sb^{d-1}\times \R$ we have
\begin{equation}
	\mathfrak{F}\{\Delta \rho \}(s\vv) = \mathfrak{F}_b\left\{\frac{\partial}{ \partial b^2 } g\right\}(\vv,s),
\end{equation}
where $\mathfrak{F}\{\cdot \}$ is the 2-D Fourier transform, and $\mathfrak{F}_b\{\cdot\}$ is the Fourier transform in the $b$ variable.
For fixed $\vv$, the function $b \to g(\vv,b)$ is continuous and piecewise linear with knots at $0$ and $\pm|v_1|$, and it is easy to see that
\begin{equation}
	\frac{\partial}{ \partial b^2 } g(\vv,b) = \alpha\big(\delta(b-|v_1|)+\delta(b+|v_1|)-2\delta(b)\big),
\end{equation}
which implies
\begin{equation}
	\mathfrak{F}\{\Delta \rho \}(s\vv) = \mathfrak{F}_b\left\{\frac{\partial}{ \partial b^2 } g\right\}(\vv,s) =  \alpha\left(e^{-j2\pi|v_1|s}+e^{j2\pi|v_1|s}-2\right).
\end{equation}
If we restrict the unit-norm vector $\vv = (v_1,v_2)$ to be such that $v_1 \geq 0$ and define $\vect{\xi} = s\vv$ then we see that $\xx_1^\top \vect{\xi} = s|v_1|$ and $\xx_2^\top \vect{\xi} = -s|v_1|$. Therefore, we have
\begin{equation}
	\mathfrak{F}\{\Delta\rho\}(\vect{\xi}) = \alpha\left(e^{-j2\pi \xx_1^\top \vect{\xi} }+e^{-j2\pi \xx_2^\top \vect{\xi} }-2\right),
\end{equation}
and inverting the Fourier transform gives
\begin{equation}
	\Delta\rho(\xx) =                 \alpha\left(\delta(\xx-\xx_1) +\delta(\xx-\xx_2)-2\delta(\xx)\right).
\end{equation}
Finally, since $\varphi(\xx) = \frac{1}{2\pi}\log(\|\xx\|)$ is the fundamental solution of Poisson's equation in 2D (\ie $\Delta \varphi = \delta$, where $\delta$ is a Dirac centered at origin), we have
\begin{equation}
	\rho(\xx) = \frac{\alpha}{2\pi}\big( \log(\|\xx-\xx_1\|)+\log(\|\xx-\xx_2\|)-2\log(\|\xx\|)\big).
\end{equation}
While each term in the sum above not absolutely integrable along lines, their sum is absolutely integrable along lines. This is because the function $t \to \log(|t|)$ is absolutely integrable over any neighborhood of the origin, and by a multipole expansion we may show that $\rho(\xx) = O(\|\xx\|^{-2})$ as $\xx\to\infty$, which is also absolutely integrable along lines.

\subsection{Isotropic data distribution}\label{App:The weight function for isotropic data distribution}
For an isotropic data distribution, \ie $\prob\rb{\xx^\top\vv>b}=M\rb{b}$ for any $\vv$ that satisfies $\norm{\vv}=1$, we will have that
\begin{align}
	\tilde{g}\left({\vect v},b\right)  =& M(b)
	\int_{b}^{\infty}M(z)\dif z \sqrt{\rb{b+\frac{1}{M(b)}\int_{b}^{\infty}M(z) \dif z}^2 +1}.
\end{align}
Then, from symmetry and assuming that $\tilde{g}$ is decreasing in $b$ we obtain
\begin{align}
	g(\vv,b)  =M(|b|)
	\int_{|b|}^{\infty}M(z)\dif z \sqrt{\rb{|b|+\frac{1}{M(|b|)}\int_{|b|}^{\infty}M(z)\dif z}^2 +1}.
\end{align}
Note that $g$ only depends on $|b|$ and is decreasing in $|b|$.
Considering the parameter space representation for the stability norm
\begin{equation}
	S_{\params} = \sum_{i=1}^{k}\abs{a_i}g\rb{\vv_i,b_i},
\end{equation}
we can see that solutions with larger $\abs{b_i}$, \ie solutions which are more flat in function space, will have smaller stability norm. 

We now characterize $\rho = \Rad^{-1}g$. For simplicity, we focus on the two-dimensional setting ($d=2$). Since $g(\vv,b)$ does not depend on $\vv$, we drop this dependence and simply write $g(b)$. Note that this implies $\rho$ is a radial function. Let $\tilde{\rho}(r)$ denote the radial profile of $\rho$, \ie $\rho(\xx) = \tilde{\rho}(\|\xx\|)$. We additionally make the following assumptions: $g(b)$ is twice continuously differentiable away from the origin, and both $g$ and its weak derivative $g'$ are bounded and absolutely integrable. In this case, $\tilde{\rho}$ has the integral formula \footnote{See Proposition 3.5.1 in \citep{epstein2007introduction}.}  
\begin{equation}\label{eq:rhoint1}
	\tilde{\rho}(r) = -\frac{1}{\pi}\int_{r}^\infty \frac{g'(b)}{\sqrt{b^2-r^2}} \dif b.
\end{equation}
The assumptions on $g$ above are sufficient to show the integrand in \eqref{eq:rhoint1} is absolutely integrable over $[r,\infty)$ for $r>0$. Since $g(|b|)$ is assumed to be decreasing in $|b|$, we have $-g'(b) \geq 0$ for all $b>0$, which shows $\tilde{\rho}(r) \geq 0$ for all $r>0$. However, if $g$ is not smooth at the origin, then $g'(b) = O(1)$ as $b\to 0^+$, and elementary analysis shows $\tilde{\rho}(r) = O(\log(r))$ as $r\to 0^+$ and $\tilde{\rho}(r) = O(1/r)$ as $r\to+\infty$.

Finally, if we additionally assume $g'(b)$ is non-increasing for $b>0$, then $\rho(r)$ is strictly decreasing for $r>0$. To see this, fix any $r'>r$, and define $\delta = r'-r$. Using the change of variables $b \mapsto b -\delta$, we may show
\begin{equation}\label{eq:rhoint2}
	\tilde{\rho}(r') = -\frac{1}{\pi}\int_{r}^\infty \frac{g'(b+\delta)}{\sqrt{(b+\delta)^2-r^2}}\,\dif{b}.
\end{equation}
Since $g'$ is assumed to be non-increasing, we have $g'(b+\delta) \leq g'(b)$ and it is elementary to show $((b+\delta)^2-r^2)^{-1/2} <  (b^2-r^2)^{-1/2}$ for all $b > r$, which shows the integrand in \eqref{eq:rhoint2} is pointwise strictly bounded above by the integrand in \eqref{eq:rhoint1} for all $b>r$, hence $\tilde{\rho}(r') < \tilde{\rho}(r)$.

In the case of 2D Gaussian distributed data $ {\vect X} \thicksim \mathcal{N} (\zeroVec, \Identity)$, then $M(b)$ is the complementary of the CDF of a normal random variable: $M(b) = \frac{1}{\sqrt{2\pi}}\int_{b}^\infty e^{-b^2/2}\,\dif{b}$. It is easy to verify that the resulting $g$ function satisfies the above assumptions ($g(b)$ is decreasing, twice continuously differentiable away from the origin, both $g$ and its weak derivative $g'$ are bounded and absolutely integrable, and $g'$ is non-increasing). Therefore, the resulting $\rho$ has the all the properties outlined above.

\section{Depth separation proofs}
Before giving the proofs in this section we introduce some additional notation. Let $\overline{X}$ denote the closed convex hull of the training points and $X$ its open interior. Additionally, let $Y = \{(\vv,b) \in \Sb^{d-1}\times \R : \vv^\top \xx > b ~~\text{for some}~~\xx\in X\}$, and let $\overline{Y}$ denote its closure. Note that for any smooth function $\phi$ with support contained in $\overline{X}$, the Radon transform $\Rad\phi$ has support contained in $\overline{Y}$. Finally, for any distribution $h$ and open set $U$, we let $h|_U$ denote its restriction to $U$.
\subsection{Proof of Proposition \ref{Thm: depth sep}}\label{App:Stable minima approximation proof}
First, we show that the convergence of a sequence of functions $f_k$ to $f$ in $L^1$-norm over $X$ implies that the sequence of distributions $\Delta f_k|_{X}$ converges weakly to the distribution $\Delta f|_{X}$. For all  test functions $\phi \in \mathcal{S}(X)$ we have
\begin{align}
	|\langle \Delta f_k -\Delta f, \phi \rangle| & = |\langle f_k - f, \Delta \phi \rangle|  \leq \|f_k - f\|_{L^1(X)}\|\Delta \phi\|_{L^\infty(X)},
\end{align}
where we used Holder's inequality to achieve the final bound. Therefore, we have $\lim_{k\to \infty}\langle \Delta f_k, \phi\rangle \to \langle\Delta f,\phi\rangle$, which proves the weak convergence.

Next, we show $(\Rad^*)^{-1}\Delta f_k|_{Y}$ converges weakly to $(\Rad^*)^{-1}\Delta f|_{Y}$. For all test functions $\varphi \in \mathcal{S}_H(Y)$ we have
\begin{equation}
	\left\langle (\Rad^*)^{-1}\Delta f_k-(\Rad^*)^{-1}\Delta f,\varphi\right\rangle = \left\langle \Delta f_k-\Delta f,\Rad^{-1}\varphi\right\rangle.
\end{equation}
Since $\phi$ vanishes outside $Y$, by the support theorem \citep[Corollary 2.8]{helgason1999radon} 
we are ensured that $\Rad^{-1}\varphi$ has support contained in $\overline{X}$, hence $\Rad^{-1}\varphi \in \mathcal{S}(X)$. The desired result now follows immediately by weak convergence of $\Delta f_k|_{X}$ to $\Delta f|_{X}$.

Since $g$ has support contained in $\overline{Y}$, this further implies that the distribution $g\cdot (\Rad^*)^{-1}\Delta f_k$ converges weakly to the distribution $g\cdot (\Rad^*)^{-1}\Delta f$. 
Finally, since $\snorm{f_k}{g} = \|g\cdot (\Rad^*)^{-1}\Delta f_k\|_{\mathrm{TV}}$ is bounded by assumption, this implies each $g\cdot (\Rad^*)^{-1}\Delta f_k$ is a measure having finite total variation, hence their weak limit $g\cdot (\Rad^*)^{-1}\Delta f$ is also a measure with finite total variation (\ie the weak limit of order-0 distributions that are bounded in TV-norm is also an order-0 distribution). Therefore, $\snorm{f}{g} = \|g\cdot (\Rad^*)^{-1}\Delta f\|_{\mathrm{TV}}$ is finite, as claimed.

\subsection{Proof of Proposition \ref{prop: pyramid}}\label{App:Depth seperation proof}
To show the pyramid function $p$ has infinite stability norm, we prove that $g\cdot (\Rad^*)^{-1}\Delta p$ is a distribution of order $>0$, which implies 
\begin{equation}
	\snorm{p}{g} = \sup_{\phi \in \mathcal{S}_H(\Sb^{d-1}\times \R)} \langle g\cdot(\Rad^*)^{-1}\Delta p, \phi\rangle = +\infty.
\end{equation}
First, observe that the Laplacian $\Delta p$ is an order-0 distribution whose support is contained in the unit~$\ell^1$-ball. This implies the distribution $(\Rad^*)^{-1}\Delta p$, is supported on a compact set $K$ in Radon domain.
By our assumption on the convex hull of the  training points, $g(\vv,b) > 0$ for all $(\vv,b) \in K$. Since $g$ is piecewise continuous and $K$ is compact, this implies there exists  constants $c_1,c_2 > 0$ such that $ c_1 \leq g(\vv,b) \leq c_2$ for all $(\vv,b) \in K$. Therefore, we see that $g \cdot (\Rad^*)^{-1}\Delta p$ is an order-0 distribution if and only if $(\Rad^*)^{-1}\Delta p$ is an order-0 distribution. However, by a result in \citep{ongie2020function}, we know $(\Rad^*)^{-1}\Delta p$ has order $>0$ (\ie in the terminology of \citep{ongie2020function}, $p$ has infinite $\Rad$-norm). Additionally, we show this by direct calculation in App.~\ref{App:Pyramid 2d example} in the case of input dimension $d=2$. Therefore, $g\cdot(\Rad^*)^{-1}\Delta p$ must be a distribution of order $>0$ and hence $\snorm{p}{g} = +\infty$ as claimed.
 
\subsection{Stability of the two hidden-layer implementation of $p(\xx)$}\label{App:Pyramid GD convergence}
Let us focus on the under-parameterized setting, in which there exists a single optimal input-output predictor $p(\xx)$ that globally minimizes the loss. In this case, the set of all global minima corresponds to different implementations of $p(\xx)$. Under this setting, we will prove that there exists a set of nonzero Lebesgue measure such that for any initialization inside this set, GD necessarily converges to $p(\xx)$.

To do so, we will first prove that for any minimum point $ \params^* \in \R^m$ corresponding to an implementation of $p(\xx)$, there exists a nonzero step size $\eta$ with which $\params^*$ is linearly stable. Furthermore, we will show that there exists a set $\stableSet(\params^*)$ embedded in a subspace of dimension $m-m_{\text{Null}}$, in which any initialization converges to $\params^*$. Here $m$ is the number of parameters in our two hidden-layer network, and $m_{\text{Null}}$ is the number of zero eigenvalues of $\nabla^2\loss$ at $\params^*$. Next, we will show that there is a connected set of minima $\Theta^*$ around $\params^*$, such that the union $\bigcup_{\params\in\Theta^*}\stableSet(\params)$ has a nonzero Lebesgue measure.

Let us start with some minimum point $ \params^* $, which corresponds to an implementation of $p(\xx)$.
GD's update rule is 
\begin{equation}\label{eq:UpdateRuleAppendix}
	\params_{t+1} = \params_t - \eta \nabla \loss(\params_t).
\end{equation}
Define the mapping
\begin{equation}
	T(\params) = \params - \eta \nabla \loss(\params). 
\end{equation}
Then \eqref{eq:UpdateRuleAppendix} can be written as
\begin{equation}
	\params_{t+1} = T(\params_t).
\end{equation}
This equation describes the full dynamics of GD using the nonlinear mapping $ T $. Note that in this representation, $ \params^* $ is an equilibrium point of $ T $, \ie $ T(\params^*) = \params^* $. We would like to show that it is possible to converge to $\params^*$.
Assume there is a finite number of training samples $n$, and none of them coincide with the knots of $ p $. 
Then $ T $ is differentiable in a small neighborhood of $ \params^* $. The Jacobian matrix of $ T $ is 
\begin{equation}
	\frac{\partial}{\partial \params} T  = \vect{I} -\eta \nabla^2 \loss(\params^*),
\end{equation}
and its eigenvalues are
\begin{equation}
	\lambda_{i} \left( \frac{\partial}{\partial \params} T \right) = \lambda_{i} \left(\vect{I} -\eta \nabla^2 \loss(\params^*) \right) = 1-\eta \lambda_{i} \big( \nabla^2 \loss(\params^*) \big).
\end{equation}
In this setting, the loss' Hessian at $ \params^* $ has non-negative and bounded eigenvalues. Particularly, there exists\footnote{Specifically, any $\eta$ such that $ \eta \leq 2/\lambda_{\max} (\nabla^2 \loss(\params^*))$
	and $\forall i: \eta \neq 1/\lambda_{i}$,
	satisfies this condition.} a sufficiently small step size $ \eta $ that satisfies
\begin{equation}
	0  < \big| 1-\eta \lambda_{i} \big( \nabla^2 \loss(\params^*) \big) \big| \leq 1,
\end{equation}
for all $ i $. Using the Center and Stable Manifold Theorem \cite[Th. III.7]{shub2013global}, there exists a bounded set~$ \stableSet(\params^*) $ such that
\begin{equation}
	T(\stableSet(\params^*)) \subseteq \stableSet(\params^*) \qquad \text{and} \qquad \forall \params \in \stableSet(\params^*): \; \norm{T(\params)-\params^*} \leq \alpha \norm{\params-\params^*},
\end{equation}
for some $ 0 \leq \alpha <1 $. Here $ \stableSet(\params^*) $ is tangent to the hyperplane that contains $\params^*$ and is spanned by the nonzero eigenvectors of $\nabla^2\loss(\params^*)$. Thus, assume that the initial point $ \params_0 \in \stableSet(\params^*) $, then
\begin{equation}
	\norm{T(\params_t)-\params^*}  \leq \alpha^t \norm{T(\params_0)-\params^*} \underset{t \to \infty}{\longrightarrow} 0,
\end{equation}
which shows that with any initialization in $\stableSet(\params^*)$, GD's iterations converge to $\params^*$.

Next, note that there is a neighborhood of $\params^*$ within which the set of global minima of the loss form a smooth  $m_{\text{Null}}$-dimensional manifold\footnote{This set corresponds to multiplying the weights of corresponding neurons within different layers by positive factors whose product is $1$.}. Let us denote this set of global minima around $\params^*$ by $\Theta^*$, 
and set $ \eta < 2/ \max_{\params \in \Theta^*} \{ \lambda_{\max}(\nabla^2 \loss)\} $ and limit the set $\Theta^*$ such that  $\forall i: \eta \neq 1/\lambda_{i}$.
Then, for each minimum $\params\in\Theta^*$, according to the first part of the proof, there exists a $(m-m_{\text{Null}})$-dimensional set $\stableSet(\params)$. Now, since each $\stableSet(\params)$ is contained in a hyperplane that is orthogonal to the tangent of $\Theta^*$ at $\params $, and the dimension of the tangent of $\Theta^*$ at $\params^*$ is $m_{\text{Null}}$, we have that the dimension of the union of these sets, $\bigcup_{\params\in\Theta^*}\stableSet(\params)$, is $m$. Thus, the set $\bigcup_{\params\in\Theta^*}\stableSet(\params)$ is of nonzero Lebesgue measure within $\R^m$ and for any initialization in $\bigcup_{\params\in\Theta^*}\stableSet(\params)$, GD converges to $p(\xx)$.

\section{General loss functions}\label{App: General loss function}
In this section, we discuss how our results can be extended to general loss function with a unique finite root. Assume some general loss function \begin{equation}
\mathcal{L}(f)=\frac{1}{n}\sum_{j=1}^n \ell\left(f(\xx_j), y_j\right),    
\end{equation}
where $\ell(a,b)$ is twice differentiable w.r.t.\@ $a$ and is minimized when $a=b$, \ie
\begin{equation}
    \ell'(a,b) \triangleq \frac{\partial}{\partial a} \ell (a,b) =  0 \qquad \forall a = b.
\end{equation}
Then, we can calculate the loss' gradient
\begin{equation}
    \nabla_{\params}\loss = \frac{1}{n}\sum_{j=1}^{n}\ell'\left(f(\xx_j), y_j\right)\nabla_{\params}f\left(\xx_j\right),
\end{equation}
and Hessian matrix
\begin{align}
    \nabla_{\params}^{2}\loss 
    & =\frac{1}{n}\sum_{j=1}^{n}\ell''\left(f(\xx_j), y_j\right)\nabla_{\params}f\left(\vect x_{j}\right)\nabla_{\params}f\left(\vect x_{j}\right)^{\top}+\frac{1}{n}\sum_{j=1}^{n}\ell'\left(f(\xx_j), y_j\right)\nabla_{\params}^{2}f\left(\vect x_{j}\right)\nonumber\\
    & =\frac{1}{n}\sum_{j=1}^{n}\ell''\left(f(\xx_j), y_j\right)\nabla_{\params}f\left(\vect x_{j}\right)\nabla_{\params}f\left(\vect x_{j}\right)^{\top},
\end{align}
where $ \ell''(a,b) \triangleq \frac{\partial^2}{\partial a^2} \ell(a,b) $, and in the last transition we used $ f(\xx_j) = y_j $ and  $\ell'(a, a)=0$ for all $ a \in \R $. If $\ell''\left(f(\xx_j), y_j\right)=C>0$ for all training points, then we can generalize our results by simply multiplying the RHS of \eqref{Eq: lambda max expression} by $C$. If not, the analysis can still be used but we need to add a weigtning term to the stability norm which depends on the value of $\ell''\left(f(\xx_j), y_j\right)$ for each data point.

\section{Proof of Lemma~\ref{Lemma: lambda_max upper bound}}\label{App:Upper bound proof}

In this section, our goal is to upper bound the top eigenvalue of the flattest implementation of a predictor function $f$. Here we use notation and some derivations form App.~\ref{Sec: hessian calc}. Let $ \qq $ denote the top right singular vector of $  \bPhi $, then
\begin{align}
    &\lambda_{\max}\big(\nabla_{\params}^{2}\loss\big)\nonumber\\
    &=\frac{1}{n} \norm{\bPhi\qq}^{2} \nonumber\\
    & = \frac{1}{n}\left[\rb{\sum_{j=1}^n q_j }^2+\sum_{i=1}^{k}\left[\sum_{l=1}^{d}\left(\sum_{j=1}^{n}q_j w_{i}^{(2)}\xx_{j,l}\mathbb{I}_{j,i}\right)^{2}+\left(\sum_{j=1}^{n}q_j  w_{i}^{(2)}\mathbb{I}_{j,i}\right)^{2} \right. \right. \nonumber\\
    & \hspace{7.75cm} \left. \left.+\left(\sum_{j=1}^{n}q_j \sigma\left(\xx_{j}^{\top} \ww_{i}^{(1)}+b_{i}^{(1)}\right)\right)^{2}\right]\right] \nonumber\\
    & \le \frac{1}{n}\left[n+\sum_{i=1}^{k}\left[\left(w_{i}^{(2)}\right)^{2}\left(\sum_{l=1}^{d}\left(\sum_{j=1}^{n}q_j \xx_{j,l}\mathbb{I}_{j,i}\right)^{2}+\left(\sum_{j=1}^{n}q_j \mathbb{I}_{j,i}\right)^{2}\right) \right. \right. \nonumber\\
    & \hspace{5.75cm} \left. \left. +\left(\sum_{j=1}^{n}q_j \left\Vert \vect w_{i}^{(1)}\right\Vert\sigma\left(\xx_{j}^{\top}\bar{\vect w}_{i}^{(1)}-\bar{b}_{i}^{(1)}\right)\right)^{2}\right]\right]
\end{align}
where in the inequality we used $ \rb{\sum_{j=1}^n u_j }^2 \leq  n$ for all $ \uu \in \Sb^{n-1} $ and substituted
\begin{equation}
	\bar{\vect w}_{i}^{(1)}\triangleq\frac{\vect w_{i}^{(1)}}{\left\Vert \vect w_{i}^{(1)}\right\Vert }, \qquad \bar{b}_{i}^{(1)}\triangleq\frac{-b_{i}^{(1)}}{\left\Vert \vect w_{i}^{(1)}\right\Vert }\,.
\end{equation}
Let $\Theta(f)$ be the set of all implementations corresponding to $f$. Since substituting
\begin{equation}
	w_{i}^{(1)} \rightarrow c_{i}^{-1}w_{i}^{(1)} \qquad b_{i}^{(1)} \rightarrow c_{i}^{-1}b_{i}^{(1)} \qquad w_{i}^{(2)} \rightarrow c_{i}w_{i}^{(2)} 
\end{equation}
does not affect the network's functionality $f$, we have
\begin{align}
	&\min_{\params\in\Theta(f)}\lambda_{\max}\left(\nabla_{\params}^{2}\loss\right) \nonumber\\
	& \le \min_{\params\in\Theta(f)} \frac{1}{n}\left[ n+\sum_{i=1}^{k}\left[ \left(w_{i}^{(2)}\right)^{2}\left(\sum_{l=1}^{d}\left(\sum_{j=1}^{n}q_j \xx_{j,l}\mathbb{I}_{j,i}\right)^{2}+\left(\sum_{j=1}^{n}q_j \mathbb{I}_{j,i}\right)^{2}\right) \right. \right. \nonumber \\
	& \hspace{1.5in} \left. \left. +\left(\sum_{j=1}^{n}q_j \left\Vert \vect w_{i}^{(1)}\right\Vert  \sigma\left(\xx_{j}^{\top}\bar{\vect w}_{i}^{(1)}-\bar{b}_{i}^{(1)}\right)\right)^{2}\right]\right]\nonumber\\
	& = \min_{\params\in\Theta(f), c_i^2>0} \frac{1}{n}\left[n+\sum_{i=1}^{k}\left[c_i^2\left(w_{i}^{(2)}\right)^{2}\left(\sum_{l=1}^{d}\left(\sum_{j=1}^{n}q_j \xx_{j,l}\mathbb{I}_{j,i}\right)^{2}+\left(\sum_{j=1}^{n}q_j \mathbb{I}_{j,i}\right)^{2}\right) \right. \right.
	\nonumber \\
	& \hspace{1.5in} \left. \left.   +c_i^{-2}\left\Vert \vect w_{i}^{(1)}\right\Vert^2\left(\sum_{j=1}^{n}q_j \sigma\left(\xx_{j}^{\top}\bar{\vect w}_{i}^{(1)}-\bar{b}_{i}^{(1)}\right)\right)^{2}\right]\right].
\end{align}
A necessary condition for optimality is that the derivative of the objective with respect to $c_{i}$ is equal to zero:
\begin{align}
    &2c_i\left(w_{i}^{(2)}\right)^{2}\left(\sum_{l=1}^{d}\left(\sum_{j=1}^{n}q_j \xx_{j,l}\mathbb{I}_{j,i}\right)^{2}+\left(\sum_{j=1}^{n}q_j \mathbb{I}_{j,i}\right)^{2}\right) \nonumber \\
    & \hspace{3cm} -2c_i^{-3}\left\Vert \vect w_{i}^{(1)}\right\Vert^2\left(\sum_{j=1}^{n}q_j \sigma\left(\xx_{j}^{\top}\bar{\vect w}_{i}^{(1)}-\bar{b}_{i}^{(1)}\right)\right)^{2}=0 \nonumber\\
    &\Rightarrow c_i^2 = \frac{\left\Vert \vect w_{i}^{(1)}\right\Vert\left\vert\sum_{j=1}^{n}q_j \sigma\left(\xx_{j}^{\top}\bar{\vect w}_{i}^{(1)}-\bar{b}_{i}^{(1)}\right)\right\vert}{\left\vert  w_{i}^{(2)}\right\vert\sqrt{\sum_{l=1}^{d}\left(\sum_{j=1}^{n}q_j \xx_{j,l}\mathbb{I}_{j,i}\right)^{2}+\left(\sum_{j=1}^{n}q_j \mathbb{I}_{j,i}\right)^{2}}}.
\end{align}
It is easy to verify that these solutions for $ \{ c_i \} $ are indeed global minima. Plugging this in, we get
\begin{align}
	&\min_{\params\in\Theta(f)}\lambda_{\max}\left(\nabla_{\params}^{2}\loss\right) \nonumber\\
    &\leq \min_{\params\in\Theta(f)}  \frac{1}{n}\left[n+2\sum_{i=1}^{k}\left[ \left\Vert \vect w_{i}^{(1)}\right\Vert\left\vert  w_{i}^{(2)}\right\vert\left\vert\sum_{j=1}^{n}q_j \sigma\left(\xx_{j}^{\top}\bar{\vect w}_{i}^{(1)}-\bar{b}_{i}^{(1)}\right)\right\vert \right. \right. \nonumber\\
    & \hspace{5cm} \left. \left. \times \sqrt{\sum_{l=1}^{d}\left(\sum_{j=1}^{n}q_j \xx_{j,l}\mathbb{I}_{j,i}\right)^{2}+\left(\sum_{j=1}^{n}q_j \mathbb{I}_{j,i}\right)^{2}}
	\right]\right].
\end{align}
Now, by Cauchy–Schwarz inequality three times we get
\begin{align}
	\left| \sum_{j=1}^{n}q_j \sigma\left(\xx_{j}^{\top}\bar{\vect w}_{i}^{(1)}-\bar{b}_{i}^{(1)}\right)\right| & \leq \| \qq \| \sqrt{ \sum_{j=1}^{n} \sigma^2\left(\xx_{j}^{\top}\bar{\vect w}_{i}^{(1)}-\bar{b}_{i}^{(1)}\right) }, \nonumber \\
	\left(\sum_{j=1}^{n}q_j \xx_{j,l}\mathbb{I}_{j,i}\right)^2 & \leq  \| \qq \|^2 
	\sum_{j=1}^{n} \xx^2_{j,l} \mathbb{I}_{j,i}, \nonumber \\
	\left(\sum_{j=1}^{n}q_j \mathbb{I}_{j,i}\right)^{2} & \leq \| \qq \|^2 \sum_{j=1}^{n} \mathbb{I}_{j,i}.
\end{align}
Since $ \| \qq \| = 1 $, the right hand sides of these inequalities are independent of $\qq$. Thus, using these inequalities to further upper bound the top eigenvalue we have
\begin{align}
	&\min_{\params\in\Theta(f)}\lambda_{\max}\left(\nabla_{\params}^{2}\loss\right) \nonumber\\
	&\leq \min_{\params\in\Theta(f)} \left[1+\frac{2}{n}\sum_{i=1}^{k}\left[
	\left\Vert \vect w_{i}^{(1)}\right\Vert\left\vert  w_{i}^{(2)}\right\vert\sqrt{\sum_{j=1}^{n}\sigma^2\left(\xx_{j}^{\top}\bar{\vect w}_{i}^{(1)}-\bar{b}_{i}^{(1)}\right)} \sqrt{\sum_{l=1}^{d}\left(\sum_{j=1}^{n}\xx_{j,l}^2\mathbb{I}_{j,i}\right)+\left(\sum_{j=1}^{n}\mathbb{I}_{j,i}\right)}
	\right]\right] \nonumber \\ 
	& =  1+\frac{2}{n} \min_{\params\in\Theta(f)} \sum_{i=1}^{k}
	\left\Vert \vect w_{i}^{(1)}\right\Vert\left\vert  w_{i}^{(2)}\right\vert\sqrt{\sum_{j=1}^{n}\sigma^2\left(\xx_{j}^{\top}\bar{\vect w}_{i}^{(1)}-\bar{b}_{i}^{(1)}\right)} \sqrt{\sum_{j=1}^{n} \left( \norm{\xx_{j}} ^2+1\right) \mathbb{I}_{j,i} } .
\end{align}
To continue upper bounding the sharpness ($ \lambda_{\max} (\nabla^2 \loss ) $) of the flattest implementation, we can consider some implementation of $ f $. Specifically, since $f \in \domain$, it can be represented as 
\begin{equation}
	f(\xx) = \sum_{i=1}^{k'} a_i\sigma(\vv_i^\top \xx -b_i) + \beta \xx^\top \hh  + c,
\end{equation}
where $\|\vv_i\| = 1$ for all $i$, $ \| \hh \| = 1 $, $a_i \neq 0$ for all $i$, and $(\vv_i,b_i)\neq \pm (\vv_j,b_j)$ for all $i\neq j$ (\ie the knots of all ReLU units are distinct\footnote{If $(\vv_i,b_i) = (\vv_j,b_j)$ then one of the neurons is redundant. If $(\vv_i,b_i) = -(\vv_j,b_j)$ then these units can be combined into an affine function, which we ``absorb'' into the term $ \alpha \xx^\top \hh + c $.}), and where $k' \leq k$. Thus, we use the following implementation of $ f $
\begin{equation}
	\vect w_{i}^{(1)} = \vv_i, \quad
	b_{i}^{(1)} = - b_i, \quad
	w_{i}^{(2)} = a_i, \quad
	b^{(2)} = c+\beta \tau,
\end{equation}
for $ i \in [k'] $, where
\begin{equation}
	\tau = \frac{1}{n} \sum_{j = 1}^{n} \hh^\top \xx_j.
\end{equation}
Additionally, if needed, we add two ReLU neurons to implement the linear component.
\begin{align}
	\vect w_{k'+1}^{(1)} = \hh, \quad
	b_{k'+1}^{(1)} = - \tau, \quad
	w_{k'+1}^{(2)} = \beta, \nonumber \\
	\vect w_{k'+2}^{(1)} = -\hh,
	\quad b_{k'+2}^{(1)} = \tau, \quad
	w_{k'+2}^{(2)} = -\beta.
\end{align}
Thus,
\begin{align}\label{eq:upperBound1}
	&1+\frac{2}{n} \min_{\params\in\Theta(f)} \sum_{i=1}^{k}
	\left\Vert \vect w_{i}^{(1)}\right\Vert\left\vert  w_{i}^{(2)}\right\vert\sqrt{\sum_{j=1}^{n}\sigma^2\left(\xx_{j}^{\top}\bar{\vect w}_{i}^{(1)}-\bar{b}_{i}^{(1)}\right)} \sqrt{\sum_{j=1}^{n} \left( \norm{\xx_{j}} ^2+1\right) \mathbb{I}_{j,i} } \nonumber \\
	& \leq 
	1+\frac{2}{n} \sum_{i=1}^{k'}
	|a_i| \sqrt{\sum_{j=1}^{n}\sigma^2\left(\xx_{j}^{\top}\vv_i-b_{i}\right)} \sqrt{\sum_{j=1}^{n} \left( \norm{\xx_{j}} ^2+1\right) \mathbb{I}_{j,i} } \nonumber \\
	& 
	+\frac{2}{n}\left(
	|\beta| \sqrt{\sum_{j=1}^{n}\sigma^2\left(\xx_{j}^{\top}\hh-\tau \right)} \sqrt{\sum_{j=1}^{n} \left( \norm{\xx_{j}} ^2+1\right) \mathbb{I}_{j,k'+1} } \right. \nonumber \\
	& \qquad \quad \left. + |\beta| \sqrt{\sum_{j=1}^{n}\sigma^2\left(-\xx_{j}^{\top}\hh+\tau \right)} \sqrt{\sum_{j=1}^{n} \left( \norm{\xx_{j}} ^2+1\right) \mathbb{I}_{j,k'+2} }
	\right) \nonumber \\
	& \leq 1+\frac{2}{n} \sum_{i=1}^{k'}
	|a_i| \sqrt{\sum_{j=1}^{n}\sigma^2\left(\xx_{j}^{\top}\vv_i-b_{i}\right)} \sqrt{\sum_{j=1}^{n} \left( \norm{\xx_{j}} ^2+1\right) \mathbb{I}_{j,i} } \nonumber \\
	& \quad +\frac{4|\beta|}{n}
	\sqrt{\sum_{j=1}^{n}\left(\xx_{j}^{\top}\hh-\tau \right)^2} \sqrt{\sum_{j=1}^{n} \left( \norm{\xx_{j}} ^2+1\right) }  , 
\end{align}
where in the last inequality we used
\begin{equation}
	\max\left\{\sigma^2\left(\xx_{j}^{\top}\hh-\tau \right) , \sigma^2\left(-\xx_{j}^{\top}\hh+\tau \right) \right\} \le \left(\xx_{j}^{\top}\hh-\tau \right)^2,
\end{equation}
and $\norm{\xx_{j}} ^2+1>0$ and thus removing the indicator term only increases the RHS of \eqref{eq:upperBound1}. Recall that we denoted $C_{i}\subseteq\{\xx_{j}\}$ be the set of training points for which the $i$th neuron is active, and  $n_{i}=|C_{i}|$. Then,
\begin{align}
	& \frac{2}{n} \sum_{i=1}^{k'}
	|a_i| \sqrt{\sum_{j=1}^{n}\sigma^2\left(\xx_{j}^{\top}\vv_i-b_{i}\right)} \sqrt{\sum_{j=1}^{n} \left( \norm{\xx_{j}} ^2+1\right) \mathbb{I}_{j,i} } \nonumber \\
	& = 2 \sum_{i=1}^{k'}
	|a_i| \frac{n_i}{n} \sqrt{ \frac{1}{n_i} \sum_{j \in C_i} \left(\xx_{j}^{\top}\vv_i-b_{i}\right)^2} \sqrt{\frac{1}{n_i} \sum_{j \in C_i } \left( \norm{\xx_{j}} ^2+1 \right) } \nonumber \\
	& = 2 \sum_{i=1}^{k'}
	|a_i| \prob\left( \xx^{\top}\vv_i>b_{i} \right) \sqrt{ \E \left[ \left(\xx^{\top}\vv_i-b_{i}\right)^2 \middle| \xx^{\top}\vv_i>b_{i} \right]}  \sqrt{\E \left[  1+ \norm{\xx}^2 \middle| \xx^{\top}\vv_i>b_{i} \right]  }.
\end{align}
Additionally,
\begin{align}
	& \frac{4|\beta|}{n} 
	\sqrt{\sum_{j=1}^{n}\left(\xx_{j}^{\top}\hh-\tau \right)^2} \sqrt{\sum_{j=1}^{n} \left( \norm{\xx_{j}} ^2+1\right) } 
	= 4 |\beta| \sqrt{ \Var \left(\xx^\top \hh \right)  } \sqrt{ 1+  \E \left[ \norm{\xx}^2 \right]}.
\end{align}
Define
\begin{equation}
	\hat{g} (\vv,b) = \prob\left( \xx^{\top}\vv > b \right) \sqrt{ \E \left[ \left(\xx^{\top}\vv -b \right)^2 \middle| \xx^{\top}\vv >b \right]}  \sqrt{1 + \E \left[ \norm{\xx}^2 \middle| \xx^{\top}\vv > b \right]  }.
\end{equation}
Then, we have
\begin{align}\label{Eq: sharpness upper bound 1}
	\min_{\params\in\Theta(f)}\lambda_{\max}\left(\nabla_{\params}^{2}\loss\right) 
	& \leq 1  + 2 \sum_{i=1}^{k'}
	|a_i| \hat{g}(\vv_i,b_i) + 4 |\beta| \sqrt{ \Var \left(\xx^\top \hh \right)  } \sqrt{ 1+  \E \left[ \norm{\xx}^2 \right]} \nonumber \\
	& = 1+2\int_{\Sb^{d-1}\times \R} \abs{\bb{(\Rad^*)^{-1}\Delta f}(\vv,b)} \hat{g}(\vv,b) \dif s(\vv) \dif b \nonumber \\
	& \qquad + 4 |\beta| \sqrt{ \Var \left(\xx^\top \hh \right)  } \sqrt{ 1+  \E \left[ \norm{\xx}^2 \right]} \nonumber \\
	& = 1+2 \snorm{f}{\hat{g}} + 4 |\beta| \sqrt{ \Var \left(\xx^\top \hh \right)  } \sqrt{ 1+  \E \left[ \norm{\xx}^2 \right]}.
\end{align}
Note that,
\begin{equation}\label{Eq: sharpness upper bound 2}
	\sqrt{ \Var \left(\xx^\top \hh \right)  } \leq \sqrt{\lambda_{\max} \big( \CovMat_{\xx} \big) },
\end{equation}
where $ \CovMat_{\xx}  $ is the covariance matrix of $ \xx $. Additionally,
\begin{align}
	\norm{\nabla f (\xx)}
	& = \norm{\sum_{i=1}^{k'} a_i\vect{1}_{\vv_i^\top \xx -b_i>0}\vv_i + \beta \hh} 
	\nonumber \\
	& \geq \abs{\beta} \| \hh \| - \sum_{i=1}^{k'}\norm{ a_i\vect{1}_{\vv_i^\top \xx -b_i>0}\vv_i} 
	\nonumber \\
	& = \abs{\beta} - \sum_{i=1}^{k'} |a_i| \norm{\vv_i} \vect{1}_{\vv_i^\top \xx -b_i>0} \nonumber \\
	& \geq \abs{\beta} - \sum_{i=1}^{k'} |a_i| \nonumber \\
	& = \abs{\beta} - \int_{\Sb^{d-1}\times \R} \abs{\left(\Rad^*\right)^{-1}\Delta f} \dif s(\vv) \dif b
	\nonumber \\
	& = \abs{\beta} - \norm{f}_{\Rad}.
\end{align}
Therefore, for any $ \xx \in \R^d $
\begin{equation}
	| \beta | \leq \norm{\nabla f (\xx) } +\norm{f}_{\Rad} .
\end{equation}
Taking the tightest bound we obtain
\begin{equation}\label{Eq: sharpness upper bound 3}
	| \beta | \leq \norm{f}_{\Rad} +\inf_{\xx \in \R^d } \norm{\nabla f (\xx) } .
\end{equation}
Overall, combining \eqref{Eq: sharpness upper bound 1}, \eqref{Eq: sharpness upper bound 2}, and \eqref{Eq: sharpness upper bound 3} we obtain
\begin{align}
	\min_{\params\in\Theta(f)}\lambda_{\max}\left(\nabla_{\params}^{2}\loss\right) 
	\leq  1+2 \snorm{f}{\hat{g}}
	+ 4 \rb{\norm{f}_{\Rad} +\inf_{\xx \in \R^d } \norm{\nabla f (\xx) }} \sqrt{\lambda_{\max} \big( \CovMat_{\xx} \big) } \sqrt{ 1+  \E \left[ \norm{\xx}^2 \right]}.
\end{align}

\section{Proof of Proposition~\ref{Th:ApproxTheorem}}\label{App:ApproxTheoremProof}
Let $f \in W_w^{d+1,1}(\R^d)$. First, we show that this implies both $\|f\|_{\Rad}$ and $\|f\|_{\Rad,\hat g}$ are finite. Since we assume $d$ is odd $(-\Delta)^{(d+1)/2}f$ is an integral power of the negative Laplacian applied to $f$, hence can be expanded as a linear combination of $d+1$ order partial derivatives, and so
\begin{equation}
	\|(-\Delta)^{(d+1)/2}f\|_{1,w} \leq a_d \sum_{|\beta|=d+1}\|\partial^{\beta}f\|_{1,w} \leq a_d\|f\|_{W_w^{d+1,1}(\R^d)},
\end{equation}
where $a_d$ is a constant depending on $d$ but independent of $f$. Therefore, $\|(-\Delta)^{(d+1)/2}f\|_{1,w}$ is finite. In particular, this shows $(-\Delta)^{(d+1)/2}f \in L^1(\R^d)$, and so $\Rad(-\Delta)^{(d+1)/2}f$ exists in a classical sense. Therefore, by \citep[Prop. 1]{ongie2020function} we have $\|f\|_{\Rad} = \gamma_d \|\Rad(-\Delta)^{(d+1)/2}f\|_1$ and $\|f\|_{\Rad,\hat g} = \gamma_d \|\hat{g}\cdot\Rad(-\Delta)^{(d+1)/2}f\|_1$ where $\gamma_d =\frac{1}{2(2\pi)^{d-1}}$.

Recall that $w(\xx) = \Rad^*[1 + |b|](\xx) = c_d + \zeta_d \|\xx\|$, with $c_d = \int_{\mathbb{S}^{d-1}} \dif s(\vv)$ and $\zeta_d = \int_{\mathbb{S}^{d-1}} |v_1| \dif s(\vv)$. Therefore, we have
\begin{align}
	\|(-\Delta)^{(d+1)/2}f\|_{1,w} & = \int_{\R^d} \left|(-\Delta)^{(d+1)/2}f(\xx)\right| w(\xx) \dif\xx\nonumber\\
	& = \int_{\mathbb{S}^{d-1}\times \R} \Rad\left\{\left|(-\Delta)^{(d+1)/2}f\right|\right\}(\vv,b)\, (1 + |b|) \dif s(\vv) \dif b\nonumber \\
	& \geq \int_{\mathbb{S}^{d-1}\times \R} \left|\Rad\left\{(-\Delta)^{(d+1)/2}f\right\}(\vv,b)\right|\, (1 + |b|) \dif s(\vv) \dif b\nonumber \\
	& = \int_{\mathbb{S}^{d-1}\times \R} \left|\Rad\left\{(-\Delta)^{(d+1)/2}f\right\}\right|\frac{1+|b|}{1 + {\hat g}(\vv,b)}{(1 + {\hat g}(\vv,b))} \dif s(\vv) \dif b \nonumber\\
	& \geq C_{\hat{g}}\int_{\mathbb{S}^{d-1}\times \R} \left|\Rad\left\{(-\Delta)^{(d+1)/2}f\right\}\right|(1 + {\hat g}(\vv,b)) \dif s(\vv) \dif b \nonumber\\
	& = \gamma_d^{-1}C_{\hat g}(\|f\|_{\Rad} + \|f\|_{\Rad,\hat g}),
\end{align}
where $C_{\hat g} = \inf_{(\vv,b)\in \mathbb{S}^{d-1}\times \R} \frac{1 + |b|}{{1 + \hat g}(\vv,b)}$ is finite and non-zero because for all $\vv \in \mathbb{S}^{d-1}$ we have $\hat{g}(\vv,b) = O(|b|)$ as $|b|\rightarrow\infty$ where the implied constant is independent of $\vv$. Therefore, we have shown $\|f\|_\Rad$ and $\|f\|_{\Rad,\hat g}$ are finite as claimed.

In the following sections, we show how to construct a sequence of solutions $\{f_k\}$ with the desired properties. In particular, we define $f_k = p_k + q_k$ where
\begin{itemize}
    \item $p_k \in \domain$ is such that 
    \begin{itemize}
        \item $p_k \rightarrow f$ pointwise
        \item none of the ReLU knots in $p_k$ contain any of the data points $\{\xx_j\}_{j=1}^n$, and
        \item $\|p_k\|_\Rad + \snorm{p_k}{\hat{g}} \leq \tilde{c}_{d,\hat{g}}\|f\|_{W_w^{d+1,1}(\R^d)}$ for all $k$, where $\tilde{c}_{d,\hat{g}}$ is a constant independent of $f$.
    \end{itemize}
    \item $q_k \in \mathcal{F}_{4n}$ is such that
    \begin{itemize}
        \item $q_k \rightarrow 0$ pointwise
        \item none of the ReLU knots in $q_k$ contain any of the data points $\{\xx_j\}_{j=1}^n$, and
        \item $\|q_k\|_{\Rad}$ and $\snorm{q_k}{\hat{g}}$ go to zero as $k\rightarrow\infty$.
    \end{itemize}
\end{itemize}
These properties imply that $f_k \in \mathcal{F}_{k+4n}$, $f_k \rightarrow f$ pointwise, none of the ReLU knots in $f_k$ contain any of the data points $\{\xx_j\}_{j=1}^n$, and there exists a $k_0$ such that $\|f_k\|_\Rad + \snorm{f_k}{\hat{g}} \leq c_{d,\hat{g}}\|f\|_{W_w^{d+1,1}(\R^d)}$ for all $k\geq k_0$, where ${c}_{d,\hat{g}} = 2\tilde{c}_{d,\hat{g}}$ is a constant independent of $f$. Note that after re-indexing, we may assume $f_k \in \domain$.

Finally, if $K \subset \R^d$ is any compact subset, the pointwise convergence of $f_k$ to $f$ can be upgraded to convergence in $L^1$-norm on $K$ using the Lebesgue Dominated Convergence Theorem: by the bounds on the Lipschitz constant of a function given in terms of the $\Rad$-norm in Proposition 8 of \citep{ongie2020function}, we have $|f_k(\xx)-f_k(\bold{0})|\leq \|f_k\|_\Rad\|\xx\| \leq (\|p_k\|_\Rad + \|q_k\|_\Rad)\|\xx\|$ for all $\xx \in K$, which in turn implies $|f_k(\xx)|\leq C < +\infty$, since $f_k(\bold{0}) \rightarrow f(\bold{0})$, $\|p_k\|_\Rad \leq \|f\|_\Rad$, and $\|q_k\|_\Rad \rightarrow 0$.


\subsection{Construction of approximating sequence}
Here, we construct the sequence of approximating functions $\{p_k\}$ with the properties outlined above.

Let $\alpha = (\Rad^*)^{-1}\Delta f = -\gamma_d \Rad(-\Delta)^{(d+1)/2} f$. Then  $\|f\|_\Rad = \|\alpha\|_{1}$ is finite, and so $\alpha$ is an $L^1$ function, which can be identified with a finite signed measure. Since $\|f\|_{\Rad,g} = \|\hat g \cdot \alpha\|_{1}$ is also finite, we see that $ \tilde{\alpha}(\vv,b) := (1 + {\hat g}(\vv,b)))\alpha(\vv,b) $ is also an $L^1$ function which can be identified with a finite signed measure. This implies there exists a sequence of measures $\{\tilde{\alpha}_k\}$ such that each $\tilde{\alpha}_k$ is a sum of at most $k$ Diracs, converging narrowly\footnote{Namely, $\langle \alpha_k, \varphi\rangle \rightarrow \langle \alpha, \varphi\rangle$ for all continuous and bounded functions $\varphi:\mathbb{S}^{d-1}\times \R \rightarrow \R$.} to $ \tilde{\alpha}$ with $\|\tilde{\alpha}_k\|_{\mathrm{TV}} \leq \|\tilde{\alpha}\|_1$; see Section \ref{sec:app:addldetails} below for more details. Define $ \alpha_k(\vv,b) = \tilde{\alpha}_k(\vv,b)/(1+{\hat g}(\vv,b))$, which is also a sum of at most $k$ Diracs. Then it is easy to show $\alpha_k \rightarrow \alpha$ narrowly, as well. By results in \cite{ongie2020function}, explained in detail in Section \ref{sec:app:addldetails} below, this implies there exists a sequence of single hidden-layer ReLU networks $p_k \in \mathcal{F}_k$ converging to $f$ pointwise with $\snorm{p_k}{1+\hat{g}} =  \|(1+\hat{g})\cdot \alpha_k\|_{\mathrm{TV}} = \|\tilde{\alpha}_k\|_{\mathrm{TV}}$. Additionally, the $p_k$ can be defined in such a way that none of its ReLU knots contain any training point; see Section \ref{sec:app:addldetails} below for more details.

Therefore, for all $k$ we have
\begin{equation}
	\|p_k\|_{\Rad} + \|p_k\|_{\Rad,\hat g} = \|p_k\|_{\Rad,1+\hat g} = \|\tilde{\alpha}_k\|_{\mathrm{TV}} \leq \|\tilde{\alpha}\|_1 = \|f\|_{\Rad,1+\hat g} = \|f\|_{\Rad} + \|f\|_{\Rad,\hat g}.
\end{equation}
Combining this inequality with the bound on $\|f\|_{\Rad} + \|f\|_{\Rad,\hat g}$ given above, we see that
\begin{equation}
	\|p_k\|_{\Rad} + \|p_k\|_{\Rad,\hat g} \leq \|f\|_{\Rad} + \|f\|_{\Rad,\hat g} \leq \tilde{c}_{d,\hat g}\|f\|_{W_w^{d+1,1}(\R^d)},
\end{equation}
where  $\tilde{c}_{d,\hat g} = a_d \gamma_d C_{\hat g}^{-1}$ is a constant defined independently of $f$.

\subsection{Construction of correction networks}
While the sequence $\{p_k\}$ constructed above converges pointwise to $f$, the functions $p_k$ do not necessarily fit the training data. Here we construct a fixed-width ``correction'' network $q_k$ such that $f_k := p_k + q_k$ fits the training data, and satisfies the properties outlined above.

First, observe that for any finite set of training points $\{\xx_j\}_{j=1}^n \subset \R^d$ there exists an $\epsilon > 0$ and parameters $(\vv_j,b_j)\in \Sb^{d-1}\times \R$, $j\in [n]$, such that $\xx_j$ is contained in the hyperplane $\{\xx \in \R^d : \vv_j^\top \xx = b_j\}$ and $\xx_j$ is the only training point contained in the tube $T_j := \{\xx \in \R^d : |\vv_j^\top\xx - b_j| < 2\epsilon\}$.

Consider the function $z_\epsilon:\R\rightarrow \R$ defined by $z_\epsilon(t) = \epsilon^{-1}(\sigma(t+2\epsilon)-\sigma(t+\epsilon) - \sigma(t-\epsilon) + \sigma(t-2\epsilon))$, which vanishes for $|t| > 2\epsilon$, and is such that $z_\epsilon(0) = 1$. Likewise, for all $j \in [n]$, define the ridge function $r_j:\R^d\rightarrow \R$ by $r_j(\xx) = z_\epsilon(\vv_j^\top\xx -b_j)$. Since the support of $r_j$ coincides with the tube $T_j$, and $\vv_j^\top\xx_j -b_j = 0$, we see that $r_j(\xx_i) = 1$ if $i=j$ and $r_j(\xx_i) = 0$ if $i\neq j$. Also, since each $r_j$ is a sum of four ReLU units weighted by $\epsilon^{-1}$, we have $\|r_j\|_{\Rad} = 4\epsilon^{-1}$. For any $k$, define $a_{k,j} := f(\xx_j)-p_k(\xx_j)$ for all $j \in [n]$. Then the function \[q_k(\xx) := \sum_{j=1}^n a_{k,j} r_j(\xx)\]
is such that $q_k(\xx_j) = a_{k,j}$, and so $f_k(\xx_j) = p_k(\xx_j) + q_k(\xx_j) = a_{k,j} + p_k(\xx_j) = y_j$ for all $j\in[n]$. By construction, none of the ReLU knots of $q_k$ contain any of the data points $\{\xx_j\}_{j=1}^n$. Also, we have the bound $\|q_k\|_\Rad \leq 4\epsilon^{-1}n\max_{j\in[n]}|a_{k,j}|$. Because $p_k \rightarrow f$ pointwise, we see that $\max_{j\in[n]}|a_{k,j}| = \max_{j\in[n]}|f(\xx_j)-p_k(\xx_j)| \rightarrow 0$ as $k\rightarrow \infty$, and so $\|q_k\|_\Rad \rightarrow 0$. Also, because the parameter pairs defining every ReLU unit making up $q_k$ belong to a finite subset $F \subset \mathbb{S}^{d-1}\times \R$ we have $\snorm{q_k}{\hat{g}} \leq \|q_k\|_{\Rad}\cdot \max_{(\vv,b)\in F} \hat{g}(\vv,b)$, which also vanishes as $k\rightarrow \infty$. Finally, since $|q_k(\xx)| \leq n \cdot \max_{j\in[n]}|a_{k,j}|$ we see that $q_k \rightarrow 0$ pointwise.


\subsection{Additional Details}\label{sec:app:addldetails} Here we describe in more detail how to approximate a function with finite $\Rad$-norm by a sequence of finite-width networks whose ReLU knots do not contain the training points.

First, we recall several results from \cite{ongie2020function} as needed here. For any function $f:\R^d\rightarrow \R$ with $\|f\|_\Rad < \infty$, there exists a unique finite even measure $\alpha$ defined over Radon domain $\Sb^{d-1}\times \R$, a unique vector $\ww \in \R^d$, and a unique constant $c \in \R$ such that $f(\xx) = h_\beta(\xx) + \xx^\top \ww + c$ where $h_\beta(\xx) := \int (\sigma(\vv^\top \xx - b)-\sigma(b)) \dif \beta(\vv,b)$ (\cite{ongie2020function}[Lemma 10]). 
We will also use the fact that if $\{\alpha_k\}$ is a sequence of finite measures converging narrowly to $\alpha$, then $h_{\alpha_k} \rightarrow h_{\alpha}$ pointwise (\cite{ongie2020function}[Lemma 5]).

Supposes $\|f\|_\Rad < \infty$, and let $\alpha$ be the unique even measure such that $f(\xx) = h_\alpha(\xx) + \xx^\top \ww + c$. Our approximating sequence will have the form $f_k(\xx) = h_{\alpha_k}(\xx) + \xx^\top \ww + c$ where each $\alpha_k$ is a \emph{discrete measure} and is such that $\alpha_k \rightarrow \alpha$ narrowly. Below we detail how to construct the measures $\alpha_k$ such that $f_k$ has the desired properties.

First, if $\beta$ is any discrete measure, meaning $\beta = \sum_{i=1}^k a_i \delta_{(\vv_i,b_i)}$ for some $(\vv_i,b_i) \in \Sb^{d-1}\times \R$ for $i\in[k]$, then the function $h_\beta$ is expressible as a sum of $k$ ReLU units: $h_\beta(\xx) = \sum_{i=1}^k a_i \sigma(\vv_i^\top \xx - b_i) + c$ for some $c \in \R$, and so $g \in \domain$. 
 Let $S \subset \Sb^{d-1}\times \R$ be the subset all parameter pairs $(\vv,b)$ such that $\vv^\top\xx_j = b$ for some $j \in [n]$. Then if $(\vv_i,b_i) \in S^c$ for all $i\in[k]$, none of the ReLU knots in $h_\beta$ contain any of the data points $\{\xx_j\}_{j=1}^n$. Also, since $S$ is a finite union of $d-1$ dimensional sets in the $d$-dimensional space $\Sb^{d-1}\times \R$, its complement $S^c$ is a dense set in $\Sb^{d-1}\times \R$. This means for any open set $U \subset \Sb^{d-1}\times \R$ the set $U\cap S^c$ is non-empty, a property we will use below.

Now we construct the sequence of discrete measures $\alpha_k$ by adapting the proof of Theorem 6.9 in \cite{malliavin1995integration}.
For simplicity, assume $\alpha$ has compact support $C 
 \subset \Sb^{d-1}\times \R$.
 For all $k > 2$, let $\{B_{k,i}\}_{i=1}^{k-2}$ be any finite open cover of $C$ by $k-2$ balls $B_{k,i}$ of radius $r_k$, such that $r_k \rightarrow 0$ as $k\rightarrow 0$. 
 Let $\tilde{A}_{k,1} = U_{k,1}$, $\tilde{A}_{k,2} = U_{k,2} \cap U_{k,1}^c$, $\tilde{A}_{k,3} = U_{k,3} \cap U_{k,1}^c \cap U_{k,2}^c$, and so on, and define $A_{k,i} = \tilde{A}_{k,i} \cap C$. Removing any sets $A_{k,i}$ that are empty, we see that $\{A_{k,i}\}_{i}$ is a partition of $C$ by sets each with diameter $\leq 2r_k$. Now let $(\vv_{k,i},b_{k,i})$ be any point in  $A_{k,i}\cap S^c$, which is guaranteed to be non-empty since each $A_{k,i}$ has non-empty interior, and define the discrete measure $\alpha_k = \sum_{i=1}^{k-2} \alpha(A_{k,i}) \delta_{(\vv_{k,i},b_{k,i})}$. Then it is easy to prove that $\alpha_k \rightarrow \alpha$ narrowly; see \cite[pg. 99-100]{malliavin1995integration} for details and the extension to the case where $\alpha$ has non-compact support. 
 
 Therefore, $f_k(\xx) := h_{\alpha_k}(\xx) + \ww^\top \xx + c$ converges pointwise to $f$. Also, since $\alpha_k$ has at most $k-2$ atoms by construction, and the linear part $\xx^\top \ww$ can be implemented as a sum of two ReLU units, we see that $f_k \in \domain$. Furthermore, since none of the parameters of the ReLU units in $f_k$ are in $S$ by construction, none of the ReLU knots in $f_k$ contain any of the data points $\{\xx_j\}_{j=1}^n$.

\section{Stability norm of ``pyramid'' function}\label{App:Pyramid 2d example}
Here, to provide a better understanding of the depth separation result in Proposition \ref{prop: pyramid}, we show by direct calculation that the ``pyramid'' function in $d=2$ dimensions, given by
\begin{equation}
    p(\xx) = p(x_1,x_2) = \sigma(1-|x_1|-|x_2|),
\end{equation}
fails to have finite stability norm. In particular, we explicitly compute $(\Rad^*)^{-1}\Delta p$ as a tempered distribution and show it is not a finite measure (\ie must be a distribution of order $>0$), which implies it cannot have finite stability norm under the assumptions in Proposition \ref{prop: pyramid}.

First, observe that $\Delta p$ is linear combination of Diracs supported on finite line segments $\ell_k$ defining the ``edges'' of the pyramid:
\begin{equation}
    \Delta p(\xx) = \sum_{k} c_k \delta_{\ell_k}.
\end{equation}
This means that if $\phi$ is any Schwartz class test function, then
\begin{equation}
    \langle \Delta p(\xx), \phi \rangle = \sum_{k} c_k \int_{\ell_k}\phi(\xx) \dif s(\xx).
\end{equation}
Note that $\Delta p(\xx)$ is a finite measure (\ie a distribution of order zero), since
\begin{equation}
|\langle \Delta p(\xx), \phi \rangle| \leq \left(\sum_{k}|c_k||\ell_k|\right) \|\phi\|_\infty,
\end{equation}
where $|\ell_k|$ is the length of the line segment $\ell_k$. See Fig.~\ref{fig:pyramid_and_laplace} below for illustration.

\begin{figure}[ht]
	\centering
	\begin{subfigure}{0.49\linewidth}
		\includegraphics[width=\linewidth]{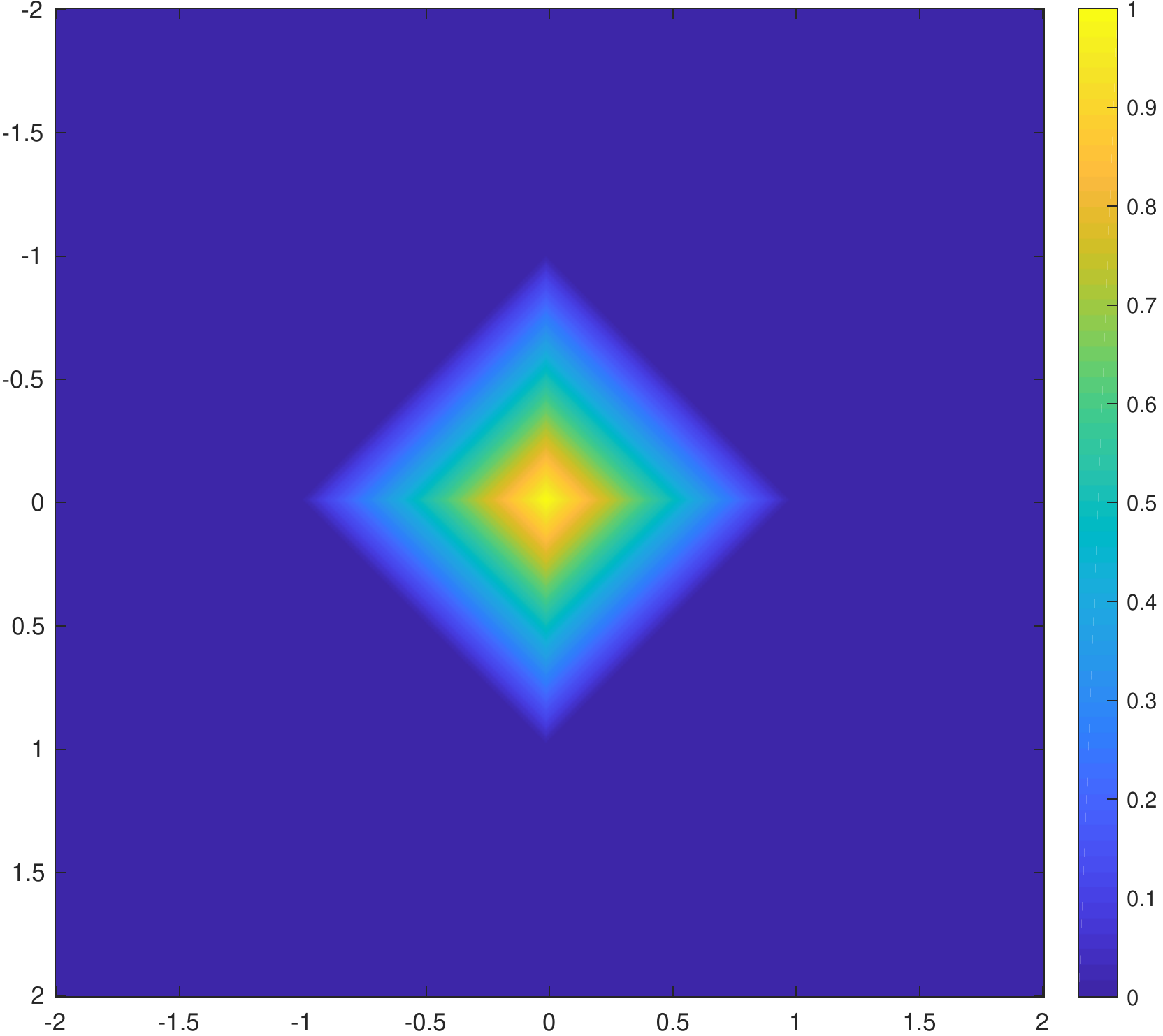}
		\caption{Pyramid function $p$}
		\label{fig:Pyramid_p}
	\end{subfigure}
	\begin{subfigure}{0.49\linewidth}
		\includegraphics[width=\linewidth]{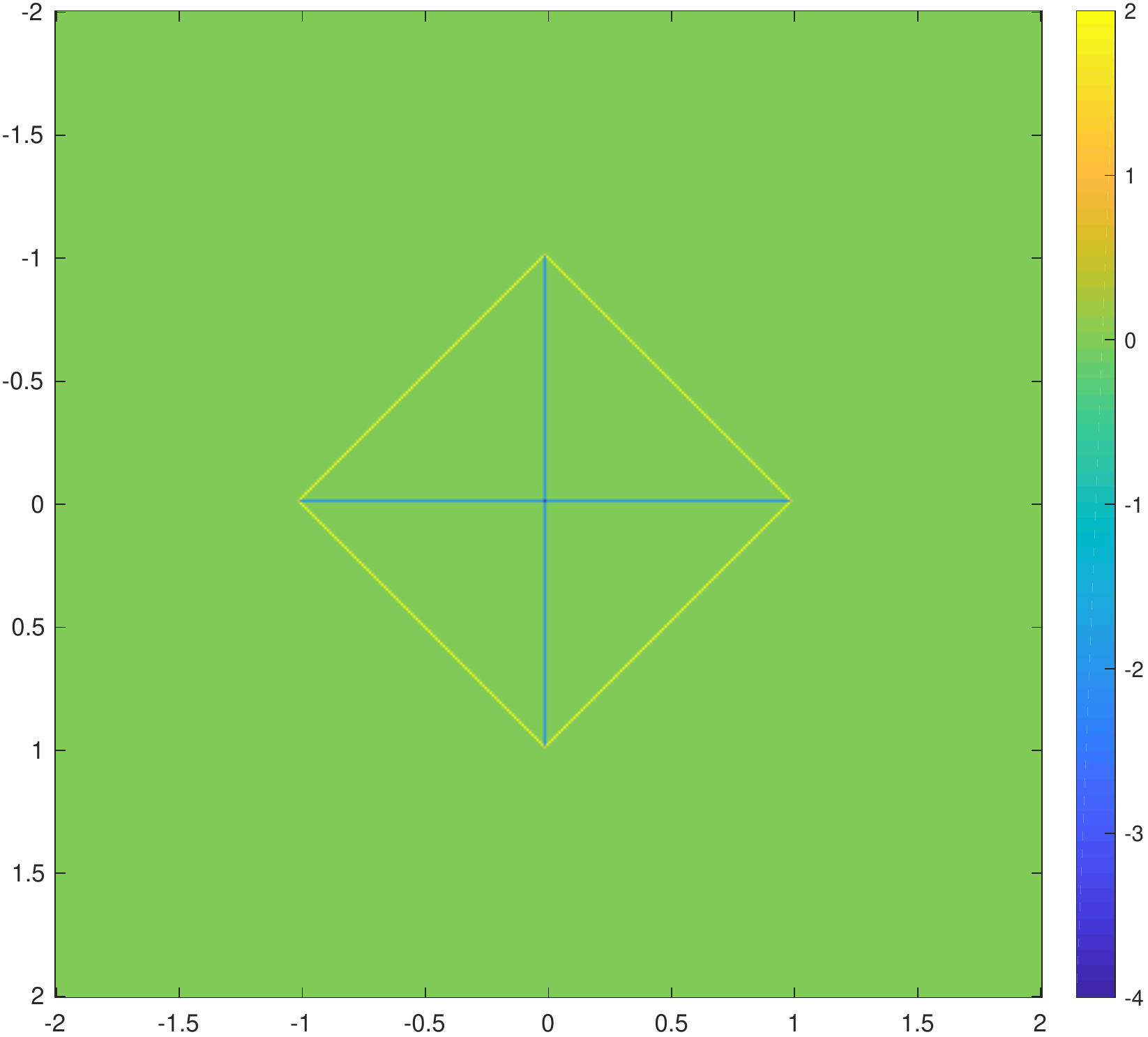}
		\caption{Approximation of $\Delta p$}
		\label{fig:Pyramid_rho}
	\end{subfigure}
	\caption{\textbf{Visualizations of the pyramid function $p$ and its Laplacian - $\Delta p$.}}
	\label{fig:pyramid_and_laplace}
\end{figure}

Now we compute $\Rad\Delta p$, the Radon transform of $\Delta p$, which exists as a tempered distribution. First, we show how to compute $\Rad\delta_\ell$ where $\ell$ denotes a general line segment. Let $\varphi$ is any Schwartz class test function defined in Radon domain, then
\begin{equation}
    \langle \Rad\delta_\ell, \varphi\rangle = 
    \int_{\ell}(\Rad^*\varphi)(\xx) \dif s(\xx) = \int_{\ell} \int_{\mathbb{S}^{1}} \varphi(\vv,\vv^\top\xx) \dif \vv \dif s(\xx) = \int_{\mathbb{S}^{1}} \int_{\ell} \varphi(\vv,\vv^\top\xx)  \dif s(\xx) \dif \vv,
\end{equation}
where the exchange of integrals is justified by Fubini's theorem since $\delta_\ell$ is a finite measure. 

Suppose $\ell$ is a vertical line segment $\ell = \{(0,t) : t \in [c,d]\}$. Assuming $\vv$ is such that $v_2 \neq 0$, then the inner integral above with $\ell$ in place of $\ell_k$ above simplifies as
\begin{align}
\int_{\ell} \varphi(\vv,\vv^\top\xx)  \dif s(\xx) & = \int_{c}^d \varphi(\vv,v_2 t) \dif t \nonumber\\
& = \frac{1}{|v_2|}\int_{|v_2|c}^{|v_2|d} \varphi(\vv,b) \dif b\nonumber\\
& = \int \frac{1}{|v_2|}\bone_{[|v_2|a,|v_2|b]}(b) \varphi(\vv,b)\dif b.
\end{align}
In the event that $v_2 = 0$  we have
\begin{equation}
\int_\ell \phi(\vv,x_1)\dif s(\xx) = \int_c^d \phi(\vv,0)\dif t = |d-c|\phi_\vv(0)  = |d-c|\langle \delta_{0}, \phi_\vv\rangle.
\end{equation}
Therefore, we have shown
\begin{equation}
    \Rad\delta_{\ell}(\vv,b) = 
    \begin{cases}
        \frac{1}{|v_2|}\bone_{[|v_2|c,|v_2|d]}(b) & \text{if}~v_2 \neq 0, \\
        |d-c|\delta(b) & \text{if}~v_2 = 0.
    \end{cases}
\end{equation}

Now, consider one of line segments $\ell_k$ coinciding with the edges of the pyramid. This can be parameterized as $\ell_k = \{b_k\vv_k + t\vv_k^\perp : t\in [c_k,d_k]\}$ where $\vv_k \in \mathbb{S}^1$ is a unit vector, $b_k \in \mathbb{R}$ is a constant, and $\vv_k^\perp$ is orthogonal to $\vv_k$. Let $\theta_k$ is the angle such that $\vv_k = [\cos(\theta_k),\sin(\theta_k)]$, then $\ell_k$ is a rotation of the vertical line segment $\ell$ through the angle $\theta_k$, and translation by $b_k\vv_k$. Therefore, by properties of Radon transforms,
\begin{equation}
\Rad\delta_{\ell_k}(\vv(\theta),b) = \Rad\delta_{\ell}(\vv(\theta-\theta_k),b-b_k\cos(\theta-\theta_k)),
\end{equation}
where we set $\vv(\theta) = [\cos(\theta),\sin(\theta)]$ for all $\theta \in [0,\pi)$. 
More concretely, we can express every slice $\Rad\delta_{\ell_k}(\vv,\cdot)$ as either a weighted indicator function when $\vv \neq \pm \vv_k$, which is non-zero when $b$ is such that the line $L_{\vv,b} := \{\xx : \vv^\top \xx = b\}$ intersects the line segment $\ell_k$, or as a weighted Dirac when $\vv = \pm \vv_k$, \ie 
\begin{equation}
    \Rad\delta_{\ell_k}(\vv(\theta),b) = 
    \begin{cases}
        |\sin(\theta-\theta_k)|^{-1} & \text{if}~L_{\vv(\theta),b} \cap \ell_k \text{ is a singleton},\\
        |\ell_k|\delta(b-b_k) & \text{if~} \vv \text{~parallel to~} \vv_k, \\
        0 & \text{else}.
    \end{cases}
\end{equation}
For a fixed $\vv(\theta) \in\mathbb{S}^{d-1}$ the set of $b \in \R$ for which $L_{\vv(\theta),b} \cap \ell_0 \neq \emptyset$ is always a closed interval $[\alpha_{\theta},\beta_{\theta}]$. Therefore, for $\theta \neq \theta_k$ we can write $\Rad\delta_{\ell_k}(\theta,\cdot) = |\sin(\theta-\theta_k)|^{-1} \bone_{[\alpha_k(\theta),\beta_k(\theta)]}$ for some $\alpha_k(\theta)$ and $\beta_k(\theta)$ that vary continuously with $\theta$.

Finally, by linearity, we obtain $\Rad \Delta p = \sum_k c_k \Rad\delta_{\ell_k}$. See Figure \ref{fig:alpha} for an approximate plot of $\Rad\Delta p$.

\begin{figure}[ht]
    \centering
    \begin{subfigure}{0.49\linewidth}
    	\includegraphics[width=\linewidth]{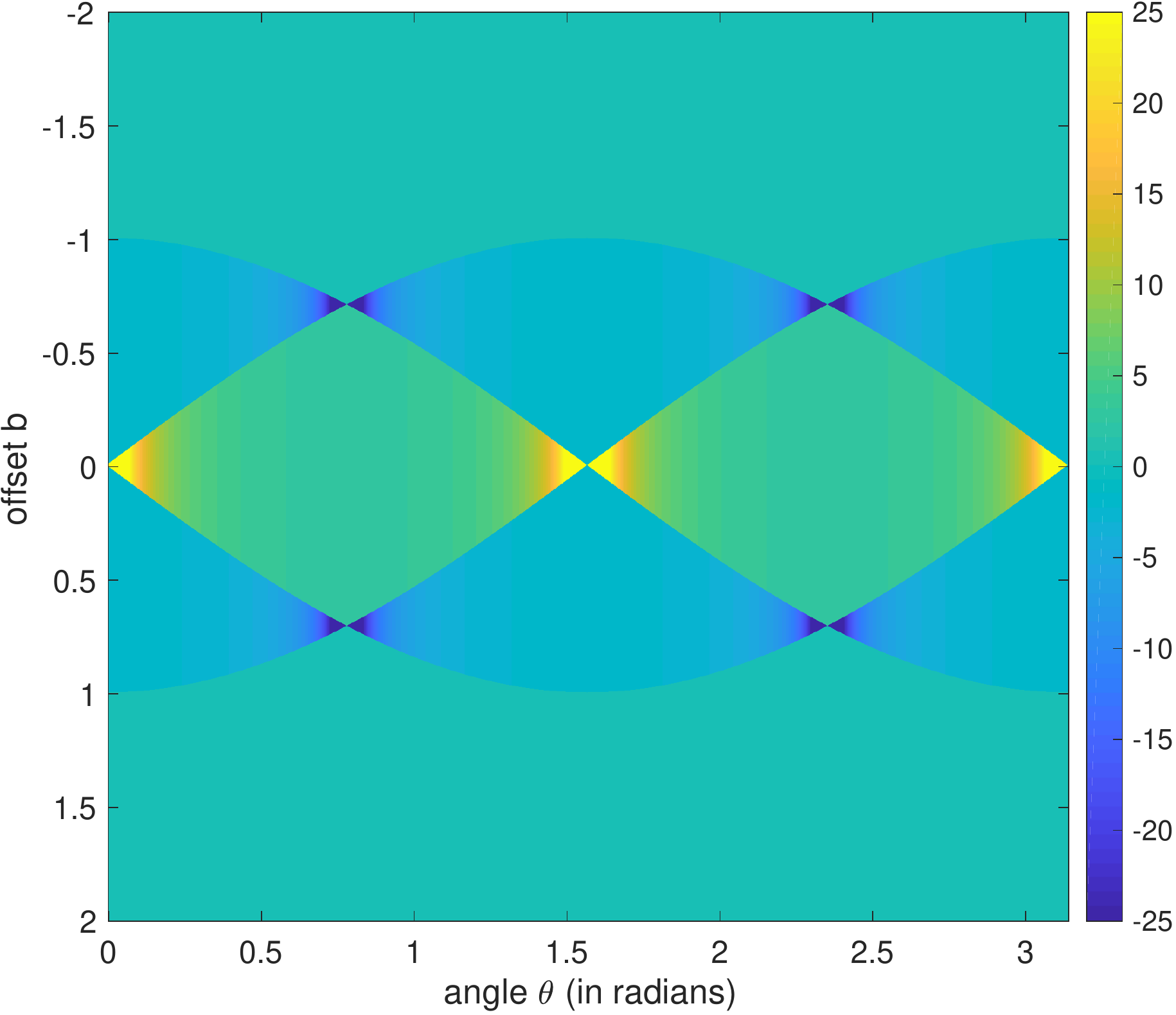}
    	\caption{Approximation of $\Rad\Delta p$}
    	\label{fig:Pyramid_Rad_Delat_p}
    \end{subfigure}
    \begin{subfigure}{0.49\linewidth}
    	\includegraphics[width=\linewidth]{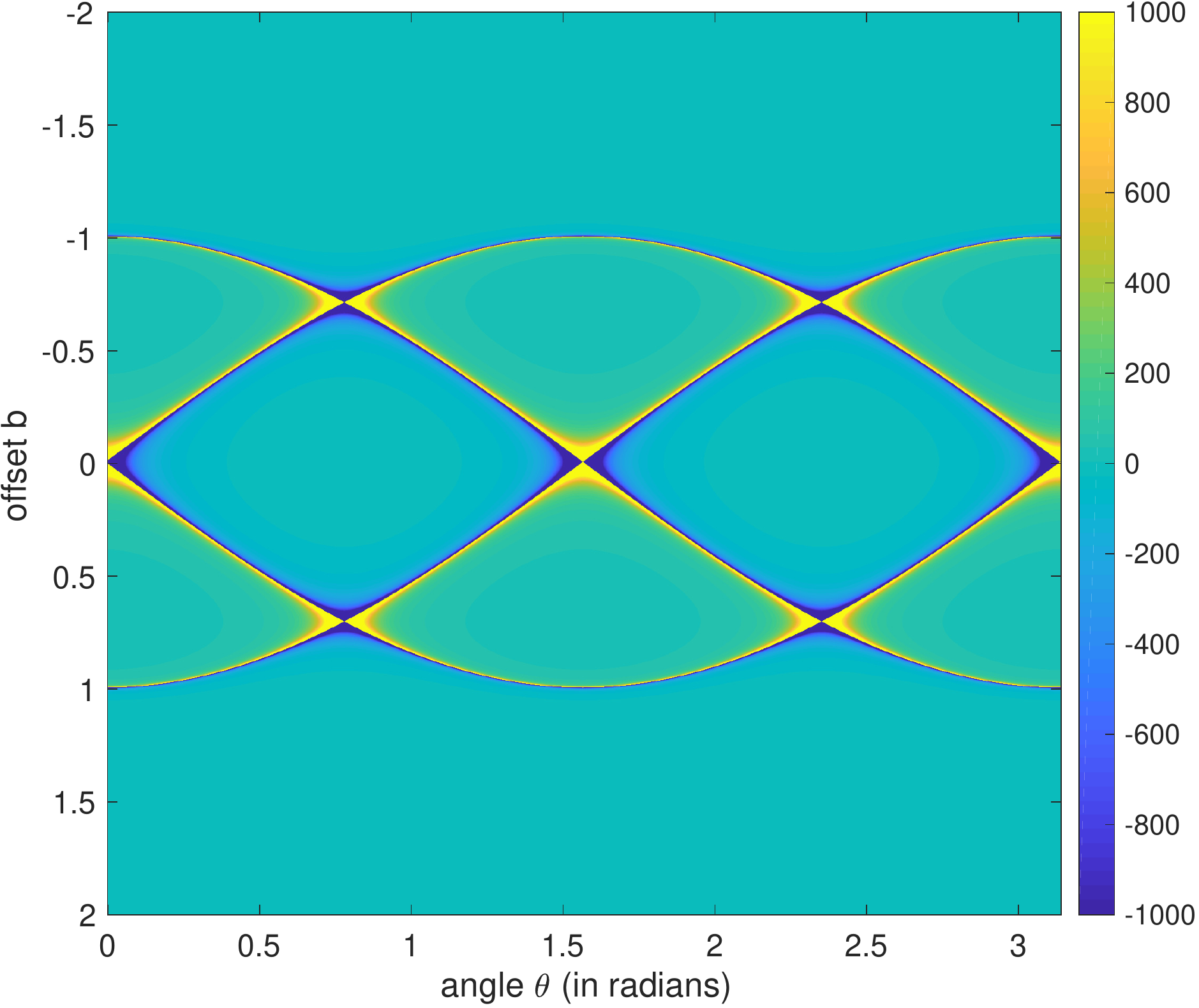}
    	\caption{Approximation of $(\Rad^*)^{-1}\Delta p$}
    	\label{fig:Pyramid_Rad_adj_inv_Delat_p}
    \end{subfigure}
	\caption{\textbf{Visualizations of $\Rad\Delta p$ and $(\Rad^*)^{-1}\Delta p$.}}
    \label{fig:alpha}
\end{figure}

Now we compute $(\Rad^*)^{-1}\Delta p$. Recall that $(\Rad^*)^{-1} = \mathcal{K} \Rad$ where $\mathcal{K} =\mathcal{H} \partial_b$ is a filtering step with $\mathcal{H}$ being the Hilbert transform applied separably in the $b$-variable \citep{helgason1999radon}. For a smooth function $g$,
\begin{equation}
   \mathcal{H} g(b) = \frac{1}{\pi}\text{p.v.} \int_{-\infty}^\infty \frac{g(b')}{b-b'} \dif b', 
\end{equation}
where p.v.~indicates a principle value integral. Therefore, for any $\theta \neq \theta_k$, we have
\begin{align}
\mathcal{K}\Rad\delta_{\ell_k}(\vv(\theta),b) & = \frac{1}{|\sin(\theta-\theta_k)|} (\mathcal{H}\delta_{\alpha_{k}(\theta)}-\mathcal{H}\delta_{\beta_{k}(\theta)})\nonumber\\ & = \frac{1}{\pi|\sin(\theta-\theta_k)|} \left(\text{p.v.} \frac{1}{b-\alpha_k(\theta)}-\text{p.v.}\frac{1}{b-\beta_k(\theta)}\right),
\end{align}
and for $\theta = \theta_k$ we have
\begin{equation}
\mathcal{K} \Rad\delta_{\ell_k}(\vv(\theta_k),b) = |\ell_k|\mathcal{H} \delta'(b-b_k) =- \frac{|\ell_k|}{\pi} \text{p.v.}\frac{1}{(b-b_k)^2}.
\end{equation}

Finally, by linearity of the operator $\mathcal{K}\Rad$, we have
\begin{equation}
(\Rad^*)^{-1}\Delta p = \mathcal{K}\Rad\Delta p = \sum_{k} c_k \mathcal{K}\Rad\delta_{\ell_k}.
\end{equation}
Thus,
\begin{align}
[(\Rad^*)^{-1}\Delta p](\vv(\theta),b) & = \sum_k c_k |\sin(\theta-\theta_k)|^{-1} \left(\text{p.v.} \frac{1}{b-\alpha_k(\theta)}-\text{p.v.}\frac{1}{b-\beta_k(\theta)}\right)\nonumber\\
& + \sum_{k}c_k|\ell_k|\delta(\theta-\theta_k)\cdot \text{p.v.}\frac{-1}{(b-b_k)^2}.\label{eq:pyramid_alpha_sum}
\end{align}

See Figure \ref{fig:alpha} for an approximate plot of $(\Rad^*)^{-1} \Delta p = \mathcal{K}\Rad\Delta p$. As evidenced by the plot, this density has singularities along a 1-D manifold $S$ in Radon domain. This set corresponds to all lines in the primal domain passing through the corners of the pyramid.

Finally, we show that $\alpha := (\Rad^*)^{-1}\Delta p$ is not a finite measure (\ie it is not an order zero distribution). Intuitively, this is because the ``density'' $\alpha(\vv,b)$ is not absolutely integrable, since every 1-D angular slice has singularities like $1/|b|$. Below we prove this more formally.

To prove $\alpha$ cannot be an order zero distribution, we construct a family of uniformly bounded test functions $\{\varphi_\epsilon\}_{\epsilon > 0}$ such that $|\langle \alpha, \varphi_\epsilon \rangle| \geq \rho(\epsilon) \|\varphi_\epsilon\|_\infty$, where $\rho(\epsilon)$ is a function such that $\rho(\epsilon)\rightarrow+\infty$ as $\epsilon\rightarrow 0^+$.

Let $\gamma > 0$ be a small fixed constant less than one. For every $0 < \epsilon < \gamma$, consider the ``rainbow-shaped'' subset of Radon domain $\Omega_\epsilon$ defined by the inequalities  $-\gamma/2 < \theta < \gamma/2$ and $\cos(\theta)-\epsilon < b < \cos(\theta)$ where . In primal domain, the set corresponds to a collection of lines that nearly intersect the corner point $(1,0)$.

Only three terms in the sum making up $(\Rad^*)^{-1}\Delta p$ in \eqref{eq:pyramid_alpha_sum} are dominant in the region $\Omega_\epsilon$, corresponding to the three line segments in the support of $\Delta p$ that arise from the right-most corner of the pyramid. Elementary calculations show these three terms are specified by the parameters:
\begin{gather}
c_1 = -2,~~\theta_1 = \pi/2,~~\beta_1(\theta) = \cos(\theta),\nonumber\\
c_2 = \sqrt{2},~~\theta_2 = \pi/4,~~ \beta_2(\theta) = \cos(\theta),\nonumber\\
c_3 = \sqrt{2},~~ \theta_3 = -\pi/4,~~ \beta_3(\theta) = \cos(\theta) .
\end{gather}
Therefore, $\alpha$ is well-approximated on $\Omega_\epsilon$ by
\begin{equation}
    \tilde{\alpha} = \left(\frac{\sqrt{2}}{|\sin(\theta-\pi/4)|}+\frac{\sqrt{2}}{|\sin(\theta+\pi/4)|} -\frac{2}{|\sin(\theta-\pi/2)|}\right) \text{p.v.}\frac{1}{\cos(\theta)-b},
\end{equation}
where we omit the terms  $\text{p.v.}\frac{-1}{b-\alpha_k(\theta)}$ and $\text{p.v.}\frac{-1}{(b-b_k)^2}$, since points in $\Omega_\epsilon$ are far from their singularity set. In particular, we can show $\alpha-\tilde{\alpha}$ is an order zero distribution when restricted to $\Omega_\epsilon$ (\ie all other terms are locally smooth and bounded).
Let $g(\theta)$ be the function of $\theta$ in front of the principle value in~$\tilde{\alpha}$. Note that $g(\theta) > B > 0$ for all $\theta \in \Omega_\epsilon$ where the constant $B$ is independent of $\epsilon$.

Let $\varphi_{\epsilon}(\theta,b)$ be a smooth function supported in $\Omega_\epsilon$ such that $0 \leq \varphi_{\epsilon}(\theta,b)\leq 1$ and $\varphi_{\epsilon}(\theta,b)=1$  on the region defined by the inequalities $-\gamma/2 < \theta < \gamma/2$ and $\cos(\theta)-\epsilon \leq b \leq \cos(\theta)-\epsilon^2$. Then for any fixed $\theta \in (-\gamma/2,\gamma/2)$, the integral $\text{p.v.}\int \frac{\varphi_{\epsilon}(\theta,\cdot)}{\cos(\theta) - b} \dif b$ is bounded below by $\int_{\epsilon^2}^{\epsilon} \frac{1}{b} db = \log(\epsilon^{-1})$. Therefore, we have
\begin{align}
|\langle \alpha, \varphi_\epsilon \rangle| 
& \geq |\langle \tilde{\alpha}, \varphi_\epsilon \rangle| - |\langle \tilde{\alpha}-\alpha, \varphi_\epsilon\rangle| \nonumber\\
& \geq  \left|\int_{-\gamma/2}^{\gamma/2}
\langle \tilde{\alpha}_\theta,\varphi_{\epsilon}(\theta,\cdot)\rangle d\theta\right|-C\|\varphi_\epsilon\|_\infty\nonumber\\
& \geq (\gamma B \log(\epsilon^{-1}) - C)\|\varphi_\epsilon\|_{\infty}.
\end{align}
Since $\|\varphi_\epsilon\|_{\infty} = 1$ and $\gamma B \log(\epsilon^{-1}) - C \rightarrow +\infty$ as $\epsilon \rightarrow 0^+$, this shows $\alpha$ cannot be a distribution of order zero, \ie it cannot be identified with a finite measure.

\section{Additional experiments}\label{App:InitScale}
The experiments in Sec.~\ref{Sec:Experiments} are designed to demonstrate Theorem~\ref{Thm: Properties of twice differentiable stable solutions} in a diverse range of step sizes ($ [10^{-4},0.1] $). Since flat minima of the loss landscape are concentrated near the origin in parameter space, and training with small step size near flat minima is inefficient, we used large initialization (about $ 10 $  times larger than standard methods). Here we repeat the MNIST experiment using various initialization scales on a higher range of step sizes ($ [10^{-3},0.2] $). Figure~\ref{Fig:MNIST_scale} presents the sharpness curves for the different scales. For large initialization, $ \times 10 $ and $ \times 15$, we get the same behavior as depicted in Sec.~\ref{Sec:Experiments}. For small initialization, $ \times 1 $ and $ \times 5$, the sharpness of the obtained solutions is fixed for small learning rates up to a critical step size $\eta^*$. At this threshold, the sharpness equals $ 2/\eta^* $, and any increment in the step size makes the minimum unstable. This pushes SGD to flatter minima for larger step sizes, ones that satisfy the stability criterion. Here it is important to note that for standard initialization, shown in Fig.~\ref{fig:MNIST_scale_1}, the threshold is well before the standard step size of~$ \eta = 0.1$. Namely, this phenomenon happens using standard initialization and standard learning rate.

\begin{figure}[t]
	\centering
	\begin{subfigure}{0.49\linewidth}
		\includegraphics[width=\linewidth]{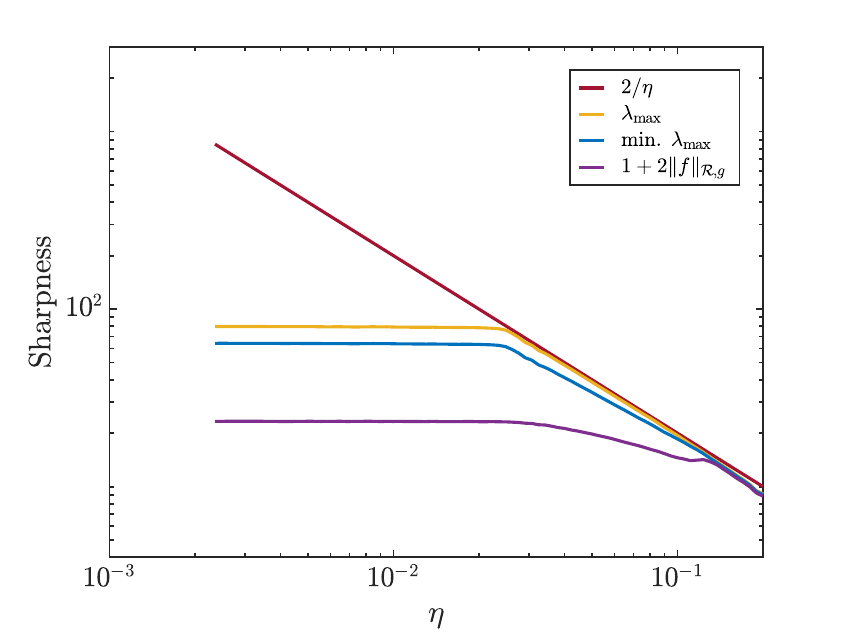}
		\caption{Initialization factor $1$ (standard)}
		\label{fig:MNIST_scale_1}
	\end{subfigure}
	\begin{subfigure}{0.49\linewidth}
		\includegraphics[width=\linewidth]{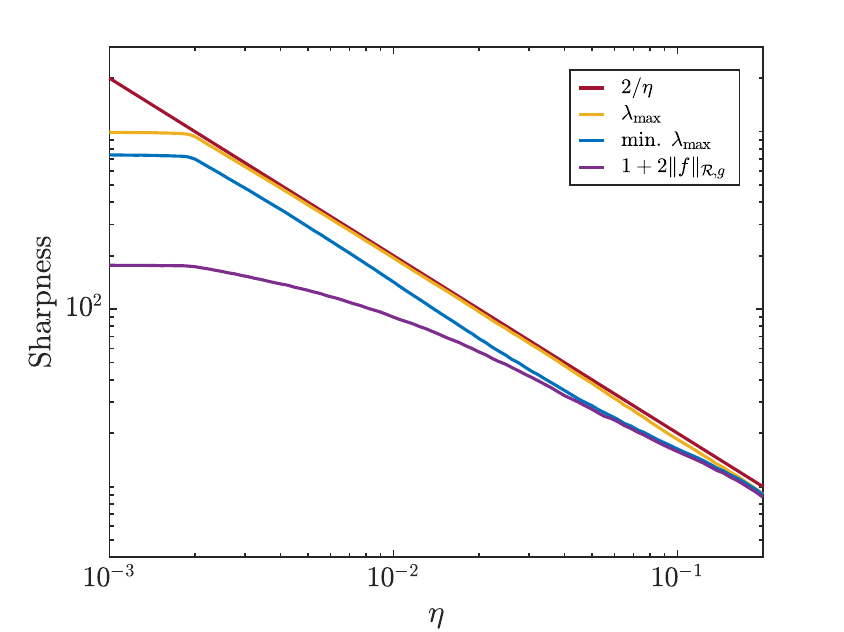}
		\caption{Initialization factor $5$}
		\label{fig:MNIST_scale_2}
	\end{subfigure}\\
	\begin{subfigure}{0.49\linewidth}
		\includegraphics[width=\linewidth]{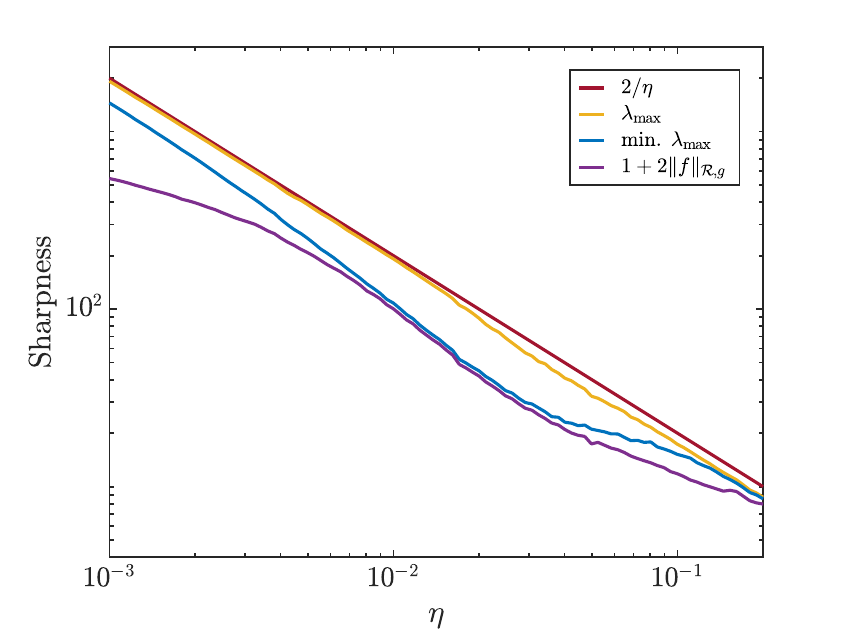}
		\caption{Initialization factor $10$}
		\label{fig:MNIST_scale_3}
	\end{subfigure}
	\begin{subfigure}{0.49\linewidth}
		\includegraphics[width=\linewidth]{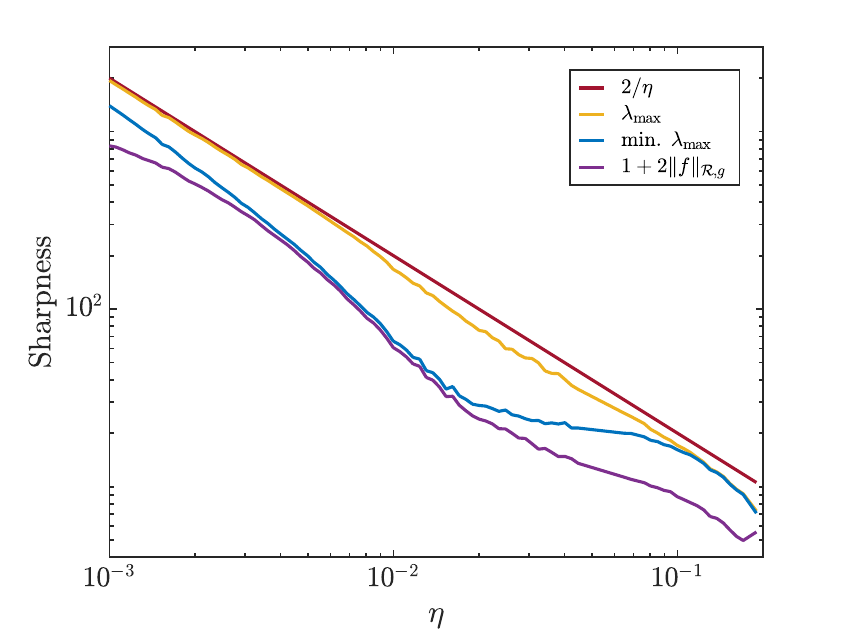}
		\caption{Initialization factor $15$}
		\label{fig:MNIST_scale_4}
	\end{subfigure}
	\caption{\textbf{Sharpness vs. step size for different initialization scales.} We trained a single hidden-layer ReLU network for binary classification on two classes from MNIST using SGD (see Sec.~\protect\ref{Sec:Experiments} for details). Specifically, we initialized the network using different scales, and for each scale we trained the network using multiple step sizes. We see that as $\eta$ increases, the minima get flatter in parameter space (yellow curve), which translates to smoother predictors in function space (purple curve).}
	\label{Fig:MNIST_scale}
\end{figure}

\end{document}